\algrenewcommand\algorithmicindent{0.6em}%
\pgfplotsset{compat=newest}
\newcommand{\bind}{%
  \mathrel{%
\begin{tikzpicture}[x=0.75pt,y=0.75pt,yscale=-1,xscale=1]

\draw   (0,5) .. controls (0,2.24) and (2.24,0) .. (5,0) .. controls (7.76,0) and (10,2.24) .. (10,5) .. controls (10,7.76) and (7.76,10) .. (5,10) .. controls (2.24,10) and (0,7.76) .. (0,5) -- cycle ;
\draw    (5,0) -- (1.57,8.57) ;
\draw    (8.47,1.43) -- (5,10) ;
\draw    (0.33,3.33) -- (9.67,3.33) ;
\draw    (0.33,6.67) -- (9.67,6.67) ;
\end{tikzpicture}
  }%
}
\theoremstyle{plain}
\newtheorem{theorem}{Theorem}[section]
\theoremstyle{definition}
\theoremstyle{remark}
\icmltitlerunning{Connectionist Symbolic Pseudo Secrets}
\newcommand{\longName}{\textsc{Connectionist Symbolic Pseudo Secrets}\xspace}
\newcommand{\shortNameTxt}{\textsc{CSPS}\xspace}
\begin{document}
\twocolumn[
\icmltitle{Deploying Convolutional Networks on Untrusted Platforms\\Using 2D Holographic Reduced Representations}

\begin{icmlauthorlist}
\icmlauthor{Mohammad Mahmudul Alam}{umbc}
\icmlauthor{Edward Raff}{umbc,lps,bah}
\icmlauthor{Tim Oates}{umbc}
\icmlauthor{James Holt}{lps}
\end{icmlauthorlist}

\icmlaffiliation{umbc}{Department of Computer Science and Electrical Engineering, University of Maryland, Baltimore County, Baltimore, MD, USA}
\icmlaffiliation{lps}{Laboratory for Physical Sciences, College Park, MD, USA}
\icmlaffiliation{bah}{Booz Allen Hamilton, McLean, VA, USA}

\icmlcorrespondingauthor{Edward Raff}{Raff\_Edward@bah.com}
\icmlcorrespondingauthor{Tim Oates}{oates@cs.umbc.edu}

\icmlkeywords{Holographic Reduced Representations}

\vskip 0.3in
]

\printAffiliationsAndNotice{} %

\begin{abstract}
Due to the computational cost of running inference for a neural network, the need to deploy the inferential steps on a third party's compute environment or hardware is common. If the third party is not fully trusted, it is desirable to obfuscate the nature of the inputs and outputs, so that the third party can not easily determine what specific task is being performed. Provably secure protocols for leveraging an untrusted party exist but are too computational demanding to run in practice. We instead explore a different strategy of fast, heuristic security that we call \textit{Connectionist Symbolic Pseudo Secrets}. By leveraging Holographic Reduced Representations (HRR), we create a neural network with a pseudo-encryption style defense that empirically shows robustness to attack, even under threat models that unrealistically favor the adversary.
\end{abstract}

\section{Introduction}
As convolutional neural networks (CNN) have become more popular, so to have the concerns around their deployment. Many tricks like low-precision floats, pruning of weights, and classic software engineering and performance tuning have been employed to reduce these computation costs. Still, it is often necessary to deploy a model on third-party compute hardware or cloud environments for a variety of reasons (e.g., lower latency to customers, lack of computing resources, and elasticity of computing demand). In these situations, there are cases where the owner of the model does not fully trust the third party and desires to obfuscate information about the model running on this untrusted platform.
\par 
The current solutions to this situation naturally come from the encryption community, and the tools of Secure Multi-party Computation (SMC) \citep{Kerschbaum2006, Du:2001:SMC:508171.508174} and Homomorphic Encryption (HE)~\citep{pmlr-v48-gilad-bachrach16} provide methods for running programs on untrusted hardware that guarantee the privacy of the results. These are valuable tools, but computationally demanding and limiting. They often require restrictions on even basic CNN functionality like avoiding softmax activation and sigmoid/tanh non-linearity, limits on the size of the computation itself, and can dwarf the compute time saved by offloading to the third party. Especially when providers charge by compute-hours, this makes SMC and HE tools impractical when computing and latency constraints are a factor, or when neural networks are very large.
\par 
Current approaches to untrusted inference are all slower than simply running the computation locally, making them impractical. For this reason, our work scarifies provable security for empirical security, by developing an approach to insert ``secrets'' into a network's input that can be later extracted, yet obfuscate the input/output to the untrusted party in an encryption-like manner. We emphatically stress this is not strong encryption, but empirically we observe a realistic adversary's attacks are at random-guessing performance, and an unrealistically powerful adversary fairs little better. We term our approach \longName (\shortNameTxt)\footnote{ Our code can be found at \url{https://github.com/NeuromorphicComputationResearchProgram/Connectionist-Symbolic-Pseudo-Secrets}}, and compared to the fastest alternative \citep{244032}.  \shortNameTxt is $5000\times$ faster and transfers $18,000\times$ less data, making it practically deployable.
\par 

To summarize, we leverage
inspirations from encryption and neuro-symbolic methods
to symbolically represent a one-time pad strategy from the
encryption literature within a neural network. 
The rest of our paper is organized as follows. First, we will review work related to our own in \autoref{sec:related work}. Our approach uses a Vector Symbolic Architecture (VSA) known as the Holographic Reduced Representations (HRR) from more classical symbolic AI work that may not be familiar to all readers, so we will review them briefly in \autoref{sec:background}. This will allow us to discuss our method \shortNameTxt and how we develop a mechanism for inserting a secret ``one-time pad'' into a network input and extracting it from the output in \autoref{sec:method}. This produces a 5000$\times$ speedup and $18,000\times$ reduction in data transfer compared to the fastest alternatives, providing the first speedup for untrusted computation, as shown in  \autoref{sec:exp}. In addition, we show an overly powerful adversary is empirically only slightly better than random guessing, providing practical security for many applications, and extensive ablation studies over six alternative design choices that validate our approach. Finally, we conclude in \autoref{sec:conclusion} with a discussion of the limitations of our approach. Most notably that \textit{we are not implementing true strong encryption}, and so must temper expectations where privacy is a critical requirement.

\section{Related Work} \label{sec:related work}
The desire to hide the details of a program's inputs and outputs from third-party  performing the computation has been studied for many decades by the security and cryptography communities. These methods have been naturally adapted to deep learning tasks, providing provable privacy guarantees. Unfortunately, the high costs of these methods prevent them from being useful when there is any compute or runtime constraint, often requiring multiple order-of-magnitude slowdowns. We review the primary approaches.
\par 
The first approach that has been used is (Fully) Homomorphic Encryption (FHE), which allows recasting any program into a new version that takes encrypted inputs, and produces encrypted outputs, providing strong privacy. However, this conversion process can result in extreme computational cost, often requiring arbitrary-precision integer arithmetic\footnote{Also called ``bignum'' or ``big-integer''.}. To make FHE ``practical'', restrictions on the size, depth, and activation functions have been necessary to minimize these compute overheads. For example, \citep{pmlr-v48-gilad-bachrach16} required squared activations ($\sigma(x) = x^2$) to perform MNIST in an hour per datum. Current FHE methods, through a mix of network and FHE optimizations, can scale to CIFAR 10 \citep{pmlr-v97-brutzkus19a}, but result in networks slower and less accurate than our \shortNameTxt. We are not aware of any works that have scaled past CIFAR-10 for FHE-based inference \citep{Chou2018, QaisarAhmadAlBadawi2020, VanElsloo2019, Nandakumar2019, esperanca17a}.
\par 
The second broad class of approaches is protocol-based, requiring multiple rounds of communication where data is sent back-and-forth between the host that is requesting computation, and the third party server performing the bulk of computation. Methods like Secure Multi-party Computation (SMC) \citep{Kerschbaum2006,Du:2001:SMC:508171.508174} and other ``protocols'' are developed on top of ``Oblivious Transfer'' (OT), a primitive by which a \textit{sender} and \textit{receiver} exchange messages \citep{Rabin1981}.
Many OT protocols\footnote{We note that there are many different classes of protocols involved in these works, and our related work is  oversimplifying them to be \textit{just} ``OT'', but a full description of the different nuances would not aid the reader in understanding our approach, and all prior work share the same fundamental limitations.} have been customized for deep learning applications \citep{10.1145/3196494.3196522, 10.1145/3195970.3196023, Chandran2019, 10.5555/3361338.3361442, 10.1145/3133956.3134056, Mohassel2017}, but suffer similar limitations to FHE. They require minutes of computation per data point prediction, require multiple rounds of computation (a problem for deployment with any limited bandwidth or high latency network), and must send large ``messages'' on the order of hundreds of megabytes per prediction. We note that our approach requires only one round of communication, the messages are the same size as the original data points and can perform predictions in milliseconds.
\par 
Hybrid approaches combining OT and FHE have been developed \citep{217515,244032} and are faster than only OT or FHE, but they have not yet overcome the compute, multiple rounds of communication, and scaling limitations that prevent practical use.

The most similar approach to our own work is InstaHide \cite{pmlr-v119-huang20i}, which randomly combines training instances with a second population of images. These mixed images (and labels) are sent to a third party for \textit{training}, in an attempt to hide the true training task from the third party. \citet{Carlini2021a} showed how to break InstaHide and proved learning bounds indicating the impossibility of the approach. The key failure of InstaHide being a dual problem that: 1) the random additions are highly structured natural images, creating an attack avenue and 2) the goal is third party training, which requires InstaHide to provide the mixed image, leaving only 4 parameters a ``secret'' per image. Our focus on  HRRs allows unstructured secrets making attack harder, and the focus on inference of a trained model allows us to hide a large secret from the third party. We perform extensive customized attacks against \shortNameTxt to show we do not suffer the same failing, and provide learning bounds in the linear case that show we do not suffer the same conditions identified by \cite{Carlini2021a}.

We note that to the best of our knowledge, our work is the only approach seeking an approximate solution to the problem. This means our method should not be used when privacy is of extreme importance to be ``mission critical''. Still, we do obtain empirically good privacy in our results, and we show that our method is the only approach practically deployable when runtime or latency is a requirement. In particular, for all prior work cited \textit{the time it takes to run the FHE or OT protocols are orders of magnitude greater than the time to compute the result locally}. Our work is the first that we are aware of to present a method that enters the positive direction on the runtime trade-off.

\section{Technical Background} \label{sec:background}
Holographic Reduced Representations (HRR) is a method of representing compositional structure using circular convolution in distributed representations~\citep{b1}. Vectors can be composed together using circular convolution which is referred to as a \emph{binding} operation. Using the original notation for binding $\bind$ of Plate’s (1995) paper, the binding operation is expressed in eq.~\ref{binding}, where $\mathbf{x}_i, \mathbf{y}_i \in \mathbb{R}^d$ are arbitrary vector values, $\mathcal{F}(\cdot)$ and $\mathcal{F}^{-1}(\cdot)$ are the Fast Fourier Transform and its inverse, respectively.

\begin{equation}
\mathcal{B} = \mathbf{x}_i \bind \mathbf{y}_i =  \mathcal{F}^{-1}(\mathcal{F}(\mathbf{x}_i) \odot \mathcal{F}(\mathbf{y}_i))
\label{binding}
\end{equation}

$\mathcal{B} \in \mathbb{R}^d$ is the bound term comprised of $\mathbf{x}_i$ and $\mathbf{y}_i$. Two things makes HRR intriguing and valuable: the use of circular convolution, which is commutative, and its ability to retrieve bound components. The retrieval of bound components is referred to as \emph{unbinding}. A vector can be retrieved by defining an inverse function $\dagger: \mathbb{R}^d \to \mathbb{R}^d$ and identity function $\mathcal {F}(\mathbf{y}_{i}^{\dagger}) \cdot \mathcal{F}(\mathbf{y}_i) = \vec{1}$ which gives 
$\mathbf{y}^{\dagger}_i = \mathcal{F}^{-1} \left( \frac{1}{\mathcal{F}(\mathbf{y}_i)} \right)$.
Using the inverse of vector $\mathbf{y}_i$, another component of the bound term can be approximately retrieved by  $\mathbf{x}_{i} \approx \mathcal{B} \bind \mathbf{y}_{i}^{\dagger}$
\par 
These properties are interesting because they  hold in expectation even if $\mathcal{B}$ is defined with multiple terms, i.e., $\mathcal{B} = \sum_{i=1}^k \mathbf{x}_i\bind \mathbf{y}_i $, or when composed with hierarchical structure. This allows composing complex symbolic relationships by assigning meaning to arbitrary vectors,  staying in a fixed $d$-dimensional space. As the number of terms bound or added together increases, the noise of the reconstruction $\mathbf{x}'_i$ will also increase. To make these properties work we will use  initialization conditions proposed by \citep{hrrxml}, where $\mathbf{x}_i, \mathbf{y}_i \sim \pi(\mathcal{N}(0, 1/d))$, where $\pi(\cdot)$ is a projection onto the ball of complex unit magnitude $\pi(\mathbf{y}_i) = \mathcal{F}^{-1} \left(\: \frac{\mathcal{F}(\mathbf{y}_i)}{|\mathcal{F}(\mathbf{y}_i)|} \:\right)$, and $\mathcal{N}(\mu, \sigma^2)$ is the Normal distribution.

\begin{figure*}[!h]
\centerline{\includegraphics[scale=.855]{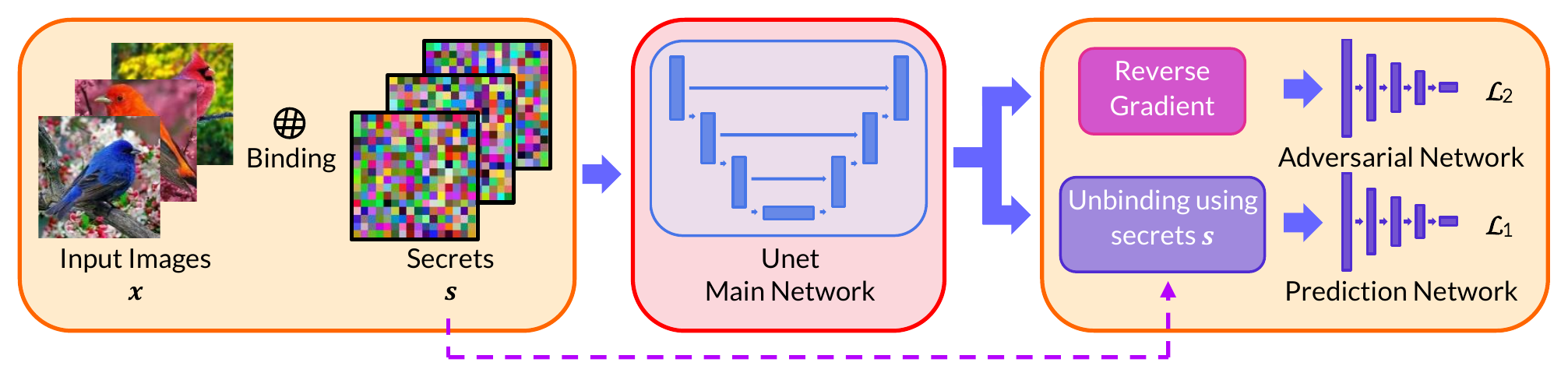}}
\caption{Block diagram of encryption process of the CNN using improved 2D HRR with three stages. Both of the orange regions are on the user-end. The secrets to unbind the images and outputs of the main network are only shared in these regions (dashed line). The red region indicates the untrusted third party who will run the main network after it has been trained.} 
\label{fig:train_approach}
\label{block}
\end{figure*}

\section{\longName} \label{sec:method}
Our approach to make \longName requires two steps. First, we introduce a simple modification of the HRR from 1-D to 2-D, exploiting a property of its construction so that we can embed the secret into the inputs in such a manner that they are likely to be preserved by the network. Then we design a training approach to use the symbolic behavior of HRR to bind the input, and then unbind the output, such that the majority of work can be done by a remote 3rd party.

\subsection{2D HRR As Pseudo One-Time Pad}
Our first insight comes from the fact that in \autoref{binding}, the result $\mathcal{B} = \boldsymbol{x} \bind \boldsymbol{s}$ is a simple linear operation that at infinite precision is invertible, giving $\boldsymbol{s} = \boldsymbol{x}^\dagger \mathcal{B}$. Thus if we have $\boldsymbol{x}$ represent the image (network input) we wish to obscure, and we have a random secret $\boldsymbol{s}$ to apply, then the resulting $\mathcal{B}$ object will appear random in nature. And for any bound output $\mathcal{B}$, there are infinite possible image/secret pairs that will produce the exact same output\footnote{The output is not uniformly random, a key difference from a true one-time pad.}. This allows for a ``one-time pad'' kind of approach to obscuring the input to the network from the untrusted party. If we can preserve the secret $\boldsymbol{s} \in \mathcal{B}$ as $\mathcal{B}$ is processed by a neural network, we can attempt to extract it using the unbinding operation at the end. Phrased mathematically, if $f(\cdot)$ is a normal CNN, we desire a secure function $\tilde{f}(\cdot)$ such that $\tilde{f}(\boldsymbol{x} \bind \boldsymbol{s}) \bind \boldsymbol{s}^\dagger \approx f(\boldsymbol{x})$, yet $\tilde{f}(\boldsymbol{x} \bind \boldsymbol{s})$ appears random. A critical part of this is to maintain the information within $\boldsymbol{s}$. 
\par
We can achieve this by recognizing that the HRR operation is equivalent to a 1D convolution over a sequence, but the 1D convolution is not an important property. By simply switching to 2D Fourier Transforms, we instead perform 2D convolutions to bind our secret, which aligns the resulting $\boldsymbol{x} \bind \boldsymbol{s}$ with the 2D CNN that will process it. By construction, this retains all the symbolic properties of HRR, as experimentally shown in \autoref{fig:hrr_2d}, but allows the inputs to behave in a manner consistent with a CNN.
\par 
This is critical as it means the binding operation $\bind$ is equivalent to another convolutional layer of the network, where the secret $\boldsymbol{s}$ is a user-chosen weight matrix rather than a learned one. This also means subsequent layers of convolution and pooling may learn to retain the structure of $\boldsymbol{s}$ to a sufficient degree that it can be extracted later, and effectively obfuscates the nature of the input as shown in \autoref{fig:hrr_2d_image}.

\begin{figure}[!h]
\input{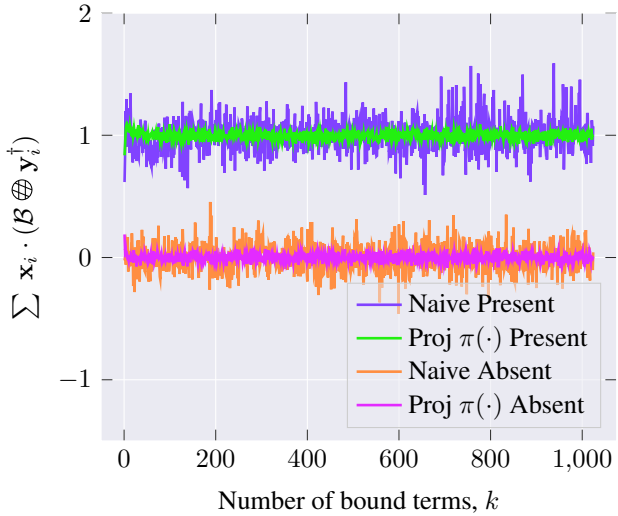}
\caption{Binding and unbinding terms using improved 2D HRR where for the terms that are present, $\mathbf{x}_i \bind \mathbf{y}_i \in \mathcal{B}$, the output along the y-axis is close to $1$ and for the absent terms, $\mathbf{x}_i \bind \mathbf{y}_i \notin \mathcal{B}$, output is close to $0$. Retrieval without using projection to the input is referred to as \textit{Naive present} and \textit{absent} shown in violet and orange color. Present and absent terms output with projection is shown in green and pink, respectively.}
\label{fig:hrr_2d}
\end{figure}

\begin{figure}[!h]
\centering 
\subfigure[Original Image]
{\includegraphics[scale=.22]{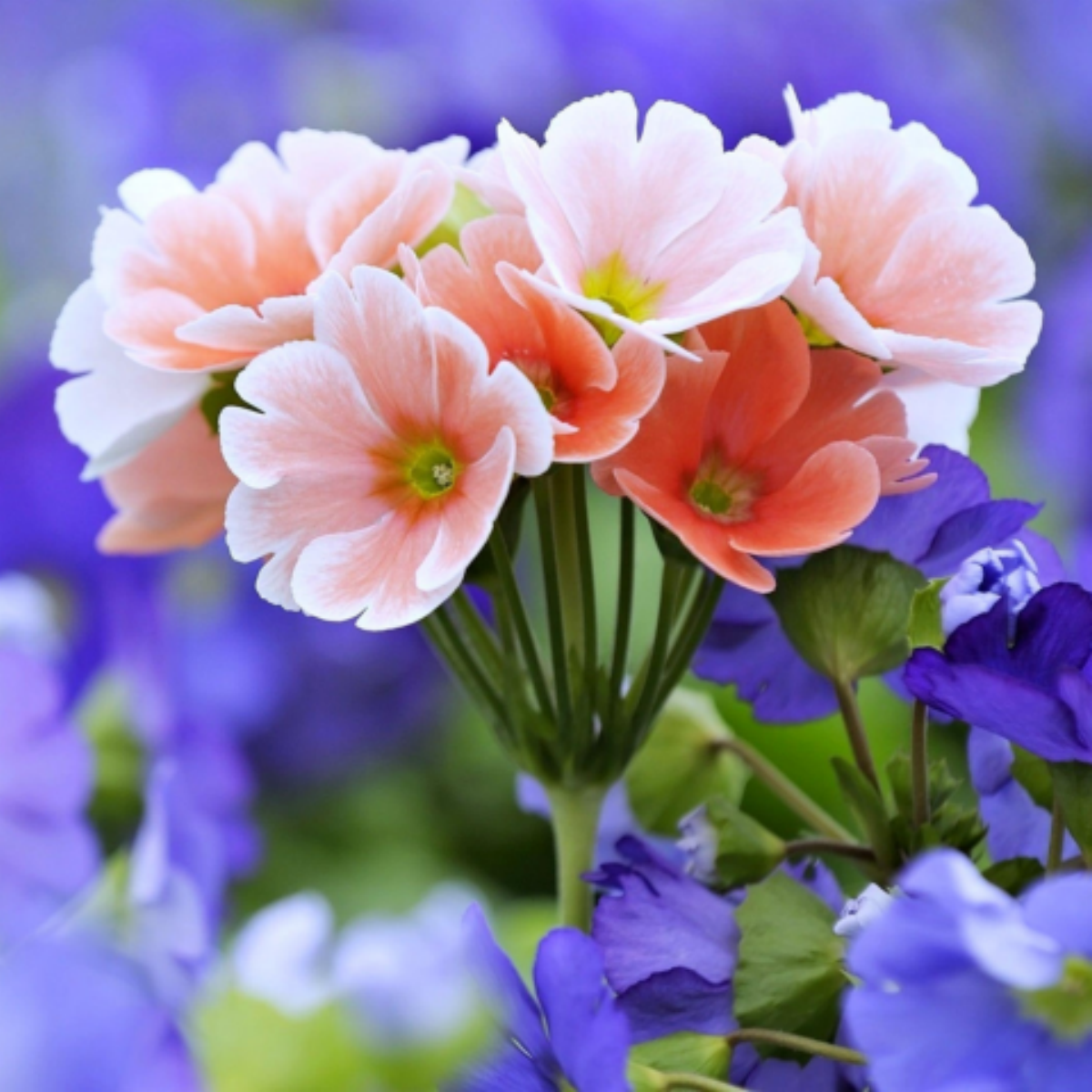}}
\subfigure[Bound Image]
{\includegraphics[scale=.22]{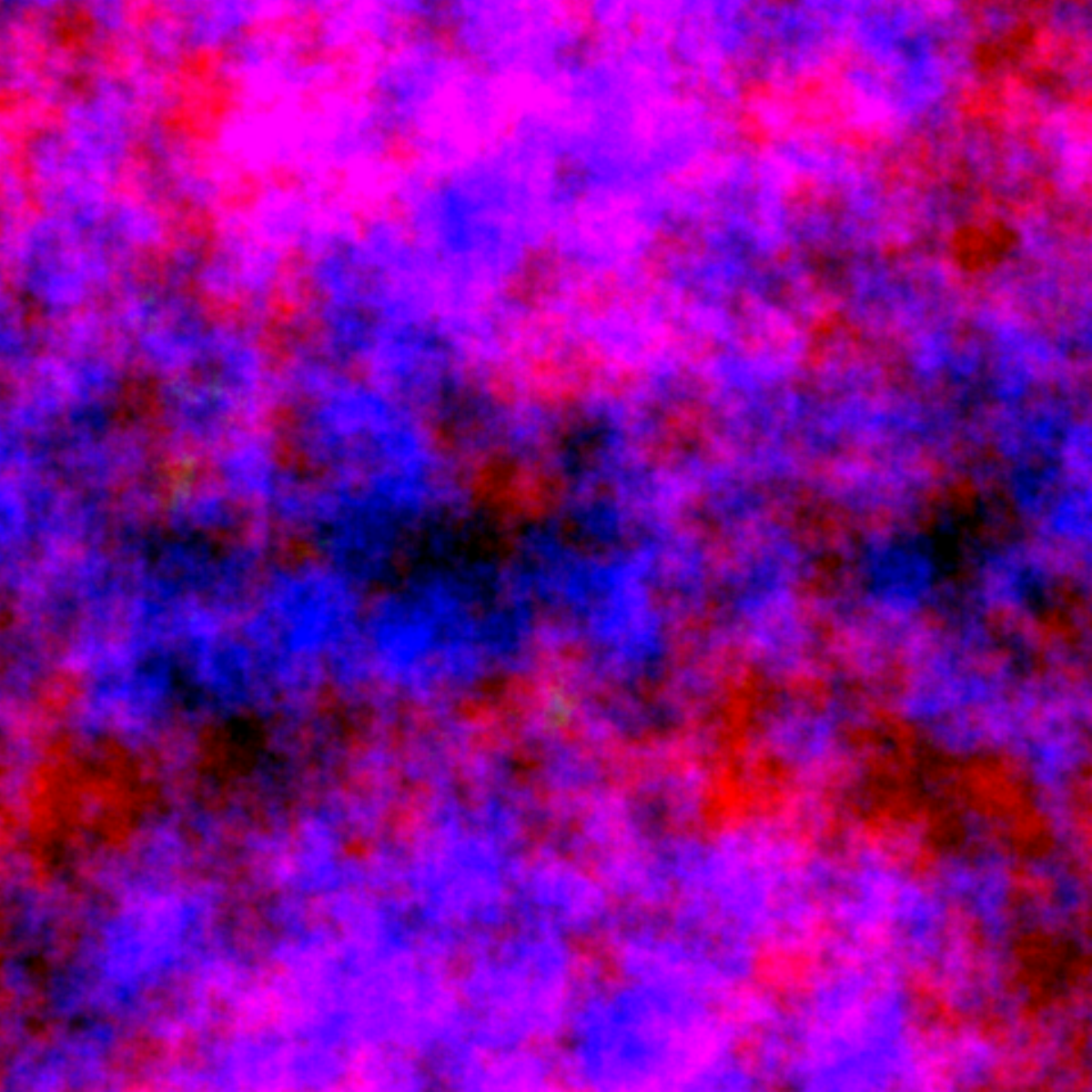}}
\subfigure[Retrieved Image]
{\includegraphics[scale=.22]{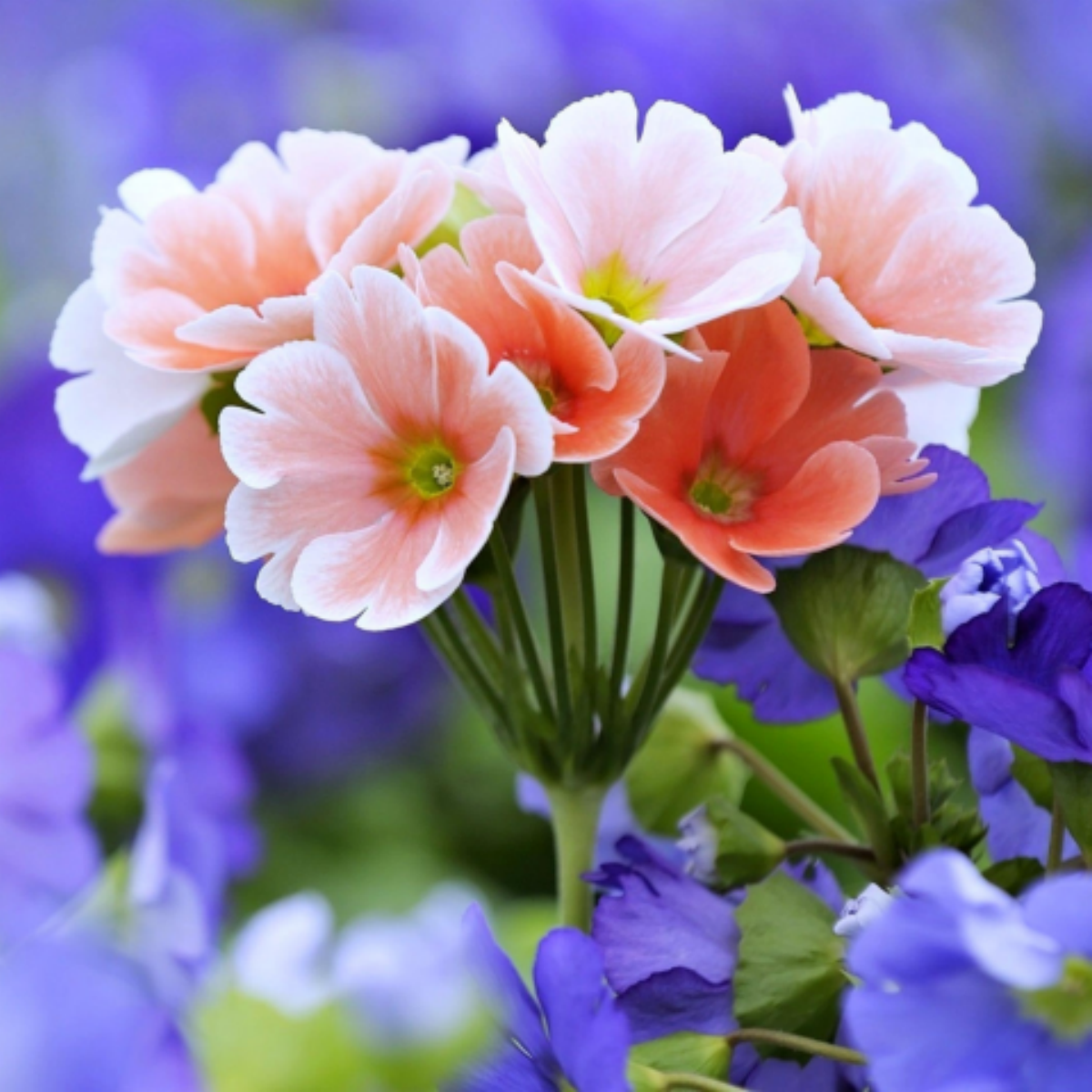}}
\caption{A sampled image $\boldsymbol{x}$ in (a) bound with a secret $\boldsymbol{s}$ in (b) using improved 2D HRR. The original image is retrieved using $\boldsymbol{s}^\dagger (\boldsymbol{x} \bind \boldsymbol{s}) \approx \boldsymbol{x}$ is shown in (c).}
\label{fig:hrr_2d_image}
\end{figure}

\subsection{Network Design}
We now specify our novel approach to a network architecture that leverages 2D HRR to hide the output, while offloading $\approx75\%$ of computation onto a remote third party. The proposed method has three networks, one larger backbone network $f_W(\cdot)$ that performs the ``work'' of feature extraction, and two identical smaller networks. As for the main network $f_W(\cdot)$, U-net CNN architecture~\citep{b2} has been employed, and at deployment would be run by the untrusted party. There are multiple benefits of using U-net architecture. It is a deep CNN with identical input and output shapes so that our secret vector $\boldsymbol{s}$ can be used on the input to and output to $f_W(\cdot)$. This requirement is because the secret $\boldsymbol{s}$ needs to be matched with the shape of the output of the network. The client computes $\boldsymbol{\hat{x}} = \boldsymbol{x} \bind \boldsymbol{s}$, sending $\boldsymbol{\hat{x}}$ to the third party while keeping the randomly chosen $\boldsymbol{s} \sim \pi\left(\mathcal{N}(0, 1/\sqrt{W\times H \times D})\right)$ a secret. The provider sends back the result $\boldsymbol{r} = f_W(\boldsymbol{\hat{x}})$.

Afterward, two identical classification networks are designed, as shown in the third stage of  Figure~\ref{block}. One is the \textbf{\mbox{Prediction~Network}} $f_P(\cdot)$ which is classifying after unbinding the main network outputs, giving the prediction $\hat{y}_P = f_P(\boldsymbol{r} \bind \boldsymbol{s}^\dagger)$. The other network is the  \textbf{\mbox{Adversarial~Network}}  $f_A(\cdot)$ which attempts to perform the prediction task without access to the secret $\boldsymbol{s}$, computing it's own prediction $\hat{y}_A = f_A(\boldsymbol{r})$. Both networks are used during training, but we reverse the gradient sign (i.e., multiply by $-1$) from $f_A(\cdot)$ back to the backbone network $f_W(\cdot)$.  \citet{Ganin:2016:DTN:2946645.2946704} introduced this approach as a form of domain adaption, but we instead use it to enforce that the secret $\boldsymbol{s}$ be necessary to extract meaning from the base network $f_W$. This gradient reversal will ensure that $f_A(\cdot)$ is attempting to minimize the same predictive loss, but the change in gradient sign means $f_W(\cdot)$ receives a learning signal sending it's optimization in the opposite direction - discouraging it from retaining any information that would be useful to $f_A(\cdot)$, but still receiving the correct gradient from $f_P(\cdot)$.

Combined, the binding $\boldsymbol{\hat{x}} = \boldsymbol{x} \bind \boldsymbol{s}$ ensures that the input to $f_W(\cdot)$ is random in appearance and is not discernible on its own, and the gradient reversal on $f_A(\cdot)$ ensures the output $\boldsymbol{r} = f_W(\boldsymbol{\hat{x}})$ is also not informative on its own. A new secret $\boldsymbol{s}$ is sampled for every prediction so that the third party cannot collect multiple samples to try and discover any ``single key''. This design heuristically provides the components of a secure protocol, but uses only standard operations built into all modern deep learning frameworks, giving it minimal overhead compared to prior approaches outlined in \autoref{sec:related work}. The overall training procedure is given in Algorithm~\ref{algo:train}.

\begin{algorithm}
    \caption{\longName Training using a dataset with images of size $W\times H \times D$ using a loss function $\ell(\cdot, \cdot)$.}
    \label{algo:train}
\begin{algorithmic}
\For{$\boldsymbol{x}_i, y_i \in $ dataset} \Comment{\small Training loop }
    \State $\boldsymbol{s}   \sim \pi\left(\mathcal{N}(0, 1/\sqrt{W\times H \times D})\right)$ \Comment{\small New secret} %
    \State $\boldsymbol{\hat{x}} \gets \boldsymbol{x}_i \bind \boldsymbol{s} $ \Comment{Obfuscated input}
    \State $\boldsymbol{r} \gets f_W(\boldsymbol{\hat{x}})$  \Comment{Run by 3rd party after training}
    \State $\hat{y}_P \gets f_P\left(\boldsymbol{r} \bind \boldsymbol{s}^\dagger\right)$ \Comment{Used locally after training}
    \State $\hat{y}_A \gets \Call{ReverseGrad}{f_A(\boldsymbol{r})}$ \Comment{Discarded after training}
    \State $\mathcal{L} \gets \ell(y, \hat{y}_P) + \ell(y, \hat{y}_A)$ \Comment{\small Incur training loss }
    \State Back-propagate on the loss $\mathcal{L}$
    \State Run optimization step
\EndFor
\end{algorithmic}
\end{algorithm}

The main network $f_W(\cdot)$ has four U-Net rounds in every experiment and doubles from 64 filters after each round, reversing for the decode. The $f_A(\cdot)$ and $f_P(\cdot)$ are always identical, with 3 rounds of aggressive convolution followed by pooling to minimize compute costs and shrink the representation, followed by two fully connected hidden layers. Mini-ImageNet receives a fourth round of pooling due to its larger resolution. All network details and code can be found in \autoref{sec:train_details}. We further perform extensive ablation studies in \autoref{sec:ablation} looking at different binding operations (HRR without projection, 1D HRR, 1D HRR with Hilbert Curves, and the  vector-derived transformation binding (VTB)) and network designs (Residual style) that show our design of U-Net with 2D HRRs is critical to obtaining high predictive accuracy. 

While our results are heuristic for the deep neural networks, we provide theoretical evidence for our approach by analyzing the linear case. Given an adversary who has all $n$ bounded inputs $\hat{\mathbf{x}}_i$ with the true labels $y_i$, the problem is likely not linearly learnable due to an $\mathcal{O}(n)$ Rademacher complexity, as we show in \autoref{thm:linear_csps}. 

\begin{theorem} \label{thm:linear_csps}
Learning $\boldsymbol{w}^\top \boldsymbol{x}_i \bind \boldsymbol{s}_i$ without the secrets $\boldsymbol{s}_i$ has a non-trivial Rademacher complexity of $\mathcal{O}(n)$, 
\end{theorem}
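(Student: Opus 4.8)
The plan is to show that, over the randomness of the per-example secrets, the bound inputs $\hat{\boldsymbol{x}}_i = \boldsymbol{x}_i \bind \boldsymbol{s}_i$ are almost surely in general position, so that the linear hypothesis class shatters the $n$-point sample $\{\hat{\boldsymbol{x}}_1,\dots,\hat{\boldsymbol{x}}_n\}$, whereupon its empirical Rademacher complexity is forced to its maximal value $\Theta(n)$ by a one-line shattering estimate. Almost all of the work sits in the general-position claim, which I would establish in the Fourier domain. From \eqref{binding}, $\mathcal{F}(\hat{\boldsymbol{x}}_i)=\mathcal{F}(\boldsymbol{x}_i)\odot\mathcal{F}(\boldsymbol{s}_i)$, and since $\boldsymbol{s}_i\sim\pi(\mathcal N(0,1/d))$ the vector $\mathcal{F}(\boldsymbol{s}_i)$ has unit-modulus entries with continuously distributed phases, drawn independently across $i$. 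Hence, conditioned on the fixed inputs $\boldsymbol{x}_i$ (which we may assume have all Fourier coefficients nonzero, true a.s.\ for any absolutely continuous image distribution and harmless otherwise), each $\hat{\boldsymbol{x}}_i$ is a continuous random vector and the $\hat{\boldsymbol{x}}_i$ are mutually independent.

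The key step: if $n\le d$ then $\hat{\boldsymbol{x}}_1,\dots,\hat{\boldsymbol{x}}_n$ are linearly independent with probability one. Stacking them as columns of $\hat X\in\mathbb R^{d\times n}$, linear dependence is the event that every $n\times n$ minor of $\hat X$ vanishes. Each such minor, viewed as a function of the $nd$ phase variables parametrising $\mathcal{F}(\boldsymbol{s}_1),\dots,\mathcal{F}(\boldsymbol{s}_n)$, is real-analytic and \emph{not} identically zero — one exhibits a single choice of phases for which it is nonzero (e.g., phases aligning the $\mathcal{F}(\hat{\boldsymbol{x}}_i)$ with $n$ distinct coordinate directions, which is possible exactly because the magnitudes $|\mathcal{F}(\boldsymbol{x}_i)_k|$ are nonzero). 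The zero set of a nonidentically-zero real-analytic function has Lebesgue measure zero, so the dependence event has probability zero. (If $n>d$ the same reasoning gives $\operatorname{rank}\hat X=d$ a.s.; the conclusion then becomes $\Theta(\min(n,d))$, still large for the high-dimensional inputs of interest, so I would simply state the theorem under $n\le d$.)

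With $\hat X$ of full column rank the linear system $\boldsymbol{w}^\top\hat{\boldsymbol{x}}_i=v_i$ is solvable for every $v\in\mathbb R^n$, so the class $\mathcal H=\{\hat{\boldsymbol{x}}\mapsto\boldsymbol{w}^\top\hat{\boldsymbol{x}}\}$ under the natural data-dependent normalisation $|\boldsymbol{w}^\top\hat{\boldsymbol{x}}_i|\le1$ (equivalently the sign class $\operatorname{sign}(\boldsymbol{w}^\top\hat{\boldsymbol{x}})$) realises on the sample exactly the cube $[-1,1]^n$ (resp.\ the full hypercube). Therefore
\[
\widehat{\mathfrak R}_S(\mathcal H)=\mathbb E_{\boldsymbol\sigma}\Big[\sup_{h\in\mathcal H}\sum_{i=1}^{n}\sigma_i\,h(\hat{\boldsymbol{x}}_i)\Big]=\mathbb E_{\boldsymbol\sigma}\Big[\sup_{v\in[-1,1]^n}\langle\boldsymbol\sigma,v\rangle\Big]=\mathbb E_{\boldsymbol\sigma}\Big[\sum_{i}|\sigma_i|\Big]=n,
\]
and the trivial bound $\widehat{\mathfrak R}_S(\mathcal H)\le n$ makes this tight; a contraction argument transfers the same order to any Lipschitz loss composed with $\mathcal H$. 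Taking expectation over the secrets (the identity holds a.s.) yields the claimed $\mathcal O(n)$ — and, more tellingly, $\Omega(n)$ — Rademacher complexity, which is ``non-trivial'' precisely in that it does not decay with $n$, so no uniform-convergence bound controls the adversary's generalisation gap.

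The main obstacle is the full-rank / measure-zero step: turning the informal ``the bound vectors look random, hence generic'' into the analytic-function argument above, while being careful that $\pi(\cdot)$ is well defined ($\mathcal{F}(\boldsymbol{s}_i)$ a.s.\ has no zero entries) and that the input magnitude spectra are nonvanishing so the phases genuinely move each minor. Everything downstream is routine. It is also worth contrasting with InstaHide \citep{Carlini2021a}: there the per-instance secret has only a constant number of degrees of freedom, the analogue of $\hat X$ is confined to an $O(1)$-dimensional variety, and the class does \emph{not} shatter — which is exactly the failure our use of a $\Theta(d)$-phase secret avoids.
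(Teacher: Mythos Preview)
Your argument is correct (for the output-bounded/sign hypothesis class you adopt, and under $n\le d$), and it takes a genuinely different route from the paper's.

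The paper does not argue general position at all. It writes $\mathbf{w}^\top(\boldsymbol{x}_i\bind\boldsymbol{s}_i)=(\mathbf{w}^\top S^C_i)\,\boldsymbol{x}_i$ via the circulant-matrix representation of $\bind$, observes that each $S^C_i$ is an unknown random rotation from the adversary's viewpoint, and on that basis treats the rotated weights $\tilde{\mathbf{w}}_i=(S^C_i)^\top\mathbf{w}$ as $n$ effectively independent free parameters. This is what licenses pushing the single supremum over $\mathbf{w}$ inside the sum as $n$ separate suprema; each term is then the textbook one-point linear Rademacher value $\|\boldsymbol{x}_i\|_2$, summing to $\Theta(n)$.

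What each approach buys: the paper's decoupling step is short and makes transparent \emph{why} the secrets hurt the adversary (every sample is seen through its own unknown rotation), but the sup-inside-sum move is heuristic---read as an inequality it is only an upper bound, and the equality rests on the informal ``independent $\tilde{\mathbf{w}}_i$'' claim. Your shattering argument is rigorous, delivers the matching \emph{lower} bound $\Omega(n)$ that the ``not linearly learnable'' conclusion actually needs, and makes the InstaHide contrast explicit. The costs are the $n\le d$ restriction, the nonvanishing-spectrum assumption on the $\boldsymbol{x}_i$, and the change of hypothesis class: you work with the output-bounded (or sign) class rather than the paper's $\|\mathbf{w}\|_2\le 1$ ball, for which the standard linear Rademacher bound is only $O(\sqrt{n})$, so the $\Theta(n)$ conclusion genuinely requires your reformulation. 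That reformulation is defensible---it is the natural class for classification---but you should flag it, since the statement as written does not specify which constraint on $\mathbf{w}$ is meant.
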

\begin{proof}
The \shortNameTxt Rademacher model gives $
\frac{1}{n} \underset{\boldsymbol{\sigma}}{\mathbb{E}}\left[\sup _{\mathbf{w} \in \mathbb{R}^d | \|\mathbf{w}\|_2 \leq 1} \sum_{i=1}^{n} \sigma_{i} \mathbf{w}^\top \boldsymbol{x}_{i}\right] 
$. The binding operation with a vector $\boldsymbol{s}_i$ is equivalent to the matrix-vector product with a corresponding circulant matrix $S^C_i$. Each $\mathbf{w}^\top S^C_i$ can be written as an independent random rotation leading to $n$ independent $\hat{\mathbf{w}}_i$  terms, allowing the supremum to move into the summation to give $\frac{1}{n} \underset{\boldsymbol{\sigma}}{\mathbb{E}}\left[ \sum_{i=1}^n \sup _{\tilde{\mathbf{w}}_i \in \mathbb{R}^d | \|\mathbf{w}_i\|_2 \leq 1} \sigma_i \tilde{\mathbf{w}}_i^\top \boldsymbol{x}_i \right]$. Applying the result for $n$ independent linear models \cite{shalev-shwartz_ben-david_2021} trained on one point gives the final complexity $\sum_{i=1}^n \|\boldsymbol{x}_i\|_2 \leq n \max_{i}\left\|\boldsymbol{x}_{i}\right\|_{2}  $.
\end{proof}

\section{Experiments \& Results} \label{sec:exp} \label{sec:results}

We do not argue that \shortNameTxt is any true form of encryption, only that it is empirically effective at hiding the nature of inputs and outputs sent to an untrusted party. We will demonstrate this through a series of experiments to show that: 1) Compared to a network with the same design but without the HRR binding/unbinding of the secret $\boldsymbol{s}$, that our approach has some loss of accuracy but is more accurate than prior approaches. 2) The loss of accuracy can be largely mitigated by averaging the results of $\leq 10$ queries. 3) Our approach is robust to adversaries using unsupervised learning that try to infer class information. 4) Our approach is still robust to unrealistically strong adversaries that know the training data and classes, obtaining $1.5-4.7\times$ random guessing accuracy. 5) Our approach is up to 290$\times$ faster than existing provable methods. We also perform an extensive ablation study of alternative designs that show our approach performs considerably better than alternatives. Before our results, we briefly review the datasets and training details.
\par 
As the proposed method is doing image classification, various well-known image classification datasets are used for the experiments. These datasets are diverse in shape, color, channels, contents, and the number of classes. In total $5$ image classification datasets are utilized, namely, MNIST, SVHN, CIFAR-10, CIFAR-100, and Mini-ImageNet. Dataset details, along with training time, and data augmentation can be found in \autoref{sec:data_train_details}.

\subsection{Accuracy Results} 
Our results focus on Top-1 classification accuracy for all datasets, and Top-5 accuracy for datasets with 100 classes. We start by demonstrating the accuracy of our approach in \autoref{tab:accuracy}, where ``Base'' indicates a network with the same total architecture (including U-Net backbone), but without any of the binding/unbinding of secrets or gradient reversal of the adversarial network. This shows that 1) our method can scale to Mini-ImageNet, a result not previously possible \citep{244032}, and 2) that there is some cost that our secret binding incurs on the accuracy of the result.

\begin{table}[!t]
\centering
\caption{Accuracies of the Base model, and model with secret binding and unbinding using improved 2D HRR.}
\label{tab:accuracy}
\renewcommand{\arraystretch}{1.2}
\adjustbox{max width=\columnwidth}{%
\begin{tabular}{@{}clcc@{}}
\toprule
Dataset & Model & Top-1 & Top-5 \\ \midrule
\multirow{2}{*}{\shortstack{MNIST\\$28\times28$}} 
            &  Base     &  98.80  &  --          \\
            &  \shortNameTxt  &  98.51  &  --       \\ 
\multirow{2}{*}{\shortstack{SVHN\\$32\times32$}}
            &  Base     &  93.76  &  --          \\
            &  \shortNameTxt  &  88.44  &  --       \\ 
\multirow{2}{*}{\shortstack{CIFAR-10\\$32\times32$}} 
            &  Base     &  83.57  &  --          \\
            &  \shortNameTxt  &  78.21  &  --       \\ 
\multirow{2}{*}{\shortstack{CIFAR-100\\$32\times32$}} 
            &  Base     &  62.59  &  86.99       \\
            &  \shortNameTxt  &  48.84  &  75.82    \\ 
\multirow{2}{*}{\shortstack{Mini-ImageNet\\$84\times84$}} 
            &  Base     &  55.73  &  80.55       \\
            &  \shortNameTxt  &  40.99  &  66.99    \\ \bottomrule
\end{tabular}
}
\end{table}

The results in \autoref{tab:accuracy} are for a single attempt at the prediction process, and noise is introduced by the randomly selected secret $\boldsymbol{s}$. We can average out this noise by sending $k$ inputs $\boldsymbol{x} \bind \boldsymbol{s}_1, \boldsymbol{x} \bind \boldsymbol{s}_2, \ldots, \boldsymbol{x} \bind \boldsymbol{s}_k$ to be classified, and averaging the resulting $k$ predictions. This provides the result given in \autoref{fig:avg_prediction}, showing that $k \leq 10$ is sufficient to almost completely eliminate the accuracy drop. Additional discussion of this result is in \autoref{sec:avg_pred_extra}. We note the lower accuracy numbers compared to more modern networks on these problems comes from the difficulty of learning with random $\boldsymbol{s}$ vectors bound to the input. For example, our CIFAR-10 training accuracy is 86.17\%, which is close to the test accuracy. For this reason overfitting does not appear to be a culprit in the results. The difficulty of the trained network to work with random $\boldsymbol{s}$ vectors also provides intuition as to its success as a defense: the attacker with less access should have more difficulty handling the HRR vectors, and thus inhibits their success.

\begin{figure}[!htbp]
\centerline{\includegraphics[keepaspectratio, width=0.95\columnwidth]{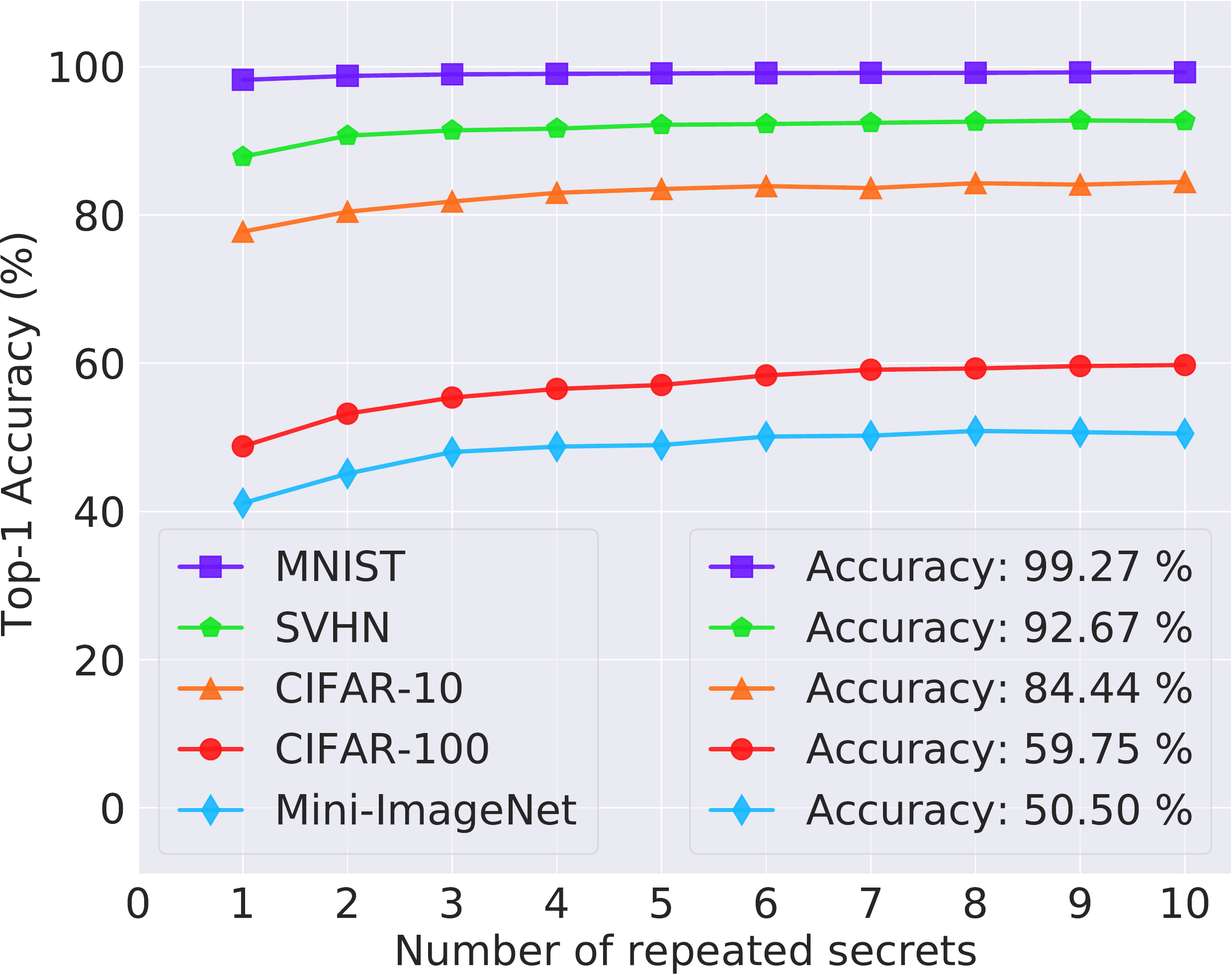}} 
\caption{Accuracy (y-axis) of \shortNameTxt after averaging $k$ predictions (x-axis), which almost fully restores the accuracy lost due to the secret binding/unbinding.}
\label{fig:avg_prediction}
\end{figure}

\subsection{Run-time Results}
To show the speed advantages of \shortNameTxt, we perform a comparison with HE that is \textit{unrealistically favorable to HE}. HE has significant design constraints for a neural network, and thus we could not replicate our ``Base'' architecture with HE libraries like ~\citep{b4}. Instead, we compare against a HE network with a single convolutional layer, followed by aggressive pooling and a fully connected layer, making the model extremely small, unable to learn with any predictive accuracy, and unusable with performance that is close to random guessing (except on MNIST). The  results are in \autoref{tab:timing}, where Mini-ImageNet failed to make a single prediction in under 24 hours. These settings are overly idealized for HE, and is still $290\times$ slower than our \shortNameTxt.

\begin{table}[!t]
\centering
\caption{Time to perform prediction on each dataset in its entirety, where the Homomorphic Encryption alternative is a single CNN layer and unrealistically small to minimize its runtime at the cost of all predictive accuracy.}
\label{tab:timing}
\renewcommand{\arraystretch}{1.2}
\adjustbox{max width=\columnwidth}{%
\begin{tabular}{@{}ccc@{}}
\toprule
Dataset       & Our \shortNameTxt & HE Est.                \\ \midrule
MNIST         & $4.56$ Seconds   & $2$ Hours $46$ Minutes  \\
SVHN          & $12.44$ Seconds  & $55$ Hours $32$ Minutes \\
CIFAR-10      & $7.58$ Seconds   & $21$ Hours $20$ Minutes \\
CIFAR-100     & $9.07$ Seconds   & $43$ Hours $53$ Minutes \\
Mini-ImageNet & $28.37$ Seconds  & Timeout                 \\ \bottomrule
\end{tabular}
}
\end{table}

\begin{table}[!t]
\centering
\caption{Amount of computation performed by the local user and the remote third party}
\label{tab:computation}
\renewcommand{\arraystretch}{1.2}
\adjustbox{max width=\columnwidth}{%
\begin{tabular}{@{}lcc@{}}
\toprule
Dataset       & Remote \% & Local \% \\ \midrule
MNIST         & 74.24    & 25.76   \\
SVHN          & 65.06    & 34.94   \\
CIFAR-10      & 66.08    & 33.92   \\
CIFAR-100     & 66.78    & 33.22   \\
Mini-ImageNet & 74.42    & 25.58   \\ \bottomrule
\end{tabular}
}
\end{table}

Because \shortNameTxt uses standard deep learning code, there is no extraneous compute overhead for arbitrary precision math or multiple rounds of network communication. Thus we can look at the amount of compute saved by offloading to a remote party in \autoref{tab:computation}. We see that at least $65\%$ of compute can be offloaded, netting a $2.9-3.5\times$ reduction in cost. This cost savings can be important for low-power, battery, or  compute constrained devices. The fastest prior work by \cite{244032} requires 60 MB of extra communication \textit{per prediction} on CIFAR-10 (that is $18,000\times$ larger than the image being worked on, the entire corpus is only 200 MB), and is reported to be $5019\times$ slower than our approach. To the best of our knowledge, \shortNameTxt is thus the only method that can realize a real-world resource reduction.

\subsection{Realistic Adversary}
Following \citet{Biggio2014} we specify a realistic adversary that seeks to infer the nature of our model's outputs and class distribution. Because the output shape $\boldsymbol{r} \in \mathbb{R}^{W\times H\times D}$ has no relationship with the number of classes, they must attempt to use some unsupervised clustering to identify patterns within the predictions. We have applied several diverse clustering algorithms such as Kmeans~\citep{kmeans,Raff2021}, Spectral~\citep{spectral}, Gaussian Mixture Model (GMM), Birch~\citep{birch}, and HDBSCAN~\citep{hdbscan} cluster to the outputs $\boldsymbol{r} = f_W(\cdot)$ that the adversary has access to. If they are able to perform clustering with greater than random chance, they may be able to extract information about how our model works. We pessimistically assume the adversary knows the exact number of clusters $k$ that they should be looking for. Thus we use the Adjusted Rand-Index (ARI) to score how well the clusters perform with respect to the true class labels \citep{Vinh:2010:ITM:1756006.1953024}. A near 0\% ARI indicates there is no information to be extracted
, and thus our clustering has performed well. 
We also consider the case where the adversary clusters on the bounded inputs they receive, $\hat{x}$, for completeness. We note this is a poor attack avenue in realistic settings because multiple classes may exist per given set of input images, and that information is only leaked by the network $f_W(\cdot)$ and not the inputs.

The results are  in \autoref{tab:clustering_label_all}, where the adversary performs best on MNIST with $\leq 1.5\%$ ARI scores. On all other datasets, the score is $\leq 0.2\%$, indicating there is almost no label information to be extracted from the clusters using existing methods. This shows our approach is effective in hiding the nature of the output and predictions from the untrusted party. Note that on HDBSCAN zero scores are obtained because of degenerate results. HDBSCAN has the concept of ``outliers'' that do not belong to any cluster and assigns almost the entire test set to the ``outlier'' class, resulting in worst-case scores. 
We note $\hat{x}$ is larger than $\boldsymbol{r}$, resulting in Spectral clustering timing out after several days of running. Overall the results clearly demonstrate that minimal amount of label information is leaked by the model.

\begin{table}[!h]
\caption{Clustering results of the adversary attempting to discover class information directly from the main network output $\boldsymbol{r}$ (top) and the bound image inputs $\boldsymbol{\hat{x}}$ (bottom). 
All numbers are percentages, and the Adjusted Rand-Index is $\leq 1.5\%$ for all cases. Since ARI accounts for random-chance clustering, the unrealistic adversary is not able to meaningful discern class information.
}
\label{tab:clustering_label_all}
\adjustbox{max width=\columnwidth}{%
\begin{tabular}{@{}lrrrrr@{}}
\toprule
        & \multicolumn{1}{c}{MNIST} & \multicolumn{1}{c}{SVHN} & \multicolumn{1}{c}{\begin{tabular}[c]{@{}c@{}}CIFAR\\ 10\end{tabular}} & \multicolumn{1}{c}{\begin{tabular}[c]{@{}c@{}}CIFAR\\ 100\end{tabular}} & \multicolumn{1}{c}{\begin{tabular}[c]{@{}c@{}}Mini\\ ImgNet\end{tabular}} \\ \midrule
K-Means  & 1.28                      & 0.06                     & 0.21                         & 0.03                          & 0.08                              \\
Spectral & 0.01                      & 0.01                     & 0.00                         & 0.00                          & 0.02                              \\
GMM      & 1.28                      & 0.06                     & 0.17                         & 0.04                          & 0.09                              \\
Birch    & 1.51                      & 0.03                     & 0.13                         & 0.05                          & 0.07                              \\
HDBSCAN  & 0.00                      & 0.00                     & 0.00                         & 0.00                          & 0.00                              \\ 
\cmidrule(l){2-6}
K-Means & -0.02                     & -0.01                    & 0.18                                                                   & 0.54                                                                    & 0.42                                                                      \\
GMM     & 0.01                      & 0.00                     & 0.09                                                                   & 0.61                                                                    & 0.44                                                                      \\
Birch   & 0.20                      & 0.00                     & 0.14                                                                   & 0.45                                                                    & 0.35                                                                      \\
HDBSCAN & 0.00                      & -0.24                    & 1.23                                                                   & 0.01                                                                    & 0.02                                                                      \\ \bottomrule
\end{tabular}
}
\end{table}

\begin{figure}[!t]
\centering 
\subfigure[True Class Labels]
{\includegraphics[keepaspectratio, width=0.45\columnwidth]{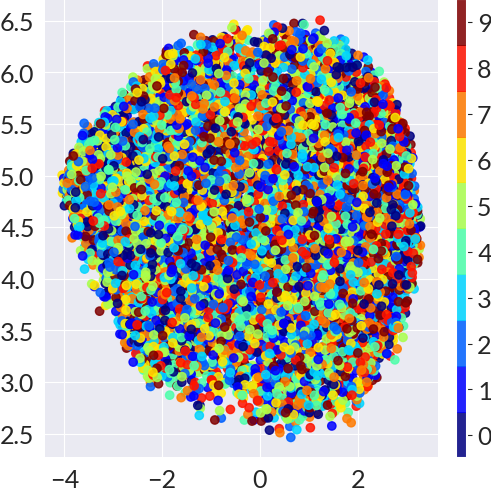}}
\hspace{2mm}
\subfigure[Birch Cluster Labels]
{\includegraphics[keepaspectratio, width=0.45\columnwidth]{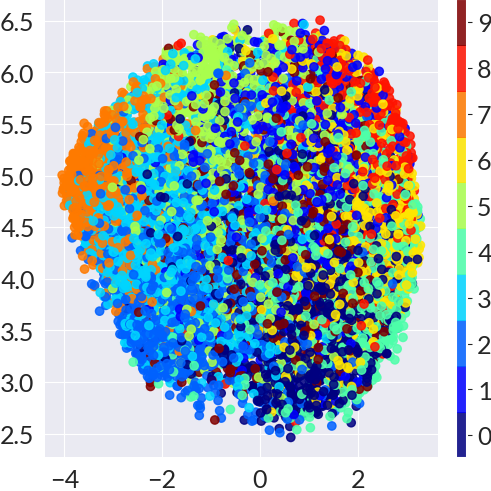}}
\caption{UMAP embeddings of output $r$ of $f_W(\cdot)$ (left) and the ``best'' clustering (right). Without the secret $\boldsymbol{s}$ the class labels appear random (left), and clustering detects spurious density patterns with no correlation to the true labels.}
\label{clustering}
\end{figure}

\autoref{clustering} shows visually how \shortNameTxt is able to achieve this result. The result vectors $\boldsymbol{r}$ that the untrusted party has access to are plotted using UMAP~\citep{mcinnes2018umap}. Because $f_A(\cdot)$ is trained with gradient reversal, $f_W(\cdot)$ has learned a representation $\boldsymbol{r}$ that requires the secret $\boldsymbol{s}$ to extract the meaning from its representation. Thus the result is an embedding space that points from the same class are randomly dispersed and intermixed. When it is clustered the clusters do not correspond to the true class distributions, as shown on the right. Additional visualizations showing how the cluster labels do not correlate with class labels in \autoref{sec:cluster_visual_results}.

\subsection{Overly Strong Adversary}
An unrealistically powerful adversary would have access to the entire training set, the class labels, know the procedure of binding/unbinding secrets, and be able to train their own model that predicts the class label from the intermediate result $\boldsymbol{r}$ (i.e., knows everything but the secrets $\mathbf{s}_i$). This is in fact what the adversarial network $f_A(\cdot)$ performs, and represents the worst possible case scenario for the adversary's strength: that they can train their own model to ignore the secret $\boldsymbol{s}$ and extract the true labels. We thus perform this test by training a new model on the ground-truth pairs between $\boldsymbol{r}_i$ and the class label $\boldsymbol{y}_i$, with the results shown in \autoref{tab:adversarial}. Note an adversary of this strength already knows what the predictive task and type of data is, which is what \shortNameTxt is designed to hide. Success of \shortNameTxt at this strength shows efficaciousness from task level to individual level protection beyond our intended goal, but also provides even greater evidence of task level protection.

\begin{table}[!h]
\centering
\caption{Accuracies of the \textbf{Adversarial~Network} (lower is better) where the secret to unbind the output of the U-Net is unknown.}
\label{tab:adversarial}
\renewcommand{\arraystretch}{1.2}
\adjustbox{max width=\columnwidth}{%
\begin{tabular}{@{}ccc@{}}
\toprule
Dataset & Top-1 Accuracy (\%) & Top-5 Accuracy (\%) \\ \midrule
MNIST         & 19.72 & --    \\ 
SVHN          & 21.13 & --    \\ 
CIFAR-10      & 12.91 & --    \\ 
CIFAR-100     &  2.66 & 10.33 \\ 
Mini-ImageNet &  4.68 & 15.01 \\ \bottomrule
\end{tabular}%
}
\end{table}

In this worst case situation, the adversary's predictions are little better than random-guessing performance. For MNIST, SVHN, and CIFAR-10 that would be 10\%, with SVHN having the best results at just 2.1$\times$ better than random-guessing. For CIFAR-100 and Mini-ImageNet random-guessing is 1\%, and we see high ratios at $2.6$ and $4.7\times$, but the total accuracy is still far below the 60\% and 51\% obtainable by knowing the secret $\boldsymbol{s}$. This shows that our approach is highly effective at obscuring the information from the untrusted party, forcing them near random-guessing performance even in unrealistically powerful settings. This also reinforces that our approach is not real encryption, and should not be used when provable security is a requirement. Our results do indicate strong empirical security though, and the only method that is practical from a runtime perspective. While multiple classifications of the same image with different secrets improves accuracy for the user, the same can not be said for the adversary. \autoref{fig:unrealistic_avg_prediction} shows that attack success improves by a minor amount only on MNIST for a network trained to take in $k$ pairs of an image bound with different secrets.

\begin{figure}[!h]
\centerline{\includegraphics[keepaspectratio, width=0.9\columnwidth]{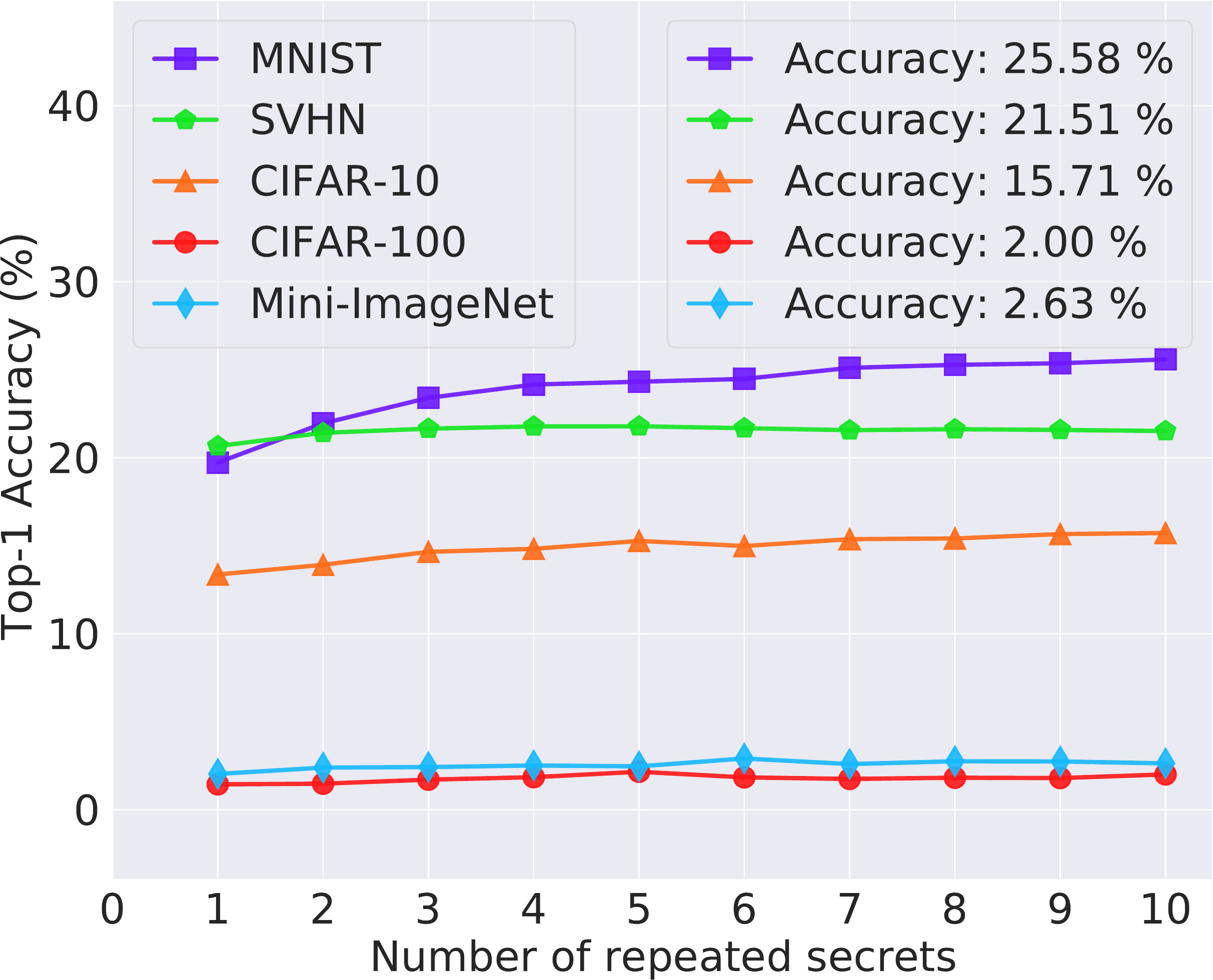}} 
\caption{Accuracy (y-axis) of \textbf{Unrealistic Adversary} after averaging $k$ predictions (x-axis) of the secret binding/unbinding.}
\label{fig:unrealistic_avg_prediction}
\end{figure}

\begin{table}[!h]
\centering
\caption{Accuracy predicting the inputs comparing defender \shortNameTxt and unrealistic adversary attacking the inputs}
\label{tab:unrealistic_adversary_acc}
\adjustbox{max width=\columnwidth}{%
\begin{tabular}{@{}ccc@{}}
\toprule
Dataset       & Our \shortNameTxt  &  Unrealistic Adversary   \\ \midrule
MNIST         & 98.51 \%        &  81.23 \% \\
SVHN          & 88.44 \%        &  39.61 \% \\
CIFAR-10      & 78.21 \%        &  43.40 \% \\
CIFAR-100     & 48.84 \%        &  16.58 \% \\
Mini-ImageNet & 40.99 \%        &  16.00 \% \\ \bottomrule
\end{tabular}
}
\end{table}

In the most extreme scenario, the adversary would have access to the bound images along with the class labels and it may train a network to learn class labels directly from the bound images. Results of this experiment are reported in \autoref{tab:unrealistic_adversary_acc}. Even though this unrealistic adversary seemingly performed better with the access of correct class labels (note linear models can get 92\% MNIST accuracy), still without the correct secret to unbind the image, it falls behind our \shortNameTxt method, providing a strong level of individual protection.

\subsection{Model Inversion Adversary}

As our final attack, we use the Frechet Inception Distance (FID)\cite{10.5555/3295222.3295408} to design an inversion attack. Given the bound input $\hat{\boldsymbol{x}} = \boldsymbol{x} \bind \boldsymbol{s}$, the adversary has their own copy of the training data to compute the FID score of $\hat{\boldsymbol{x}} \bind \hat{\boldsymbol{s}}^\dagger$. Then the adversary can optimize their copy of $\hat{\boldsymbol{s}}$ to try and find the secret that will result in a realistic looking image. We remind the reader that without \shortNameTxt, the adversary intrinsically receives the true input $\boldsymbol{x}$ whenever a prediction is made, and so no comparison to a baseline is possible. Our goal is purely to see if an inversion strategy yields cracks in the effectiveness of \shortNameTxt.

\newenvironment{figrow}%
{%
\centering\addtocounter{figure}{1}%
\begin{enumerate}[%
itemsep=2pt, parsep=0em,
label={(\alph*)},
ref={\thefigure.(\alph*)}
]}%
{\end{enumerate}\addtocounter{figure}{-1}}

\begin{figure}[!h]
\begin{figrow}

\item \raisebox{-0.5\height}{
\includegraphics[keepaspectratio, width=0.080\columnwidth]{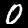} 
\includegraphics[keepaspectratio, width=0.080\columnwidth]{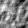}
\includegraphics[keepaspectratio, width=0.080\columnwidth]{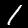}
\includegraphics[keepaspectratio, width=0.080\columnwidth]{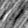}
\includegraphics[keepaspectratio, width=0.080\columnwidth]{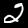}
\includegraphics[keepaspectratio, width=0.080\columnwidth]{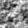}
\includegraphics[keepaspectratio, width=0.080\columnwidth]{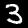}
\includegraphics[keepaspectratio, width=0.080\columnwidth]{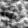}
\includegraphics[keepaspectratio, width=0.080\columnwidth]{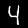}
\includegraphics[keepaspectratio, width=0.080\columnwidth]{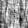}
}

\item \raisebox{-0.5\height}{
\includegraphics[keepaspectratio, width=0.080\columnwidth]{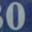} 
\includegraphics[keepaspectratio, width=0.080\columnwidth]{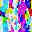}
\includegraphics[keepaspectratio, width=0.080\columnwidth]{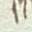}
\includegraphics[keepaspectratio, width=0.080\columnwidth]{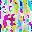}
\includegraphics[keepaspectratio, width=0.080\columnwidth]{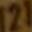}
\includegraphics[keepaspectratio, width=0.080\columnwidth]{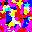}
\includegraphics[keepaspectratio, width=0.080\columnwidth]{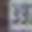}
\includegraphics[keepaspectratio, width=0.080\columnwidth]{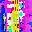}
\includegraphics[keepaspectratio, width=0.080\columnwidth]{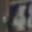}
\includegraphics[keepaspectratio, width=0.080\columnwidth]{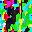} 
}

\item \raisebox{-0.5\height}{
\includegraphics[keepaspectratio, width=0.080\columnwidth]{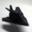} 
\includegraphics[keepaspectratio, width=0.080\columnwidth]{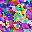}
\includegraphics[keepaspectratio, width=0.080\columnwidth]{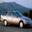}
\includegraphics[keepaspectratio, width=0.080\columnwidth]{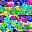}
\includegraphics[keepaspectratio, width=0.080\columnwidth]{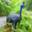}
\includegraphics[keepaspectratio, width=0.080\columnwidth]{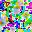}
\includegraphics[keepaspectratio, width=0.080\columnwidth]{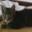}
\includegraphics[keepaspectratio, width=0.080\columnwidth]{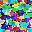}
\includegraphics[keepaspectratio, width=0.080\columnwidth]{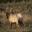}
\includegraphics[keepaspectratio, width=0.080\columnwidth]{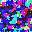} 
}

\item \raisebox{-0.5\height}{
\includegraphics[keepaspectratio, width=0.080\columnwidth]{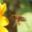} 
\includegraphics[keepaspectratio, width=0.080\columnwidth]{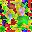}
\includegraphics[keepaspectratio, width=0.080\columnwidth]{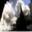}
\includegraphics[keepaspectratio, width=0.080\columnwidth]{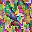}
\includegraphics[keepaspectratio, width=0.080\columnwidth]{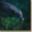}
\includegraphics[keepaspectratio, width=0.080\columnwidth]{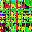}
\includegraphics[keepaspectratio, width=0.080\columnwidth]{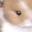}
\includegraphics[keepaspectratio, width=0.080\columnwidth]{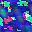}
\includegraphics[keepaspectratio, width=0.080\columnwidth]{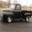}
\includegraphics[keepaspectratio, width=0.080\columnwidth]{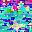} 
}

\item \raisebox{-0.5\height}{
\includegraphics[keepaspectratio, width=0.080\columnwidth]{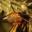} 
\includegraphics[keepaspectratio, width=0.080\columnwidth]{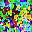}
\includegraphics[keepaspectratio, width=0.080\columnwidth]{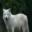}
\includegraphics[keepaspectratio, width=0.080\columnwidth]{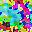}
\includegraphics[keepaspectratio, width=0.080\columnwidth]{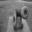}
\includegraphics[keepaspectratio, width=0.080\columnwidth]{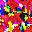}
\includegraphics[keepaspectratio, width=0.080\columnwidth]{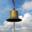}
\includegraphics[keepaspectratio, width=0.080\columnwidth]{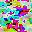}
\includegraphics[keepaspectratio, width=0.080\columnwidth]{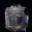}
\includegraphics[keepaspectratio, width=0.080\columnwidth]{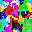} 
}

\end{figrow}
\caption{Model inversion attack by the adversary using projected gradient descent to unbind the bound images given sample of the original images. Images are shown in pairs where the original image is shown in left and generated image is shown in right. Results for MNIST (a), SVHN (b), CIFAR-10 (c), CIFAR-100 (d), and Mini-ImageNet (e) all confirm the adversary can not extract the nature of the bound inputs even when optimizing for visual realism to extract the secret.} \label{fig:inverse_attack}
\end{figure}

Examples of the attack are presented in \autoref{fig:inverse_attack}, showing that inverting the original images is highly challenging. This attack involves the adversary having the true training data, knowing the procedure to extract the input, and using gradient descent to attempt to find a secret that maximizes the apparent realism of the result via FID scores.

A second inversion strategy assumes the adversary has examples of secrets $\boldsymbol{s}$ and encoded images $\boldsymbol{r}$, and attempts to learn an auto-encoder that minimizes $\|\boldsymbol{r}-\boldsymbol{s}\|_2^2$, to directly predict the secret from a bound image. We find that this strategy also fails to provide meaningful results, as demonstrated in \autoref{fig:auto_encoder}. 

\begin{figure}[!h]
\begin{figrow}
\centering
\item \raisebox{-0.5\height}{
\includegraphics[keepaspectratio, width=0.080\columnwidth]{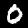} 
\includegraphics[keepaspectratio, width=0.080\columnwidth]{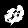}
\includegraphics[keepaspectratio, width=0.080\columnwidth]{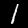}
\includegraphics[keepaspectratio, width=0.080\columnwidth]{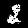}
\includegraphics[keepaspectratio, width=0.080\columnwidth]{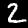}
\includegraphics[keepaspectratio, width=0.080\columnwidth]{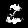}
\includegraphics[keepaspectratio, width=0.080\columnwidth]{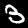}
\includegraphics[keepaspectratio, width=0.080\columnwidth]{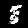}
}

\item \raisebox{-0.5\height}{
\includegraphics[keepaspectratio, width=0.080\columnwidth]{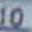} 
\includegraphics[keepaspectratio, width=0.080\columnwidth]{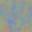}
\includegraphics[keepaspectratio, width=0.080\columnwidth]{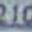}
\includegraphics[keepaspectratio, width=0.080\columnwidth]{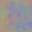}
\includegraphics[keepaspectratio, width=0.080\columnwidth]{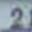}
\includegraphics[keepaspectratio, width=0.080\columnwidth]{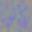}
\includegraphics[keepaspectratio, width=0.080\columnwidth]{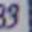}
\includegraphics[keepaspectratio, width=0.080\columnwidth]{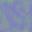}
}

\item \raisebox{-0.5\height}{
\includegraphics[keepaspectratio, width=0.080\columnwidth]{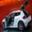}
\includegraphics[keepaspectratio, width=0.080\columnwidth]{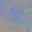}
\includegraphics[keepaspectratio, width=0.080\columnwidth]{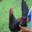}
\includegraphics[keepaspectratio, width=0.080\columnwidth]{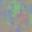}
\includegraphics[keepaspectratio, width=0.080\columnwidth]{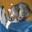}
\includegraphics[keepaspectratio, width=0.080\columnwidth]{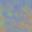}
\includegraphics[keepaspectratio, width=0.080\columnwidth]{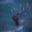}
\includegraphics[keepaspectratio, width=0.080\columnwidth]{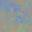}
}

\item \raisebox{-0.5\height}{
\includegraphics[keepaspectratio, width=0.080\columnwidth]{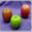} 
\includegraphics[keepaspectratio, width=0.080\columnwidth]{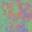}
\includegraphics[keepaspectratio, width=0.080\columnwidth]{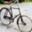}
\includegraphics[keepaspectratio, width=0.080\columnwidth]{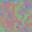}
\includegraphics[keepaspectratio, width=0.080\columnwidth]{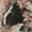}
\includegraphics[keepaspectratio, width=0.080\columnwidth]{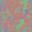}
\includegraphics[keepaspectratio, width=0.080\columnwidth]{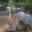}
\includegraphics[keepaspectratio, width=0.080\columnwidth]{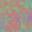}
}

\item \raisebox{-0.5\height}{
\includegraphics[keepaspectratio, width=0.080\columnwidth]{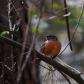} 
\includegraphics[keepaspectratio, width=0.080\columnwidth]{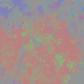}
\includegraphics[keepaspectratio, width=0.080\columnwidth]{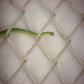}
\includegraphics[keepaspectratio, width=0.080\columnwidth]{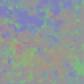}
\includegraphics[keepaspectratio, width=0.080\columnwidth]{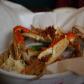}
\includegraphics[keepaspectratio, width=0.080\columnwidth]{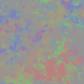}
\includegraphics[keepaspectratio, width=0.080\columnwidth]{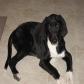}
\includegraphics[keepaspectratio, width=0.080\columnwidth]{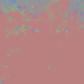}
}

\end{figrow}
\caption{Inversion attack using a trained auto-encoder to directly predict the secret $\boldsymbol{s}$. Results for MNIST (a), SVHN (b), CIFAR-10 (c), CIFAR-100 (d), and Mini-ImageNet (e) show that the adversary can not meaningfully predict the secret from new outputs $\boldsymbol{r}$. }
\label{fig:auto_encoder}
\end{figure}

\section{Conclusion and Limitations} \label{sec:conclusion}
We have shown the first approach that allows deploying the majority of computational effort onto an untrusted third party, that does not impose compute or communication overheads larger than performing the computation locally. This makes it the only current option for deployment scenarios that are runtime or power-constrained, such as on mobile devices. While the method is not provably secure, limiting use to highly sensitive data, it shows strong empirical robustness to unrealistically powerful adversaries. Compared to state-of-the-art alternatives our \shortNameTxt is $\approx5000\times$ faster and sends $\approx18,000\times$ less data per query.

Our approach is validated by attacking it at a threat model stronger than what we intend it to be used for. We stress this intended treat model of \shortNameTxt as a limitation because we can not prove it's effectiveness in the more general context. Our hope is that the numerous attacks at a higher threat model will mean that any future successful attack against \shortNameTxt will thus require some scientific advancement, but also means we must acknowledge the risk of unidentified attacks.  It should thus be used with high caution for any application where privacy is a hard requirement. The scalability of \shortNameTxt is also limited, we found it unable to reach full ImageNet scale in its current form - a limitation shared by other approaches, but still a constraint toward many applications. 

\section*{Acknowledgments}

We thank Frank Ferraro for his helpful review of this paper, and the anonymous reviewers for their additional experiments that have further strengthened the paper's empirical conclusions. 

\bibliography{refs}
\bibliographystyle{icml2022}

\newpage
\appendix
\onecolumn

\FloatBarrier

\section{Network Model Details} \label{sec:train_details}
To train the proposed model, first both training and augmented data are bound with a randomly generated secret sampled from a normal distribution with complex unit magnitude projection using improved 2D HRR. The bound image is then transferred to the main network, i.e., U-net. Next, the output of the U-net is received to the user-end and exploited by both the prediction and the adversarial network. In the prediction network, the output of the U-net is unbound using the secret, and class labels are predicted. On the other hand, the same output of the U-net is fed to the adversarial network by reversing the gradient which also predicts the class labels. As no secret unbinding is performed in the adversarial network, it will make the classification harder if someone tries to classify labels using the output of the U-net without doing secret unbinding. 
\par 
The input and output dimension of each of the network solely depends on the dataset. The size of the prediction and adversarial network also depends on the dataset and the number of parameters increases as the dimension of the images increases. For instance, CIFAR-10 dataset has RGB images of dimension $(32\times32\times3)$ in $10$ different classes. Therefore, the input and output dimension of the U-net is $(32\times32\times3)$. As for the prediction and adversarial network, three blocks of Convolutional and Maxpooling layer are used back-to-back with the number of filters $32$, $64$, $128$ in each block. The output dimension of the final block is $(4\times4\times128)$ which is flattened to a vector of length $2048$. Next, three FC layers are used back-to-back, and the dimension of the vector is reduced from $2048$ to $1024$, $512$, and finally, $10$ which is the number of classes in the CIFAR-10 dataset.
\par 
The network is optimized using Adam optimizer with a learning rate of $10^{-3}$ which is subsequently reduced by $10$ fold after 200 epochs for better convergence. The batch size is chosen is to be $64$ and each of the networks is trained until convergence up to 400 epochs.
\par 
Below we show code snippets of each network's definition for a complete specification of its design, layers, activation, and neurons. Complete code is provided in the supplemental material.

\begin{figure}[!h]
    \centering
\begin{minted}[mathescape=true]{python}
def forward(self, x, key):
    x_main = self.F_main(x) #$\boldsymbol{r} = f_W(\boldsymbol{x}\bind \boldsymbol{s})$
    y_attack = self.F_attact(x_main) #$\hat{y}_A = \operatorname{ReverseGrad}\left(f_A(\boldsymbol{x})\right)$
    x_unbind = unbinding_2d(x_main, key) #$\mathit{tmp} = \boldsymbol{r} \bind \boldsymbol{s}^\dagger$
    y_pred = self.F_pred(x_unbind) #$\hat{y}_P = f_P(\mathit{tmp}) = f_P(\boldsymbol{r} \bind \boldsymbol{s}^\dagger)$
    return y_pred, y_attack, x_main
\end{minted}
    \caption{Forward function signature used by all of our networks. }
\end{figure}

\begin{figure}[!h]
    \centering
\begin{minted}[fontsize=\footnotesize]{python}
class NetworkMiniImageNet(nn.Module):
    def __init__(self, activation=nn.LeakyReLU(0.1)):
        super().__init__()

        self.F_main = nn.Sequential(
            unet.UNet2D(3, 3)
        )

        self.F_attact = nn.Sequential(
            RevGrad(),
            nn.Conv2d(3, 64, (3, 3), padding=(1, 1)), nn.BatchNorm2d(64), activation,
            nn.MaxPool2d((2, 2)), nn.Dropout(0.25),
            nn.Conv2d(64, 128, (3, 3), padding=(1, 1)), nn.BatchNorm2d(128), activation,
            nn.MaxPool2d((2, 2)), nn.Dropout(0.25),
            nn.Conv2d(128, 256, (2, 2), padding=(1, 1)), nn.BatchNorm2d(256), activation,
            nn.MaxPool2d((2, 2)), nn.Dropout(0.25),
            nn.Conv2d(256, 256, (3, 3), padding=(1, 1)), nn.BatchNorm2d(256), activation,
            nn.MaxPool2d((2, 2)), nn.Dropout(0.25),
            nn.Flatten(),
            nn.Linear(6400, 2048), activation, nn.BatchNorm1d(2048),
            nn.Dropout(p=0.5),
            nn.Linear(2048, 1024), activation, nn.BatchNorm1d(1024),
            nn.Dropout(p=0.5),
            nn.Linear(1024, 100), nn.Softmax(dim=-1)
        )

        self.F_pred = nn.Sequential(
            nn.Conv2d(3, 64, (3, 3), padding=(1, 1)), nn.BatchNorm2d(64), activation,
            nn.MaxPool2d((2, 2)), nn.Dropout(0.25),
            nn.Conv2d(64, 128, (3, 3), padding=(1, 1)), nn.BatchNorm2d(128), activation,
            nn.MaxPool2d((2, 2)), nn.Dropout(0.25),
            nn.Conv2d(128, 256, (2, 2), padding=(1, 1)), nn.BatchNorm2d(256), activation,
            nn.MaxPool2d((2, 2)), nn.Dropout(0.25),
            nn.Conv2d(256, 256, (3, 3), padding=(1, 1)), nn.BatchNorm2d(256), activation,
            nn.MaxPool2d((2, 2)), nn.Dropout(0.25),
            nn.Flatten(),
            nn.Linear(6400, 2048), activation, nn.BatchNorm1d(2048),
            nn.Dropout(p=0.5),
            nn.Linear(2048, 1024), activation, nn.BatchNorm1d(1024),
            nn.Dropout(p=0.5),
            nn.Linear(1024, 100), nn.Softmax(dim=-1)
        )
\end{minted}
    \caption{Network for Mini-ImageNet results.}
\end{figure}

\begin{figure}[!h]
    \centering
\begin{minted}[fontsize=\footnotesize]{python}
class NetworkCIFAR100(nn.Module):
    def __init__(self, activation=nn.LeakyReLU(0.1)):
        super().__init__()

        self.F_main = nn.Sequential(
            unet.UNet2D(3, 3)
        )

        self.F_attact = nn.Sequential(
            RevGrad(),
            nn.Conv2d(3, 64, (3, 3), padding=(1, 1)), nn.BatchNorm2d(64), activation,
            nn.MaxPool2d((2, 2)), nn.Dropout(0.25),
            nn.Conv2d(64, 128, (3, 3), padding=(1, 1)), nn.BatchNorm2d(128), activation,
            nn.MaxPool2d((2, 2)), nn.Dropout(0.25),
            nn.Conv2d(128, 256, (3, 3), padding=(1, 1)), nn.BatchNorm2d(256), activation,
            nn.MaxPool2d((2, 2)), nn.Dropout(0.25),
            nn.Flatten(),
            nn.Linear(4096, 2048), activation, nn.BatchNorm1d(2048),
            nn.Dropout(p=0.5),
            nn.Linear(2048, 1024), activation, nn.BatchNorm1d(1024),
            nn.Dropout(p=0.5),
            nn.Linear(1024, 100), nn.Softmax(dim=-1)
        )

        self.F_pred = nn.Sequential(
            nn.Conv2d(3, 64, (3, 3), padding=(1, 1)), nn.BatchNorm2d(64), activation,
            nn.MaxPool2d((2, 2)), nn.Dropout(0.25),
            nn.Conv2d(64, 128, (3, 3), padding=(1, 1)), nn.BatchNorm2d(128), activation,
            nn.MaxPool2d((2, 2)), nn.Dropout(0.25),
            nn.Conv2d(128, 256, (3, 3), padding=(1, 1)), nn.BatchNorm2d(256), activation,
            nn.MaxPool2d((2, 2)), nn.Dropout(0.25),
            nn.Flatten(),
            nn.Linear(4096, 2048), activation, nn.BatchNorm1d(2048),
            nn.Dropout(p=0.5),
            nn.Linear(2048, 1024), activation, nn.BatchNorm1d(1024),
            nn.Dropout(p=0.5),
            nn.Linear(1024, 100), nn.Softmax(dim=-1)
        )
\end{minted}
    \caption{Network for CIFAR-100 results.}
\end{figure}

\begin{figure}[!h]
    \centering
\begin{minted}[fontsize=\footnotesize]{python}
class NetworkCIFAR10(nn.Module):
    def __init__(self, activation=nn.LeakyReLU(0.1)):
        super().__init__()

        self.F_main = nn.Sequential(
            unet.UNet2D(3, 3)
        )

        self.F_attact = nn.Sequential(
            RevGrad(),
            nn.Conv2d(3, 32, (3, 3), padding=(1, 1)), nn.BatchNorm2d(32), activation,
            nn.MaxPool2d((2, 2)), nn.Dropout(0.25),
            nn.Conv2d(32, 64, (3, 3), padding=(1, 1)), nn.BatchNorm2d(64), activation,
            nn.MaxPool2d((2, 2)), nn.Dropout(0.25),
            nn.Conv2d(64, 128, (3, 3), padding=(1, 1)), nn.BatchNorm2d(128), activation,
            nn.MaxPool2d((2, 2)), nn.Dropout(0.25),
            nn.Flatten(),
            nn.Linear(2048, 1024), activation, nn.BatchNorm1d(1024),
            nn.Dropout(p=0.5),
            nn.Linear(1024, 512), activation, nn.BatchNorm1d(512),
            nn.Dropout(p=0.5),
            nn.Linear(512, 10), nn.Softmax(dim=-1)
        )

        self.F_pred = nn.Sequential(
            nn.Conv2d(3, 32, (3, 3), padding=(1, 1)), nn.BatchNorm2d(32), activation,
            nn.MaxPool2d((2, 2)), nn.Dropout(0.25),
            nn.Conv2d(32, 64, (3, 3), padding=(1, 1)), nn.BatchNorm2d(64), activation,
            nn.MaxPool2d((2, 2)), nn.Dropout(0.25),
            nn.Conv2d(64, 128, (3, 3), padding=(1, 1)), nn.BatchNorm2d(128), activation,
            nn.MaxPool2d((2, 2)), nn.Dropout(0.25),
            nn.Flatten(),
            nn.Linear(2048, 1024), activation, nn.BatchNorm1d(1024),
            nn.Dropout(p=0.5),
            nn.Linear(1024, 512), activation, nn.BatchNorm1d(512),
            nn.Dropout(p=0.5),
            nn.Linear(512, 10), nn.Softmax(dim=-1)
        )
\end{minted}
    \caption{Network for CIFAR-10 results.}
\end{figure}

\begin{figure}[!h]
    \centering
\begin{minted}[fontsize=\footnotesize]{python}
class NetworkSVHN(nn.Module):
    def __init__(self, activation=nn.LeakyReLU(0.1)):
        super().__init__()

        self.F_main = nn.Sequential(
            unet.UNet2D(3, 3)
        )

        self.F_attact = nn.Sequential(
            RevGrad(),
            nn.Conv2d(3, 32, (3, 3), padding=(1, 1)), nn.BatchNorm2d(32), activation,
            nn.MaxPool2d((2, 2)), nn.Dropout(0.25),
            nn.Conv2d(32, 64, (3, 3), padding=(1, 1)), nn.BatchNorm2d(64), activation,
            nn.MaxPool2d((2, 2)), nn.Dropout(0.25),
            nn.Conv2d(64, 128, (3, 3), padding=(1, 1)), nn.BatchNorm2d(128), activation,
            nn.MaxPool2d((2, 2)), nn.Dropout(0.25),
            nn.Flatten(),
            nn.Linear(2048, 1024), activation, nn.BatchNorm1d(1024),
            nn.Dropout(p=0.5),
            nn.Linear(1024, 512), activation, nn.BatchNorm1d(512),
            nn.Dropout(p=0.5),
            nn.Linear(512, 10), nn.Softmax(dim=-1)
        )

        self.F_pred = nn.Sequential(
            nn.Conv2d(3, 32, (3, 3), padding=(1, 1)), nn.BatchNorm2d(32), activation,
            nn.MaxPool2d((2, 2)), nn.Dropout(0.25),
            nn.Conv2d(32, 64, (3, 3), padding=(1, 1)), nn.BatchNorm2d(64), activation,
            nn.MaxPool2d((2, 2)), nn.Dropout(0.25),
            nn.Conv2d(64, 128, (3, 3), padding=(1, 1)), nn.BatchNorm2d(128), activation,
            nn.MaxPool2d((2, 2)), nn.Dropout(0.25),
            nn.Flatten(),
            nn.Linear(2048, 1024), activation, nn.BatchNorm1d(1024),
            nn.Dropout(p=0.5),
            nn.Linear(1024, 512), activation, nn.BatchNorm1d(512),
            nn.Dropout(p=0.5),
            nn.Linear(512, 10), nn.Softmax(dim=-1)
        )
\end{minted}
    \caption{Network for SVHN results.}
\end{figure}

\begin{figure}[!h]
    \centering
\begin{minted}[fontsize=\footnotesize]{python}
class NetworkMNIST(nn.Module):
    def __init__(self, activation=nn.LeakyReLU(0.1)):
        super().__init__()

        self.F_main = nn.Sequential(
            nn.Conv2d(1, 3, (1, 1)),
            unet.UNet2D(3, 3),
            nn.Conv2d(3, 1, (1, 1)),
        )

        self.F_attact = nn.Sequential(
            RevGrad(),
            nn.Conv2d(1, 32, (3, 3), padding=(1, 1)), nn.BatchNorm2d(32), activation,
            nn.MaxPool2d((2, 2)), nn.Dropout(0.25),
            nn.Conv2d(32, 64, (3, 3), padding=(1, 1)), nn.BatchNorm2d(64), activation,
            nn.MaxPool2d((2, 2)), nn.Dropout(0.25),
            nn.Conv2d(64, 128, (3, 3), padding=(1, 1)), nn.BatchNorm2d(128), activation,
            nn.MaxPool2d((2, 2)), nn.Dropout(0.25),
            nn.Flatten(),
            nn.Linear(1152, 1024), activation, nn.BatchNorm1d(1024),
            nn.Dropout(p=0.5),
            nn.Linear(1024, 512), activation, nn.BatchNorm1d(512),
            nn.Dropout(p=0.5),
            nn.Linear(512, 10), nn.Softmax(dim=-1)
        )

        self.F_pred = nn.Sequential(
            nn.Conv2d(1, 32, (3, 3), padding=(1, 1)), nn.BatchNorm2d(32), activation,
            nn.MaxPool2d((2, 2)), nn.Dropout(0.25),
            nn.Conv2d(32, 64, (3, 3), padding=(1, 1)), nn.BatchNorm2d(64), activation,
            nn.MaxPool2d((2, 2)), nn.Dropout(0.25),
            nn.Conv2d(64, 128, (3, 3), padding=(1, 1)), nn.BatchNorm2d(128), activation,
            nn.MaxPool2d((2, 2)), nn.Dropout(0.25),
            nn.Flatten(),
            nn.Linear(1152, 1024), activation, nn.BatchNorm1d(1024),
            nn.Dropout(p=0.5),
            nn.Linear(1024, 512), activation, nn.BatchNorm1d(512),
            nn.Dropout(p=0.5),
            nn.Linear(512, 10), nn.Softmax(dim=-1)
        )
\end{minted}
    \caption{Network for MNIST results.}
\end{figure}

\FloatBarrier
\section{Dataset and Training Details} \label{sec:data_train_details}

Additionally, during training, data is augmented by applying random horizontal flip $(50\%)$, random brightness $(0.7, 1.3)$, random contrast $(0.7, 1.3)$, random crop $(0.7, 1.3)$, random rotation $(-60, 60)$, random translation $(0.0, 0.25)$, random scaling $(0.9, 1.1)$, and Gaussian blur with standard deviation of $(0.5, 1.5)$. Therefore, the network will learn from a large dataset and be more likely to be regularized.

\begin{table}[!htbp]
\centering
\caption{Datasets Description}
\label{tab:dataset}
\renewcommand{\arraystretch}{1.2}
\resizebox{8.4cm}{!}{%
\begin{tabular}{@{}ccccc@{}}
\toprule
Dataset       & Dimension           & Classes & Training  & Testing  \\ \midrule
MNIST         & $28\times28\times1$ & $10$               & $50000$           & $10000$          \\
SVHN          & $32\times32\times3$ & $10$               & $73257$           & $26032$          \\
CIFAR-10      & $32\times32\times3$ & $10$               & $50000$           & $10000$          \\
CIFAR-100     & $32\times32\times3$ & $100$              & $50000$           & $10000$          \\
Mini-ImageNet & $84\times84\times3$ & $100$              & $50000$           & $10000$          \\ \bottomrule
\end{tabular}
}
\end{table}

\FloatBarrier
\section{Further Exposition on Averaged Predictions} \label{sec:avg_pred_extra}

Our results from \autoref{fig:avg_prediction} show improved accuracy and have a subtle benefit. Under existing protocols discussed in \autoref{sec:related work}, performing $k$ predictions would be $k$ times as expensive. We would expect this to be $< k$ times as expensive in a real-world deployment, as it allows amortizing the communication overhead/latency once for $k$ predictions. It is also well known that batched computations are more compute efficiently, so sending $k$ predictions to perform instead of 1 would be less than $k$ times as much compute on the party running the backbone network $f_W(\cdot)$.
\par 
The reader may rightfully wonder if this decreases the security of our method, as some amount of information is revealed if it is known that the $k$ items all represent the same input with different secrets. We do not expect this to be the case to any meaningful degree, given our results in \autoref{sec:results} showing clustering being ineffective at the extraction of information and even still being robust to omnipotent adversaries that are aware of the training approach, dataset, and class labels. 
\par 
Further, if this was a concern there are strategies that could be employed to further complicate life for the untrusted party. This includes sending additional random/fake prediction tasks, interleaving the $k$ replicates with other prediction tasks and other possible strategies that hide this information. This gets into a game-theoretic exercise beyond the scope of our work and is of limited importance given that we explicitly do not aim for provable security.

\FloatBarrier
\section{Clustering Visual Results} \label{sec:cluster_visual_results}

Below are plots of the true class distribution, followed by the cluster's identified classes, for all datasets. This is most legible on MNIST (\autoref{fig:clustering_all_mnist}), SVHN (\autoref{fig:clustering_all_svhn}), and CIFAR-10 (\autoref{fig:clustering_all_cifar10}) due to the lower number of classes. They show clearly that the true class distribution appears random in the output space when the secret $\boldsymbol{s}$ is not available, and the clusters identified are latching onto a false manifold produced by the model. The results for CIFAR-100 (\autoref{fig:clustering_all_cifar100}) and Mini-ImageNet (\autoref{fig:clustering_all_miniimagenet}) are qualitatively the same, but hard to reader due to the issues in plotting 100 classes makes many different classes ``look'' similar due to the same coloring. 
\par 
In \autoref{clustering-unbound} we show all datasets UMAP plots of the true class labels after unbinding the secret $\boldsymbol{s}$, which shows how dramatically the output space is changed. This is clearest for MNIST, but we find an unusually consistent grouping for all other datasets. Each group has a mix of classes in it that can be separated with reasonable accuracy (as evidenced by our results), but the totality of the behavior is not yet fully understood. The most important result from this though is to demonstrate the significance of obfuscating impact that $\boldsymbol{s}$ has on the manifold of the data.

\begin{figure}[!h]
\centering 
\subfigure[True Class]
{\includegraphics[keepaspectratio, width=0.32\columnwidth]{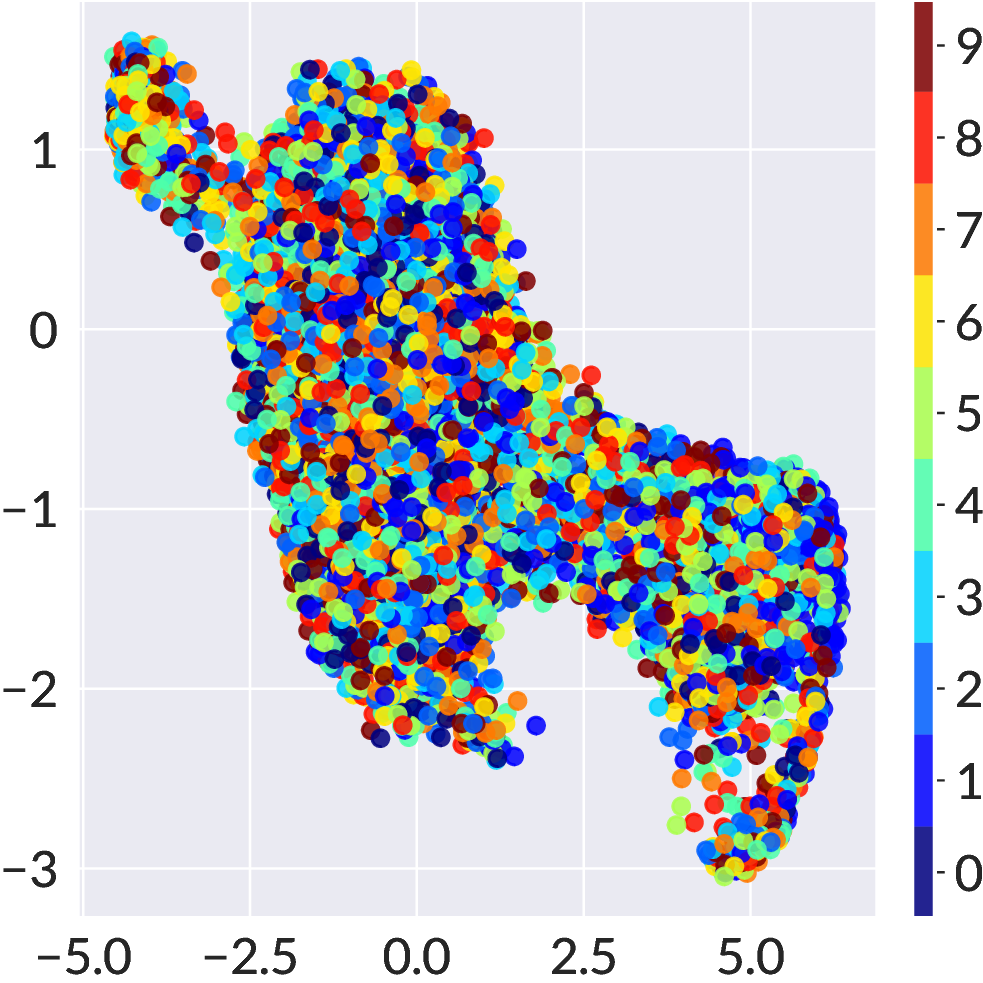}}
\subfigure[Kmeans]
{\includegraphics[keepaspectratio, width=0.32\columnwidth]{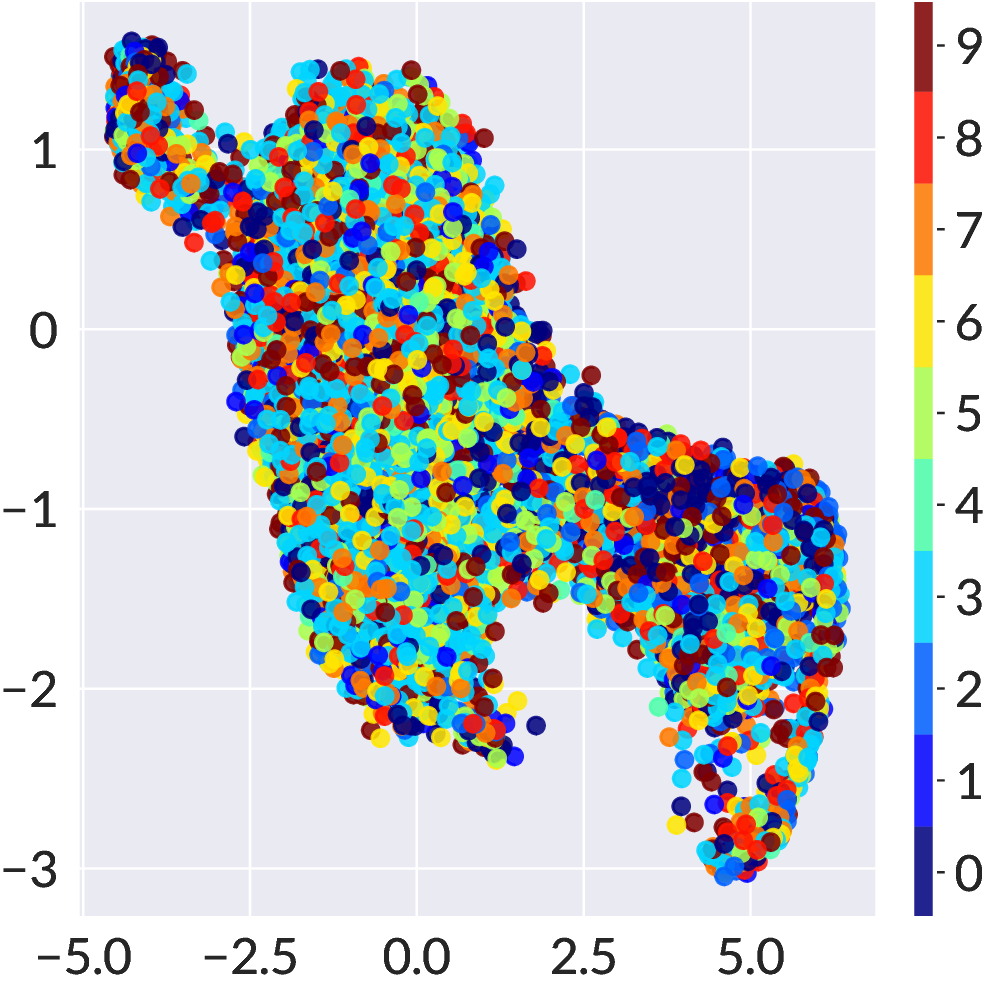}}
\subfigure[Spectral]
{\includegraphics[keepaspectratio, width=0.32\columnwidth]{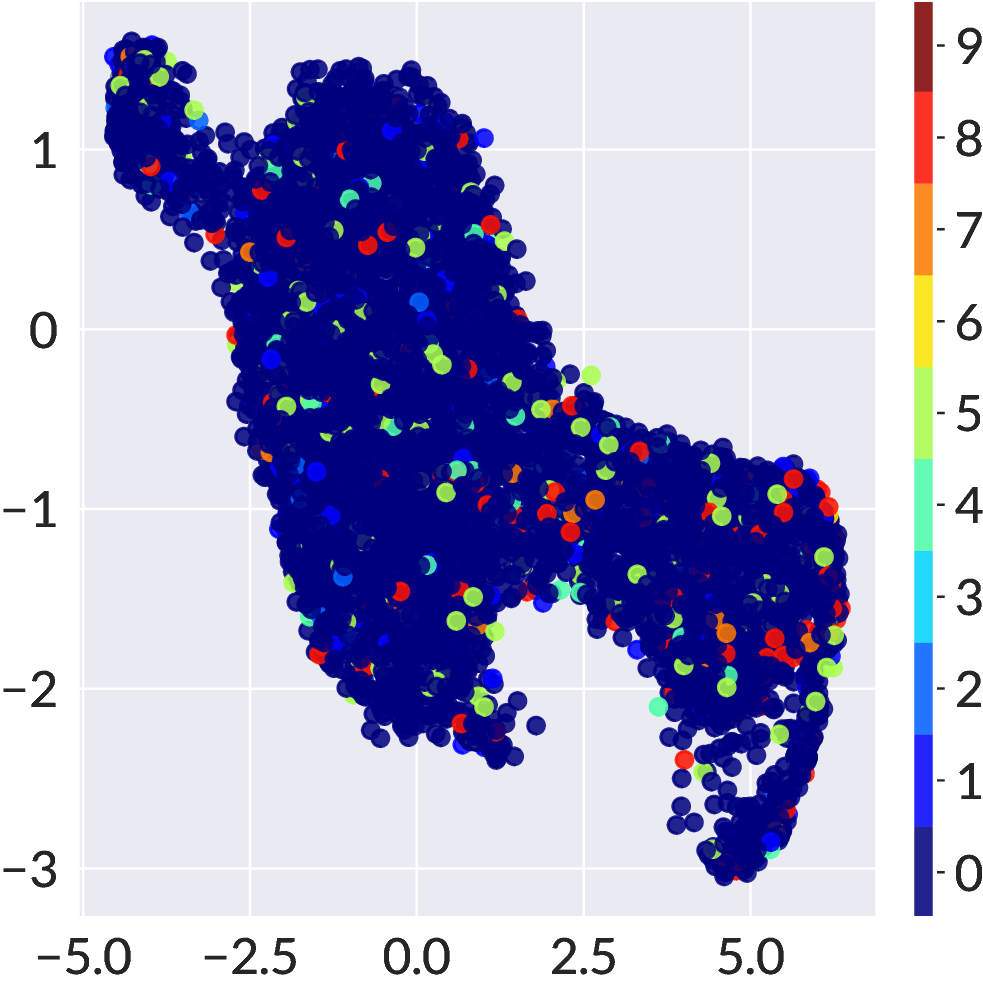}}
\subfigure[GMM]
{\includegraphics[keepaspectratio, width=0.32\columnwidth]{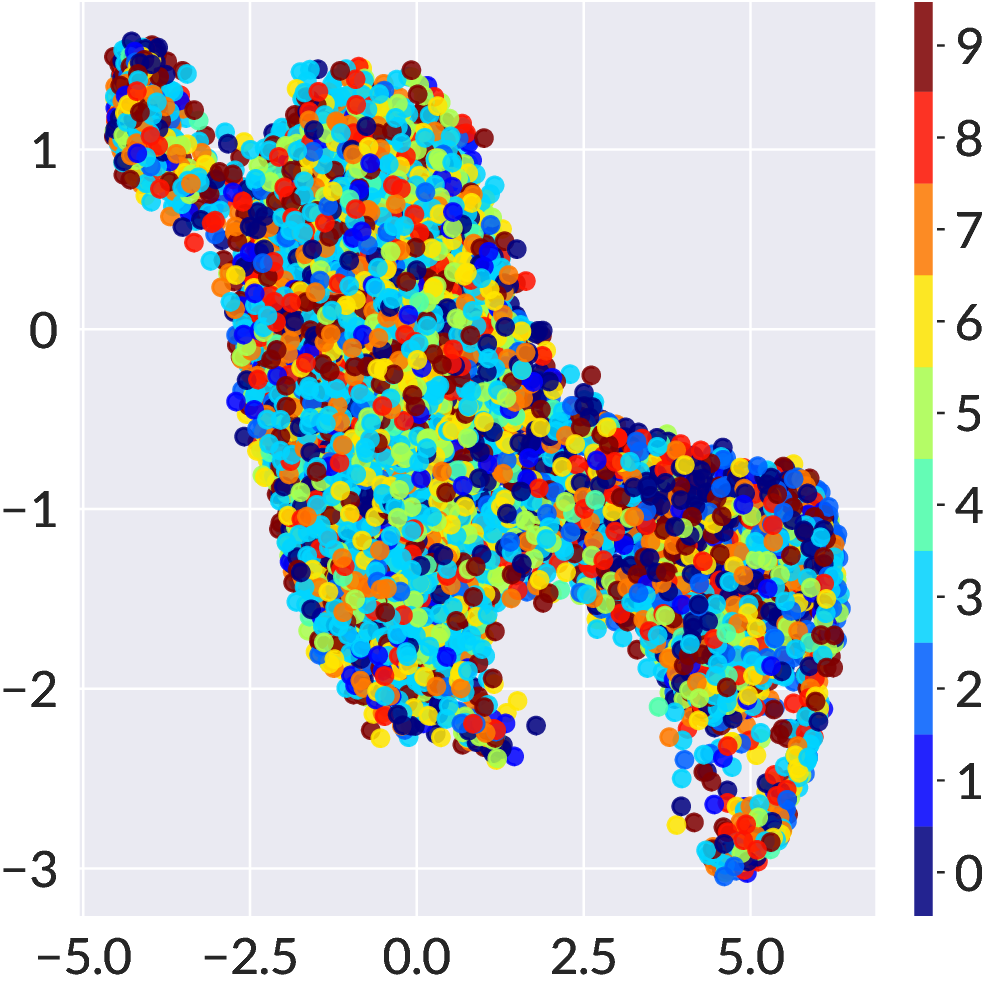}}
\subfigure[Birch]
{\includegraphics[keepaspectratio, width=0.32\columnwidth]{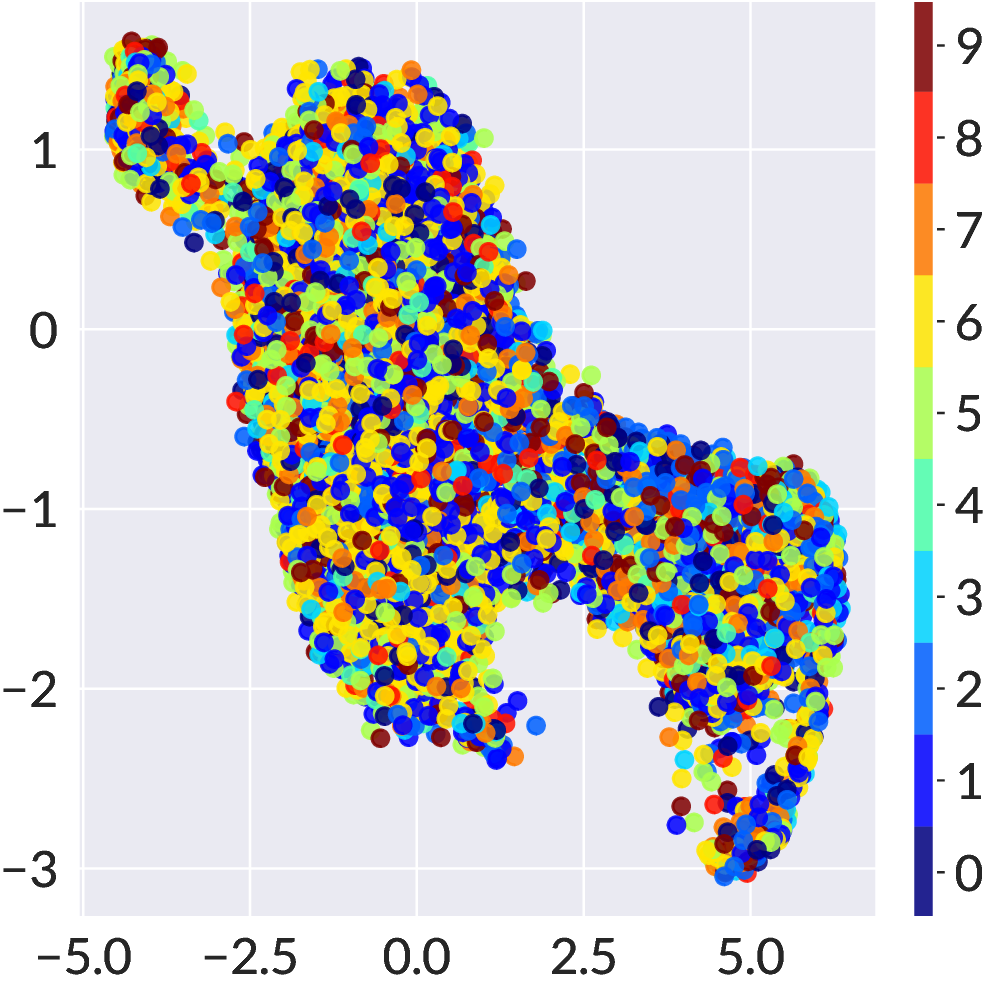}}
\subfigure[HDBSCAN]
{\includegraphics[keepaspectratio, width=0.32\columnwidth]{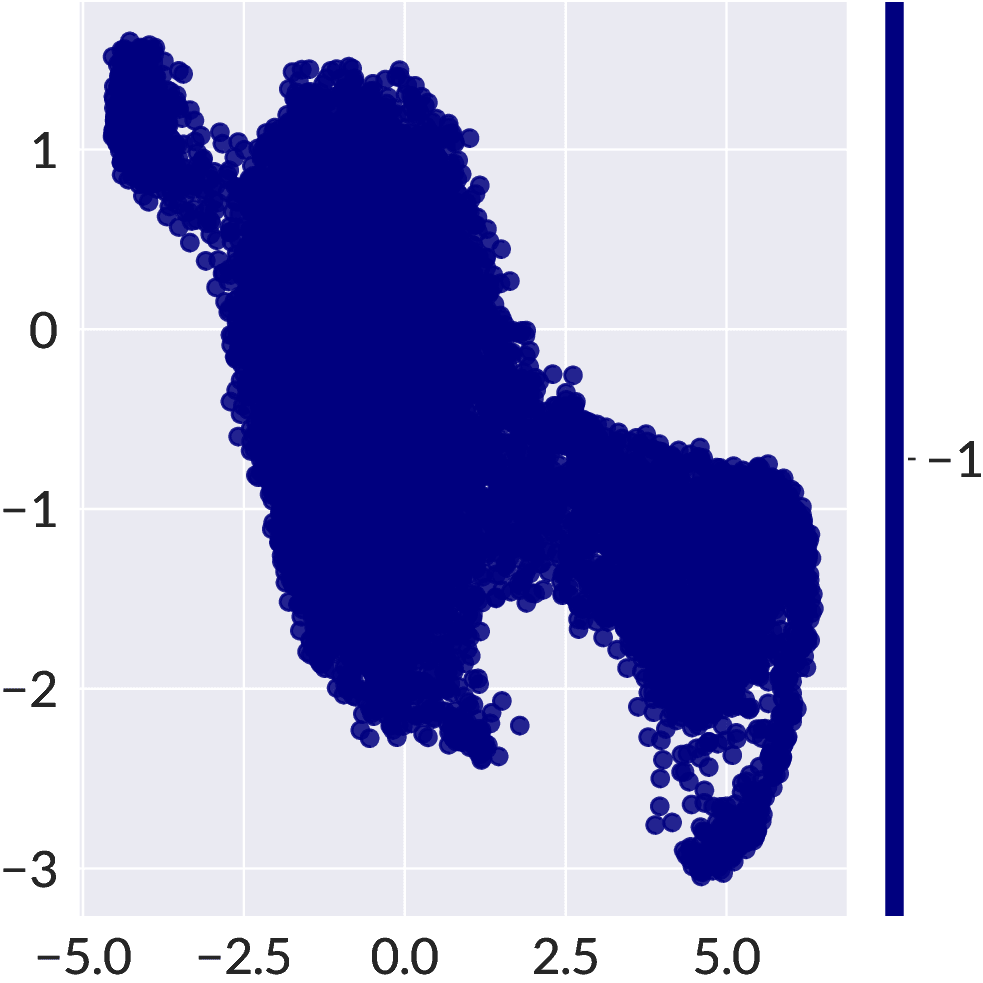}}
\caption{MNIST2 UMAP 2D Embedding representation of the clustering of the output of the U-Net.}
\label{fig:clustering_all_mnist}
\end{figure}

\begin{figure}[!h]
\centering 
\subfigure[True Class]
{\includegraphics[keepaspectratio, width=0.32\columnwidth]{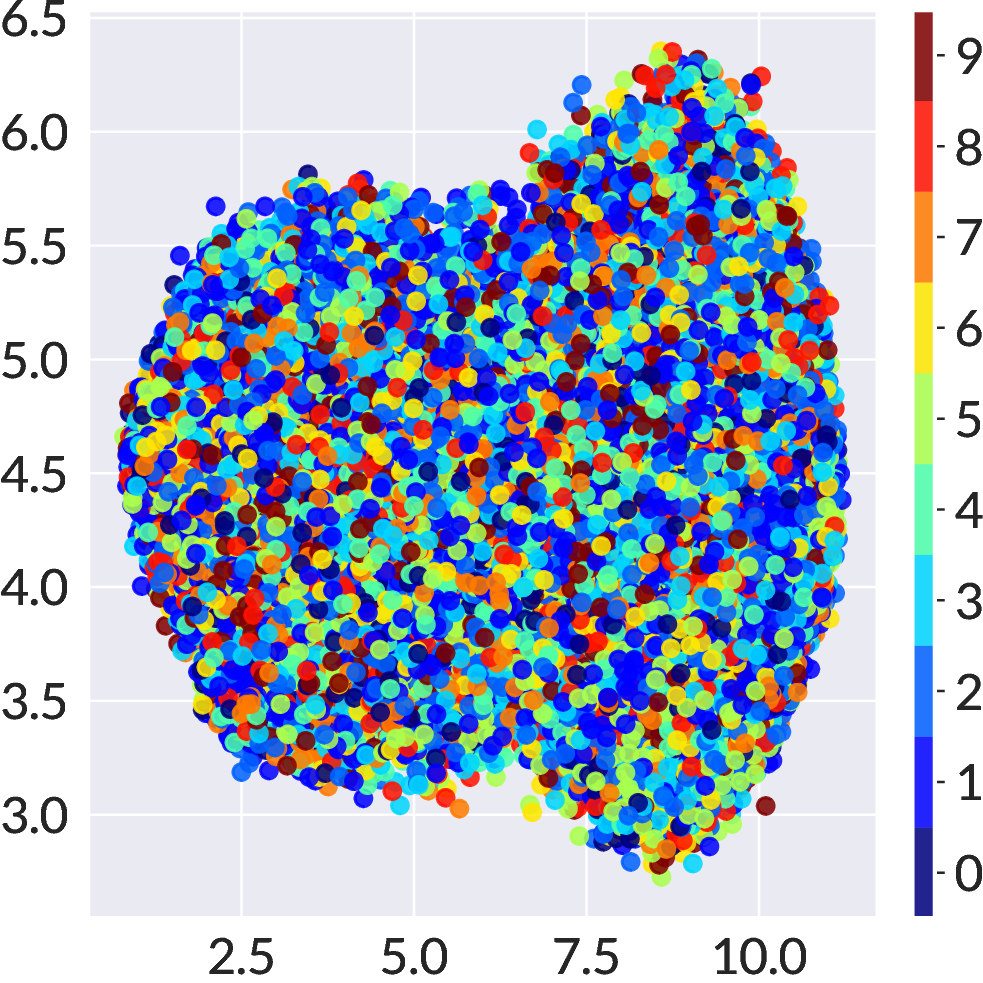}}
\subfigure[Kmeans]
{\includegraphics[keepaspectratio, width=0.32\columnwidth]{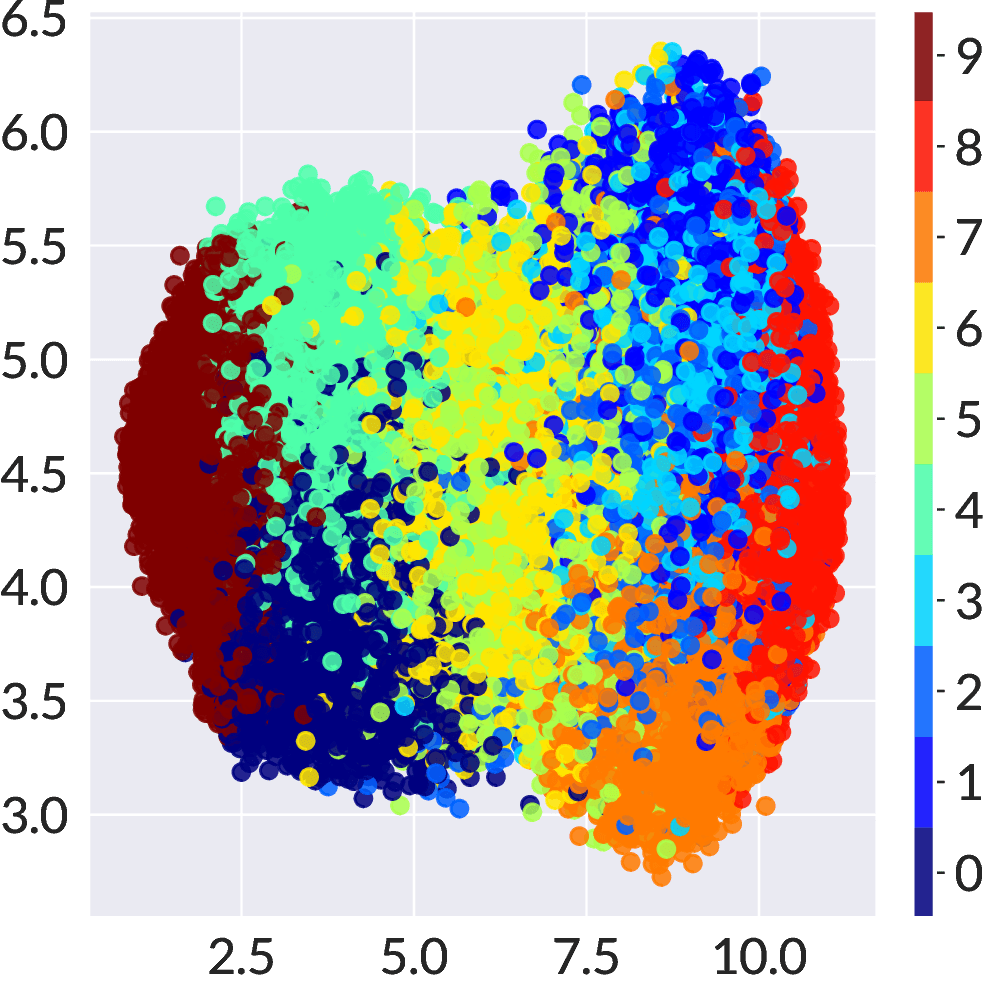}}
\subfigure[Spectral]
{\includegraphics[keepaspectratio, width=0.32\columnwidth]{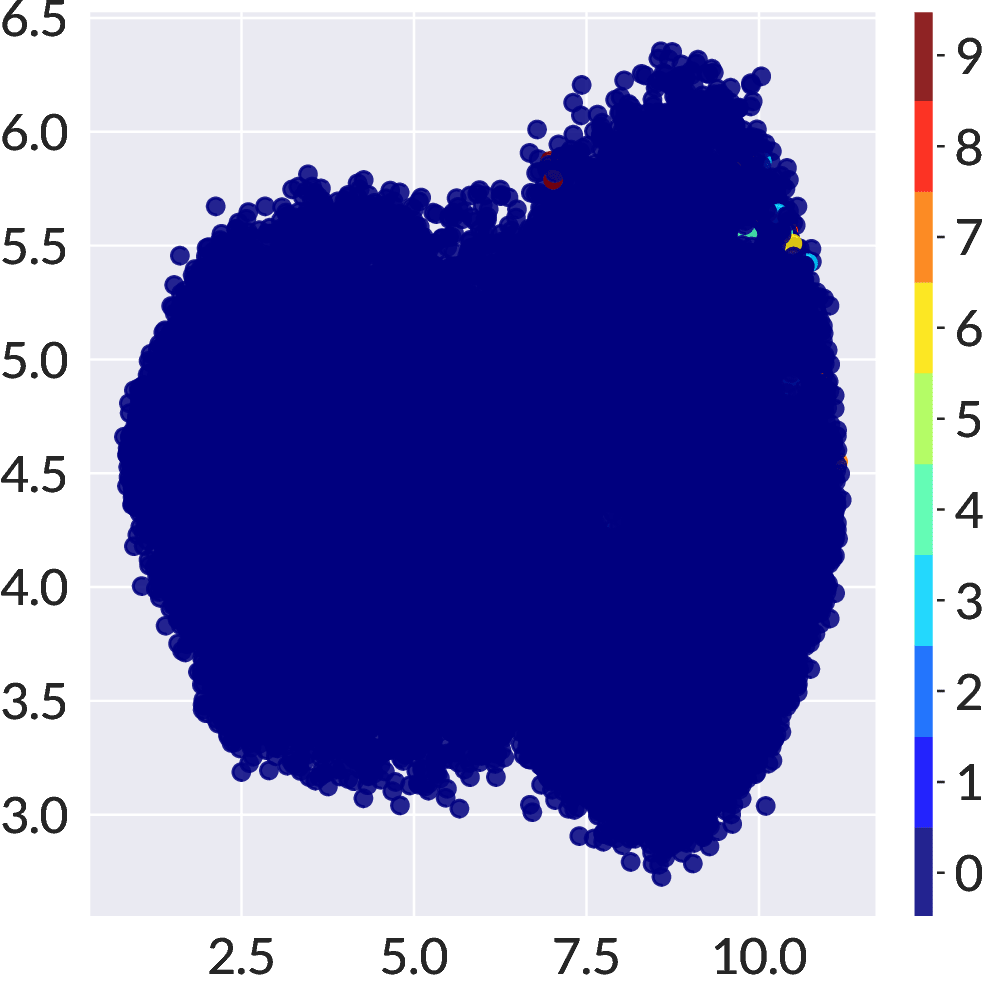}}
\subfigure[GMM]
{\includegraphics[keepaspectratio, width=0.32\columnwidth]{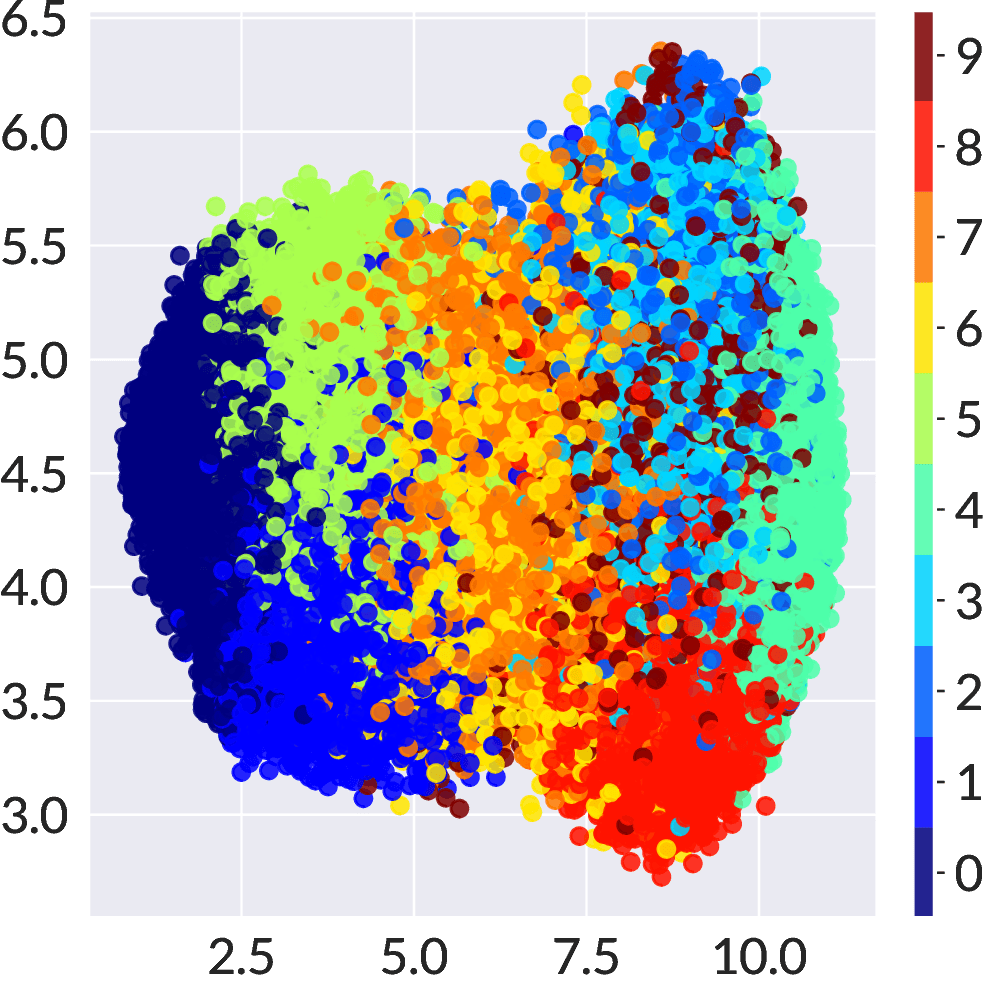}}
\subfigure[Birch]
{\includegraphics[keepaspectratio, width=0.32\columnwidth]{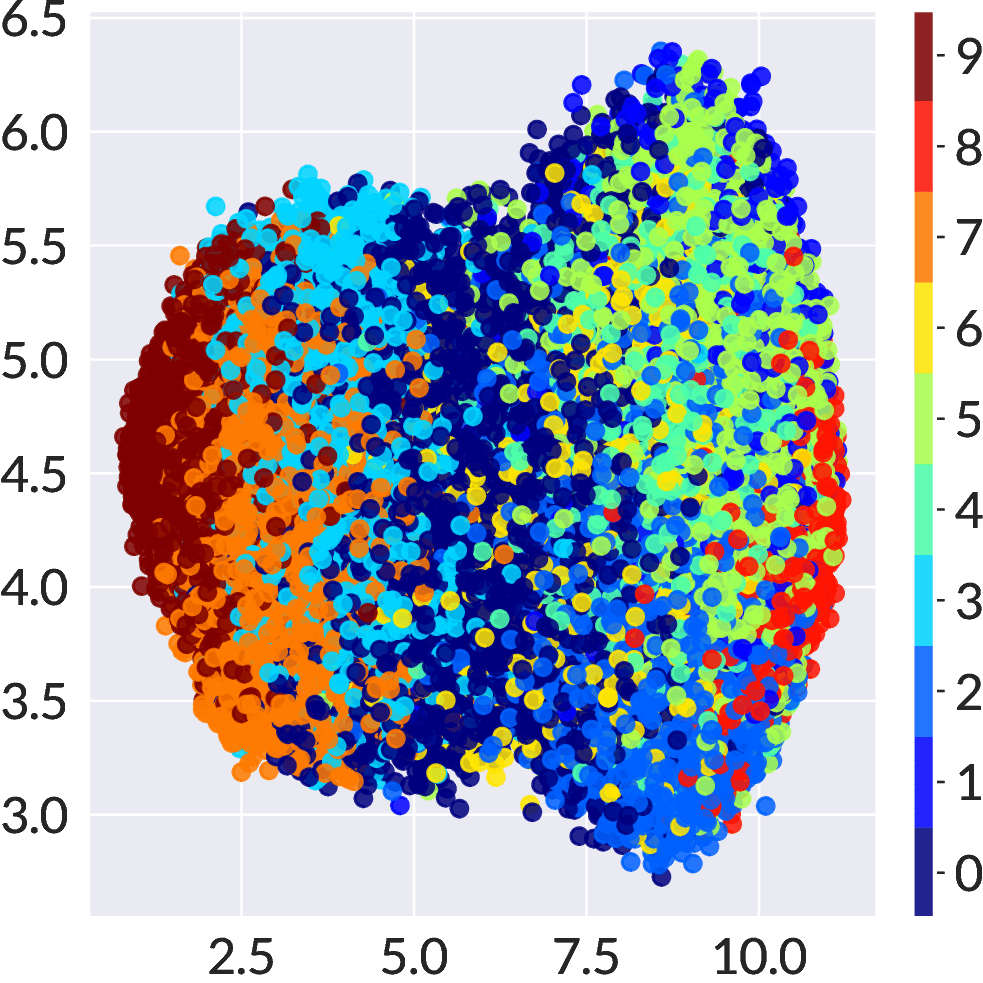}}
\subfigure[HDBSCAN]
{\includegraphics[keepaspectratio, width=0.32\columnwidth]{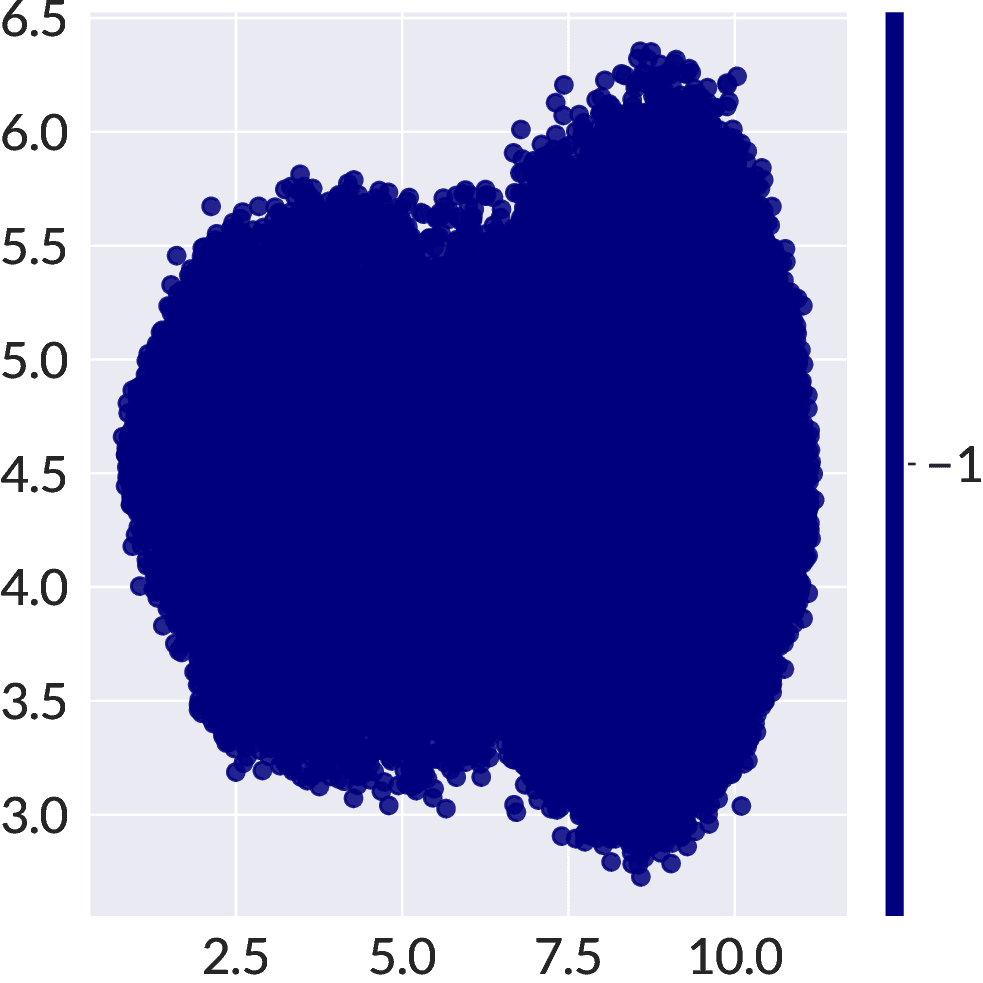}}
\caption{MNIST2 UMAP 2D Embedding representation of the clustering of the output of the U-Net.}
\label{fig:clustering_all_svhn}
\end{figure}

\begin{figure}[!h]
\centering 
\subfigure[True Class]
{\includegraphics[keepaspectratio, width=0.32\columnwidth]{figure/cifar-10/1_true.png}}
\subfigure[Kmeans]
{\includegraphics[keepaspectratio, width=0.32\columnwidth]{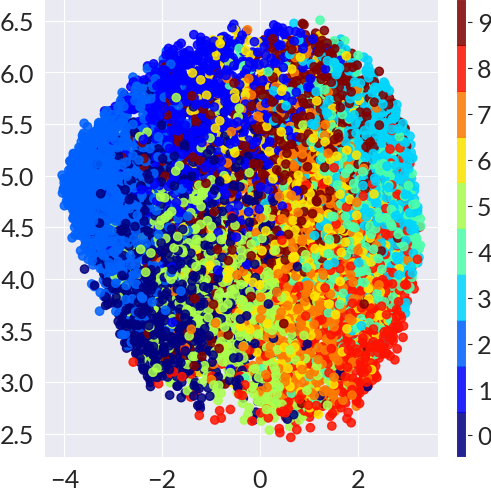}}
\subfigure[Spectral]
{\includegraphics[keepaspectratio, width=0.32\columnwidth]{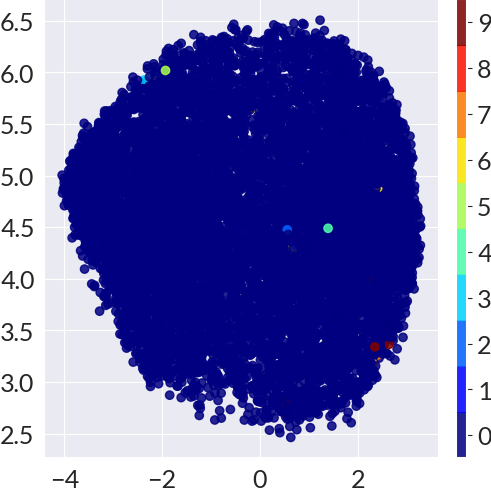}}
\subfigure[GMM]
{\includegraphics[keepaspectratio, width=0.32\columnwidth]{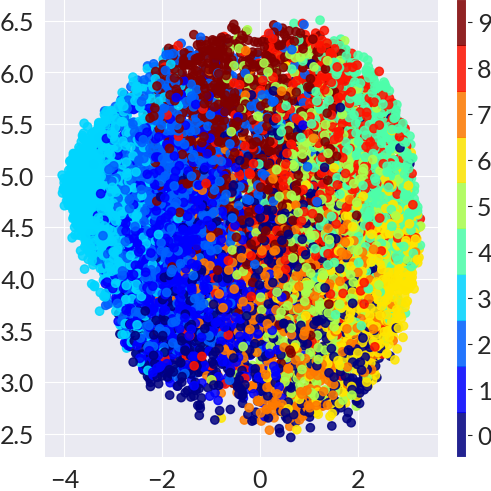}}
\subfigure[Birch]
{\includegraphics[keepaspectratio, width=0.32\columnwidth]{figure/cifar-10/5_birch.png}}
\subfigure[HDBSCAN]
{\includegraphics[keepaspectratio, width=0.32\columnwidth]{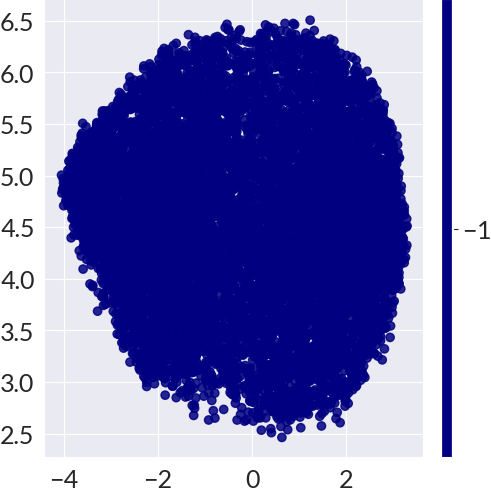}}
\caption{CIFAR-10 UMAP 2D Embedding representation of the clustering of the output of the U-Net.}
\label{fig:clustering_all_cifar10}
\end{figure}

\begin{figure}[!h]
\centering 
\subfigure[True Class]
{\includegraphics[keepaspectratio, width=0.32\columnwidth]{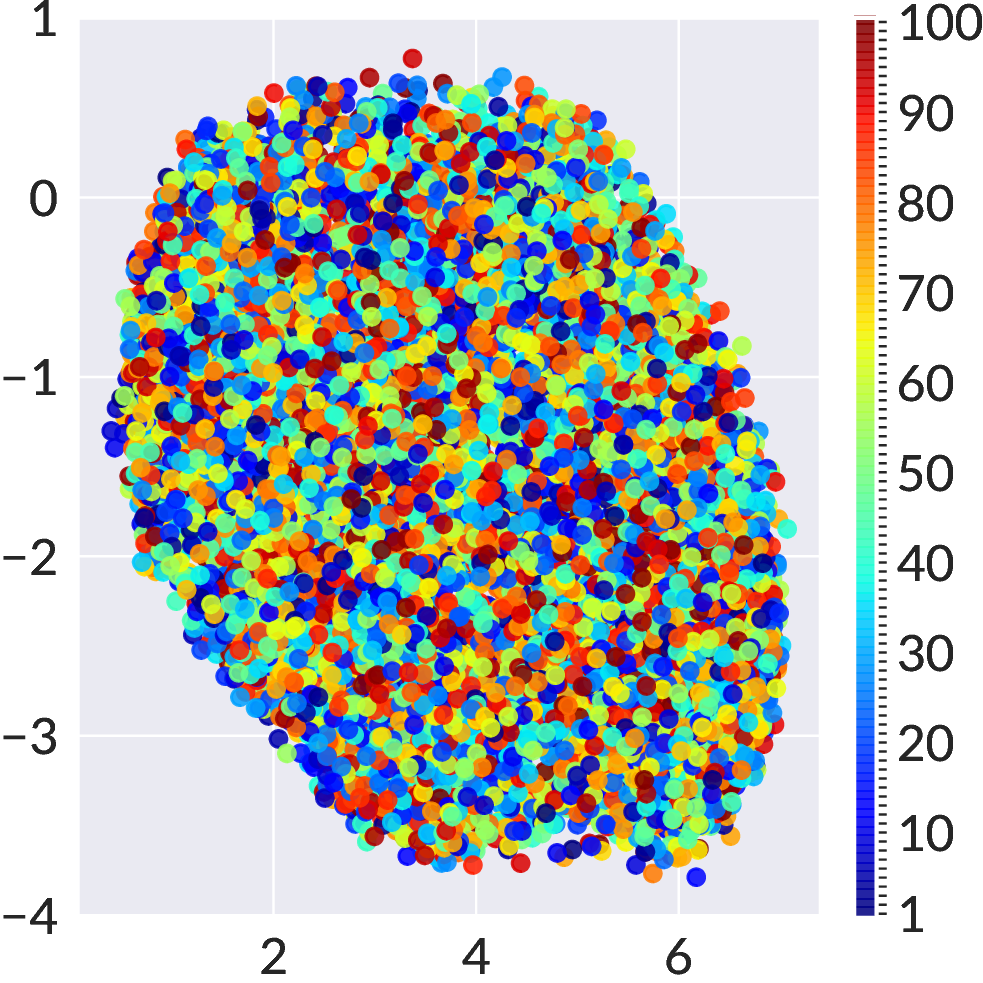}}
\subfigure[Kmeans]
{\includegraphics[keepaspectratio, width=0.32\columnwidth]{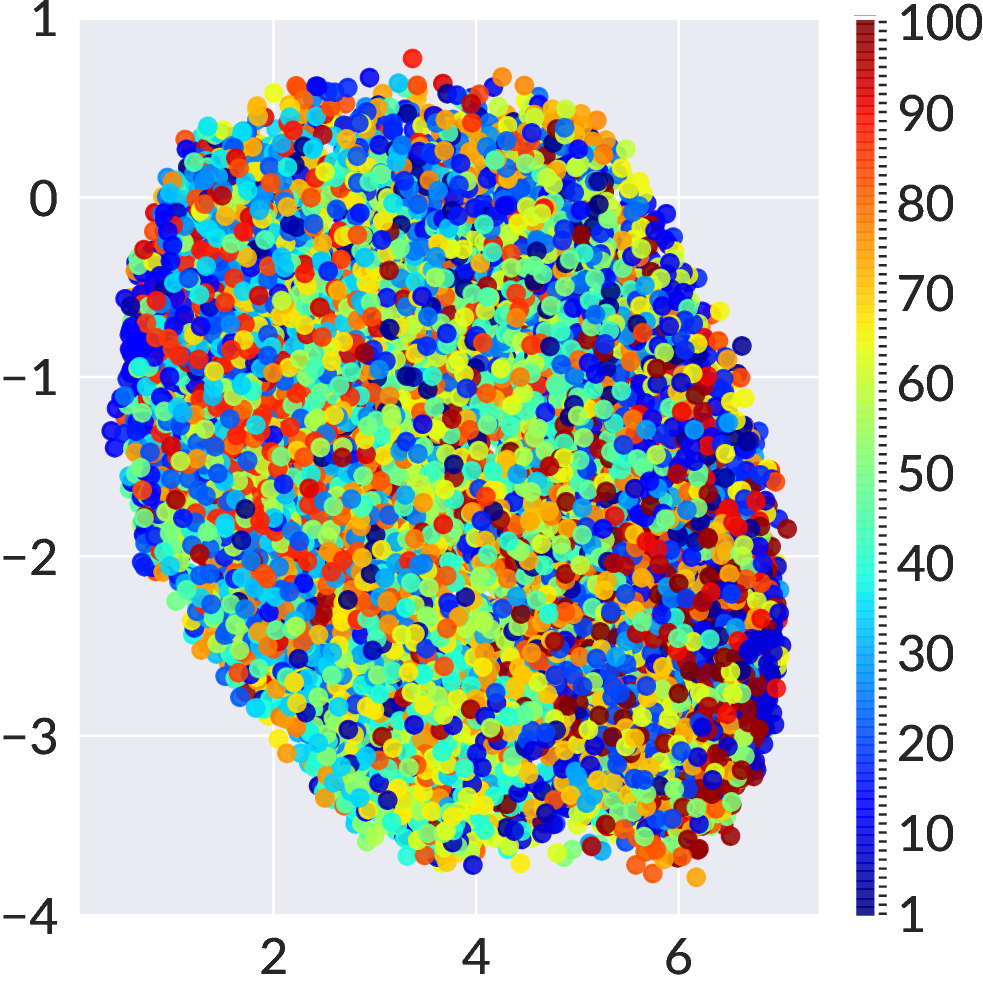}}
\subfigure[Spectral]
{\includegraphics[keepaspectratio, width=0.32\columnwidth]{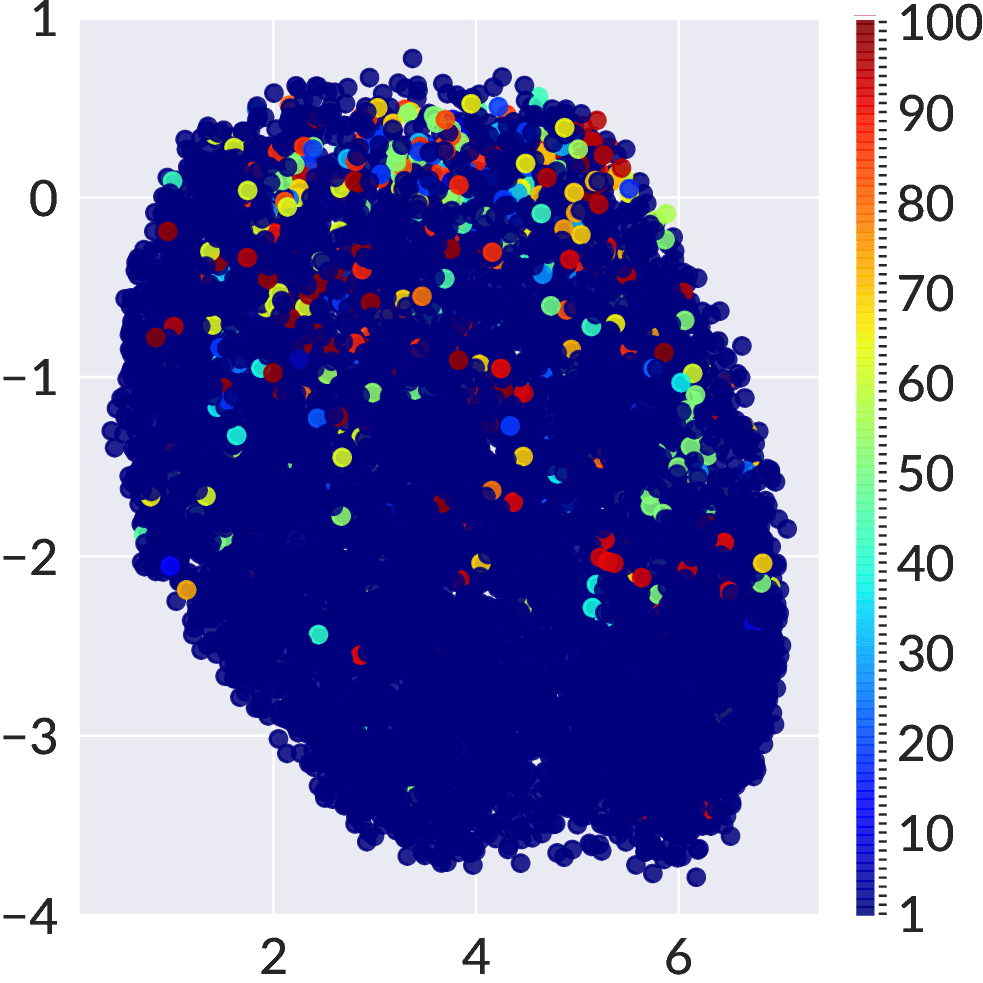}}
\subfigure[GMM]
{\includegraphics[keepaspectratio, width=0.32\columnwidth]{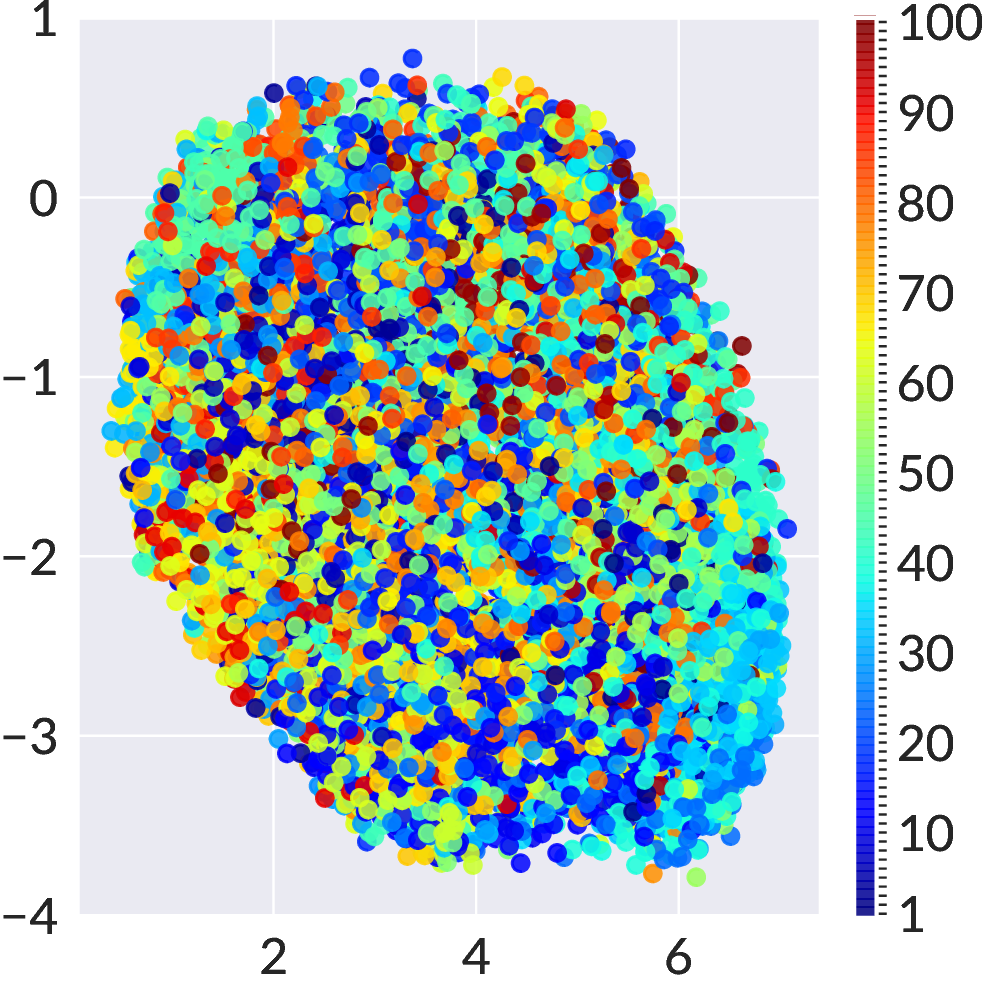}}
\subfigure[Birch]
{\includegraphics[keepaspectratio, width=0.32\columnwidth]{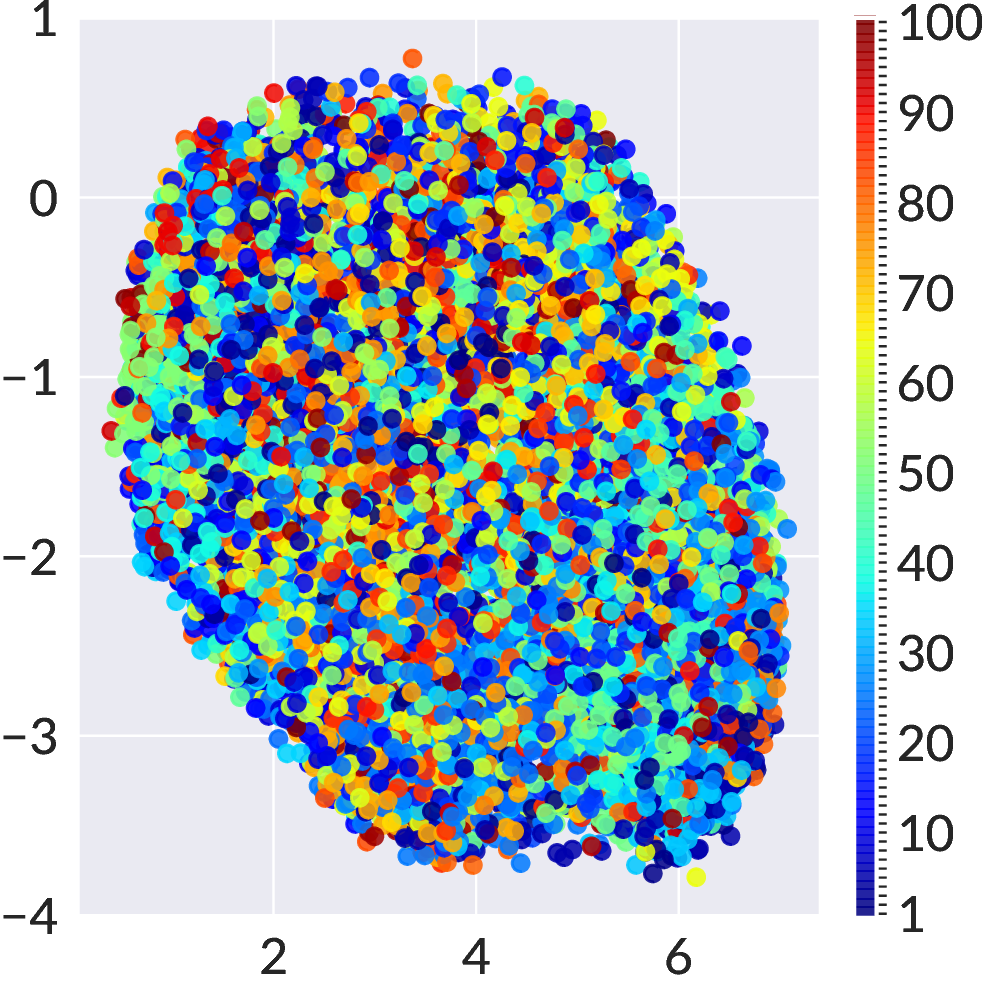}}
\subfigure[HDBSCAN]
{\includegraphics[keepaspectratio, width=0.32\columnwidth]{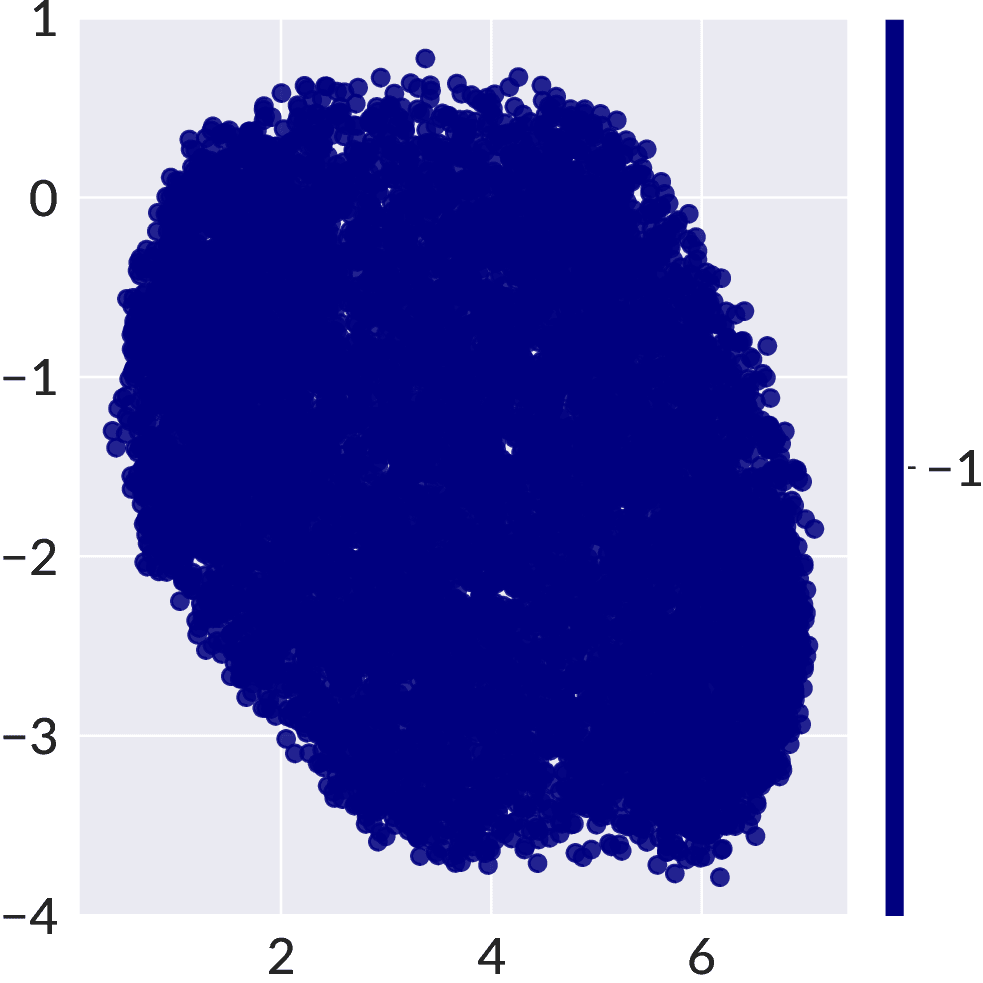}}
\caption{CIFAR-100 UMAP 2D Embedding representation of the clustering of the output of the U-Net.}
\label{fig:clustering_all_cifar100}
\end{figure}

\begin{figure}[!h]
\centering 
\subfigure[True Class]
{\includegraphics[keepaspectratio, width=0.32\columnwidth]{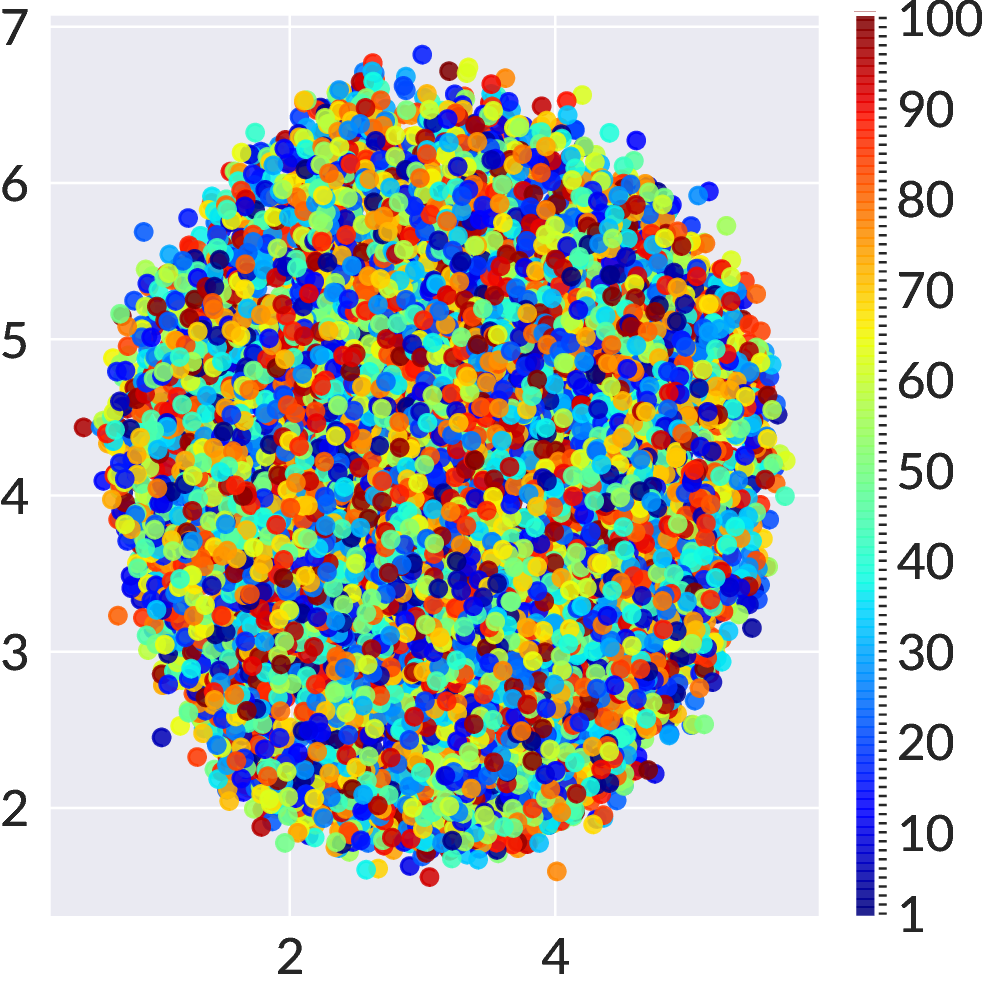}}
\subfigure[Kmeans]
{\includegraphics[keepaspectratio, width=0.32\columnwidth]{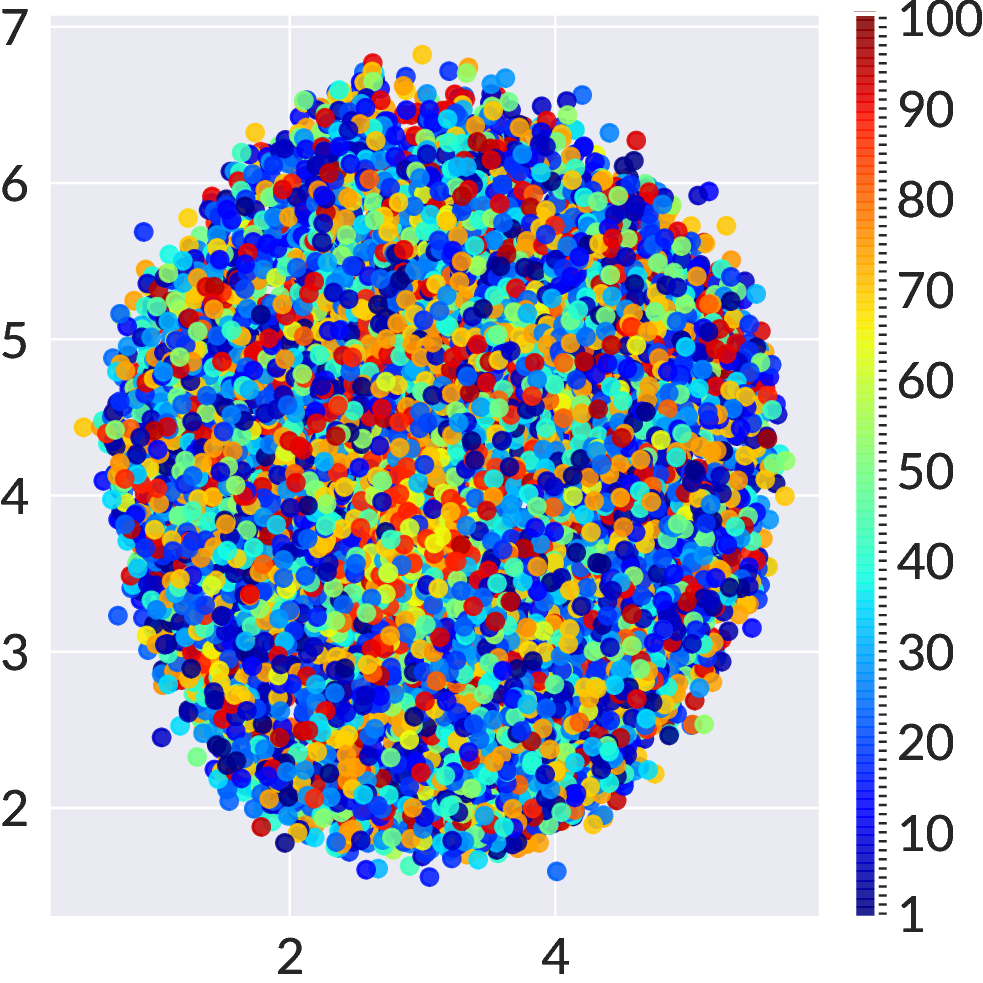}}
\subfigure[Spectral]
{\includegraphics[keepaspectratio, width=0.32\columnwidth]{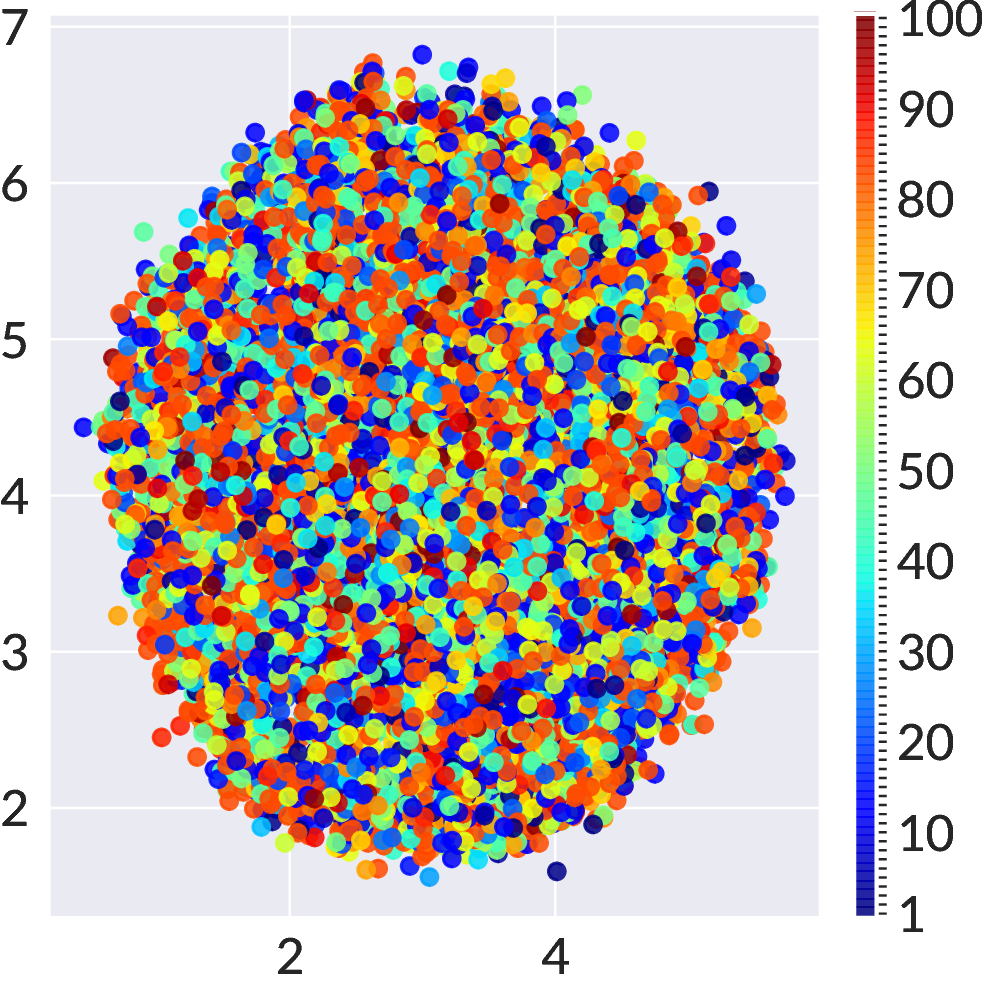}}
\subfigure[GMM]
{\includegraphics[keepaspectratio, width=0.32\columnwidth]{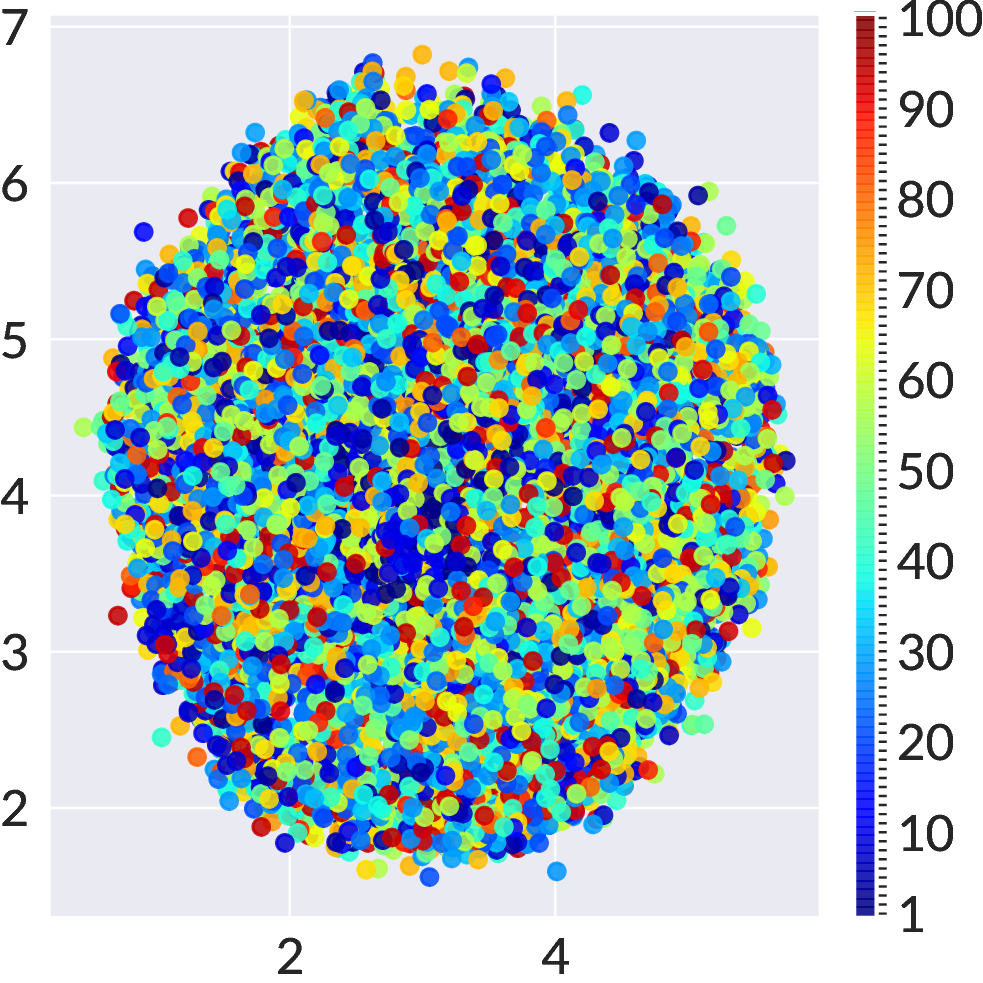}}
\subfigure[Birch]
{\includegraphics[keepaspectratio, width=0.32\columnwidth]{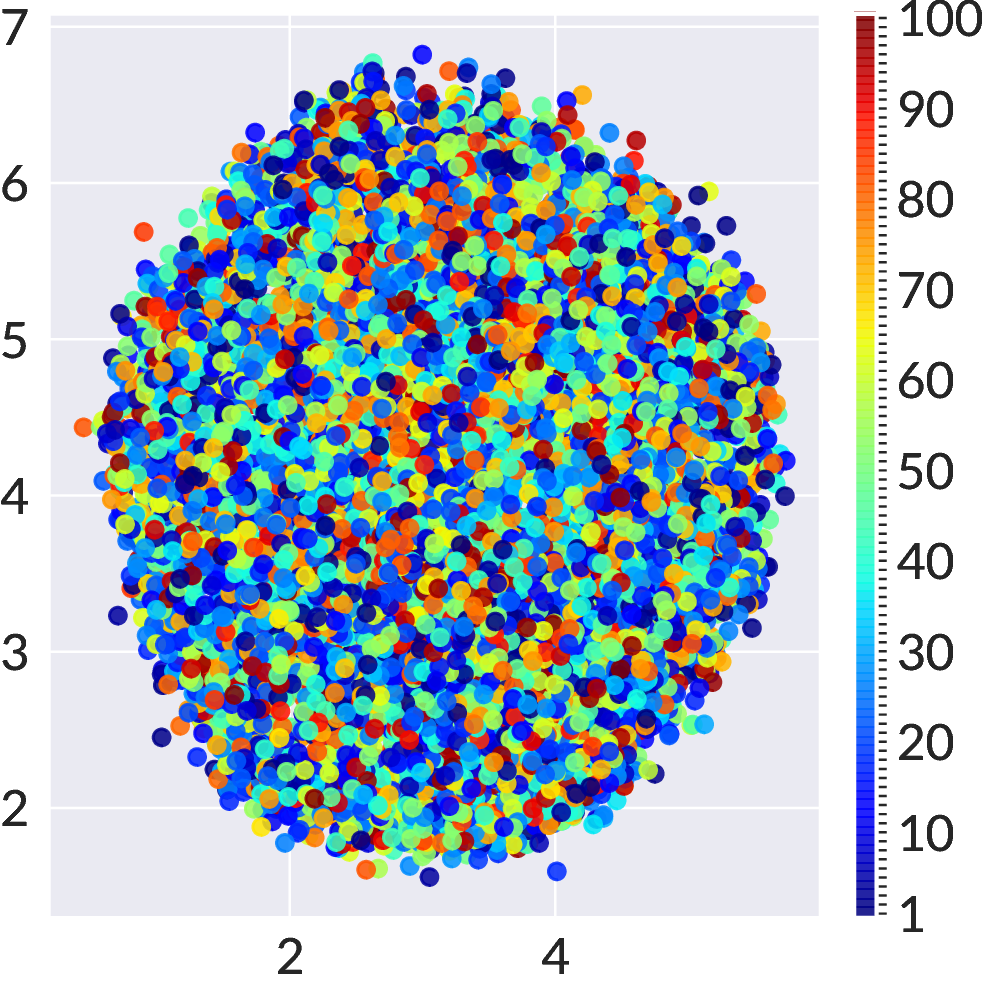}}
\subfigure[HDBSCAN]
{\includegraphics[keepaspectratio, width=0.32\columnwidth]{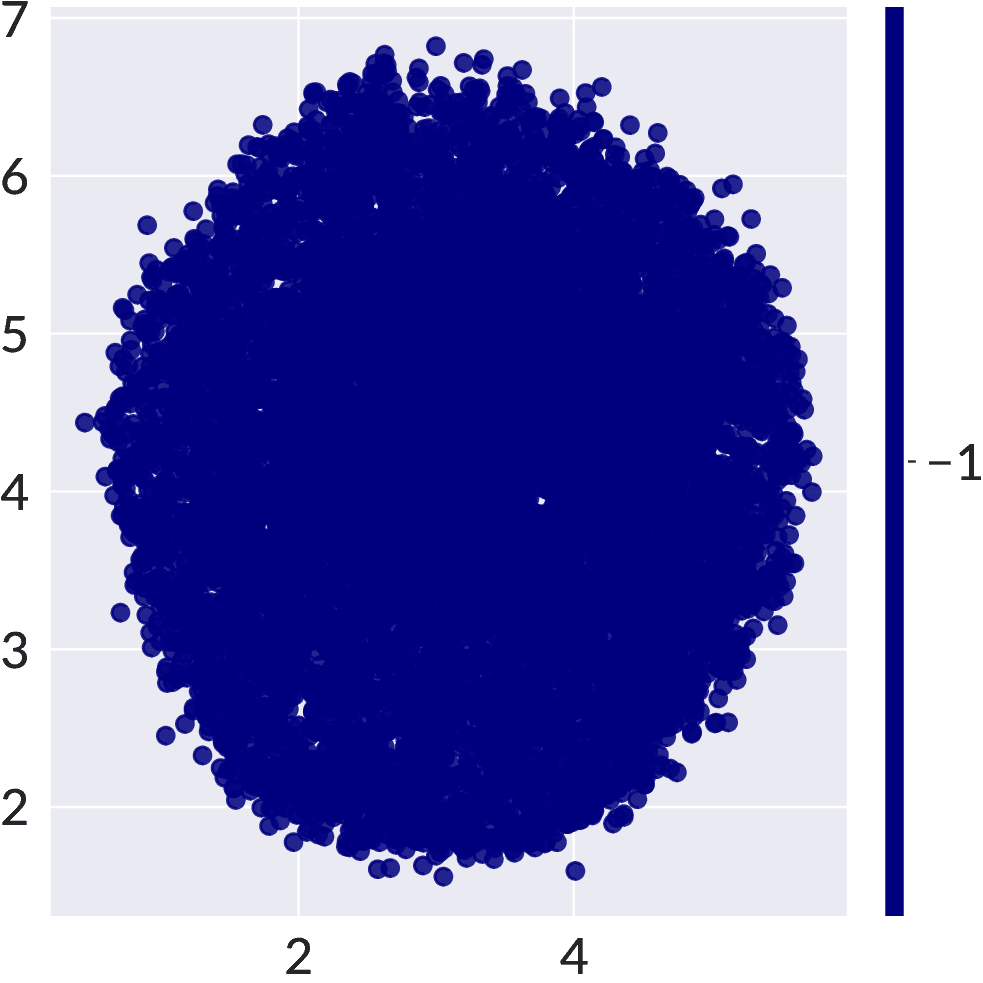}}
\caption{Mini-ImageNet UMAP 2D Embedding representation of the clustering of the output of the U-Net.}
\label{fig:clustering_all_miniimagenet}
\end{figure}

\begin{figure}[!t]
\centering 
\subfigure[MNIST]
{\includegraphics[keepaspectratio, width=0.32\columnwidth]{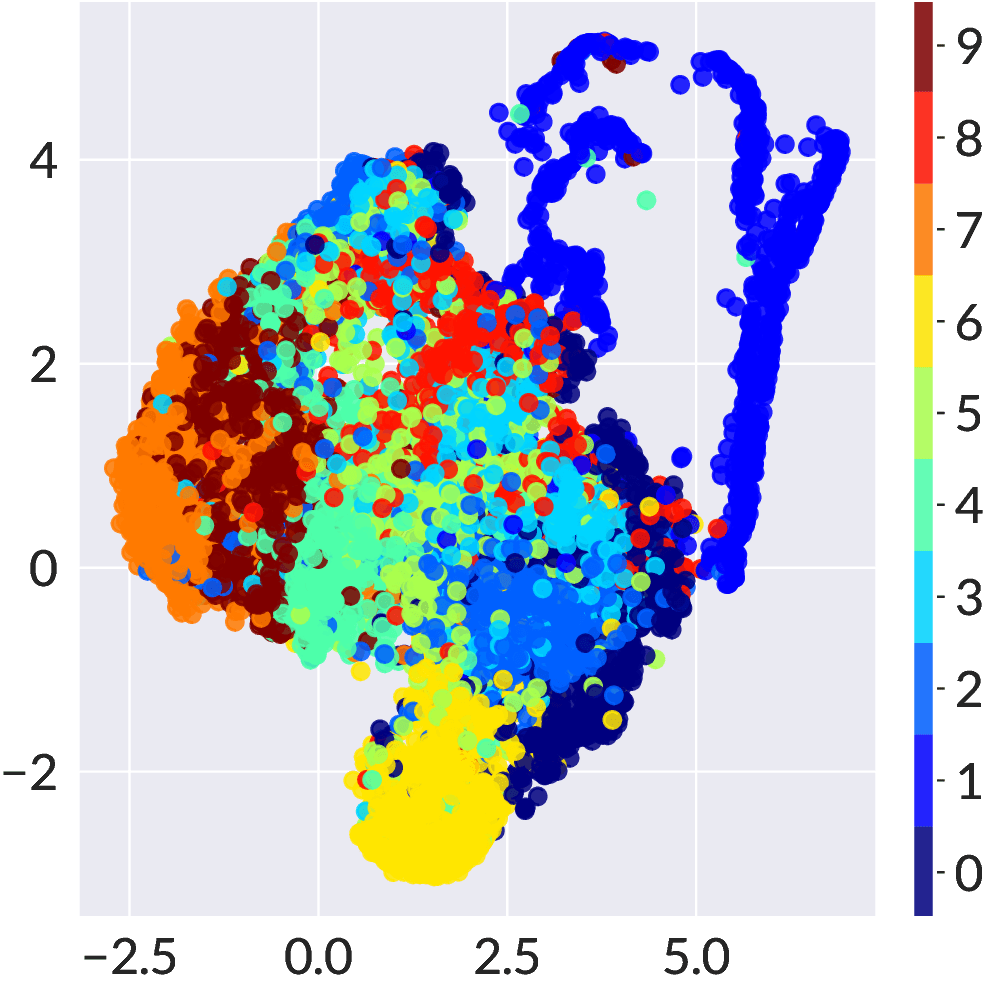}}
\subfigure[SVHN]
{\includegraphics[keepaspectratio, width=0.32\columnwidth]{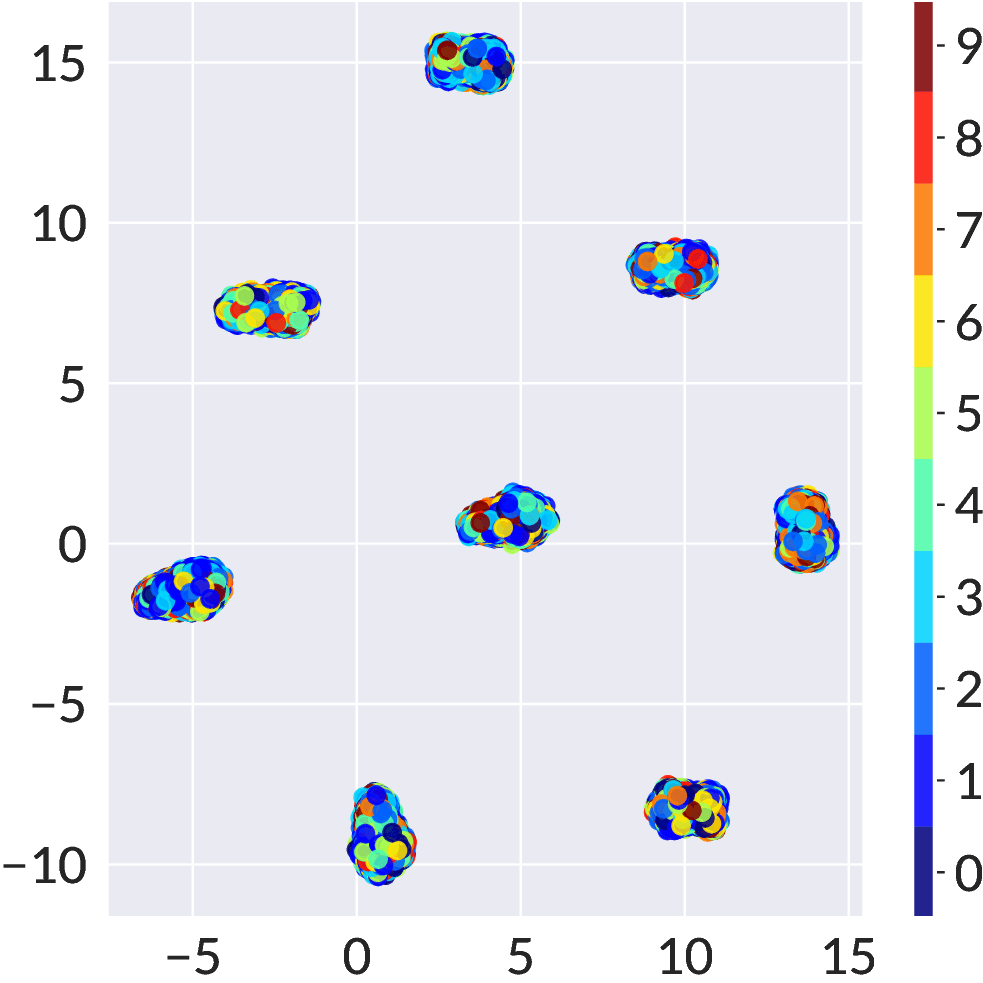}}
\subfigure[CIFAR-10]
{\includegraphics[keepaspectratio, width=0.32\columnwidth]{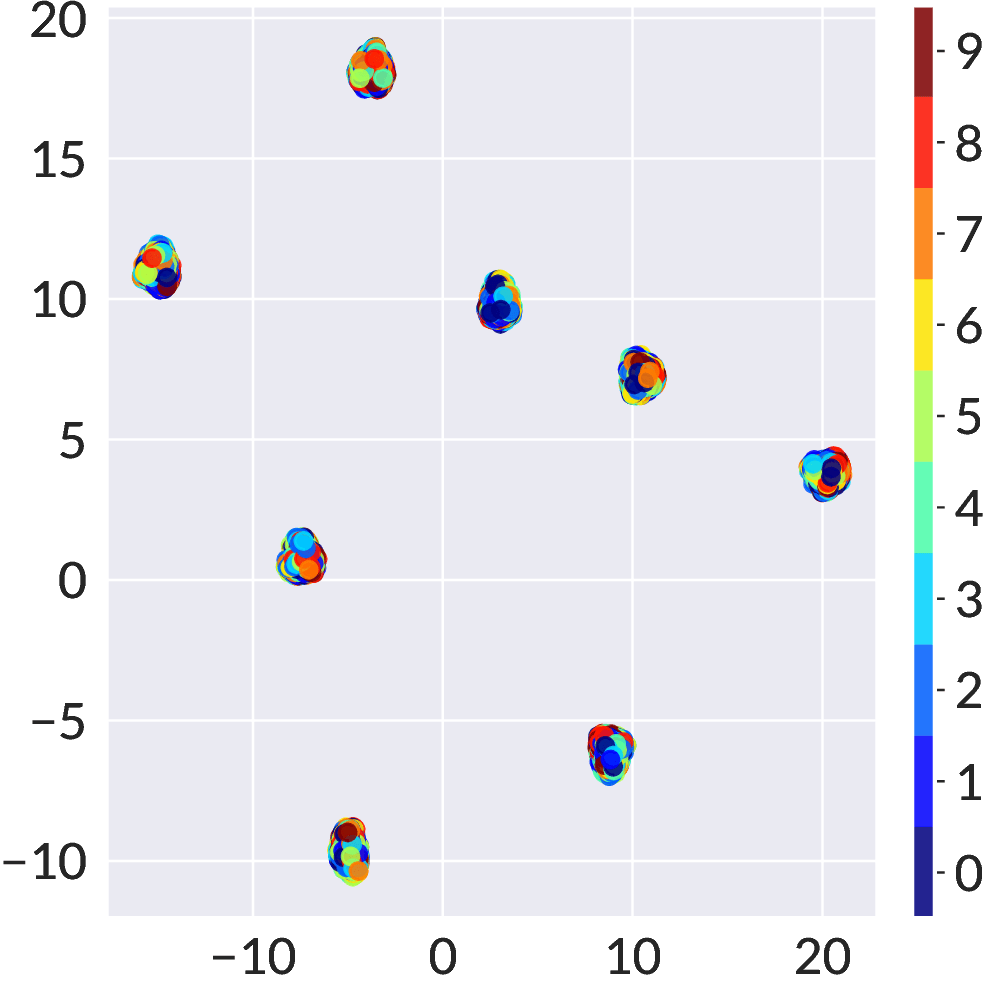}}
\subfigure[CIFAR-100]
{\includegraphics[keepaspectratio, width=0.32\columnwidth]{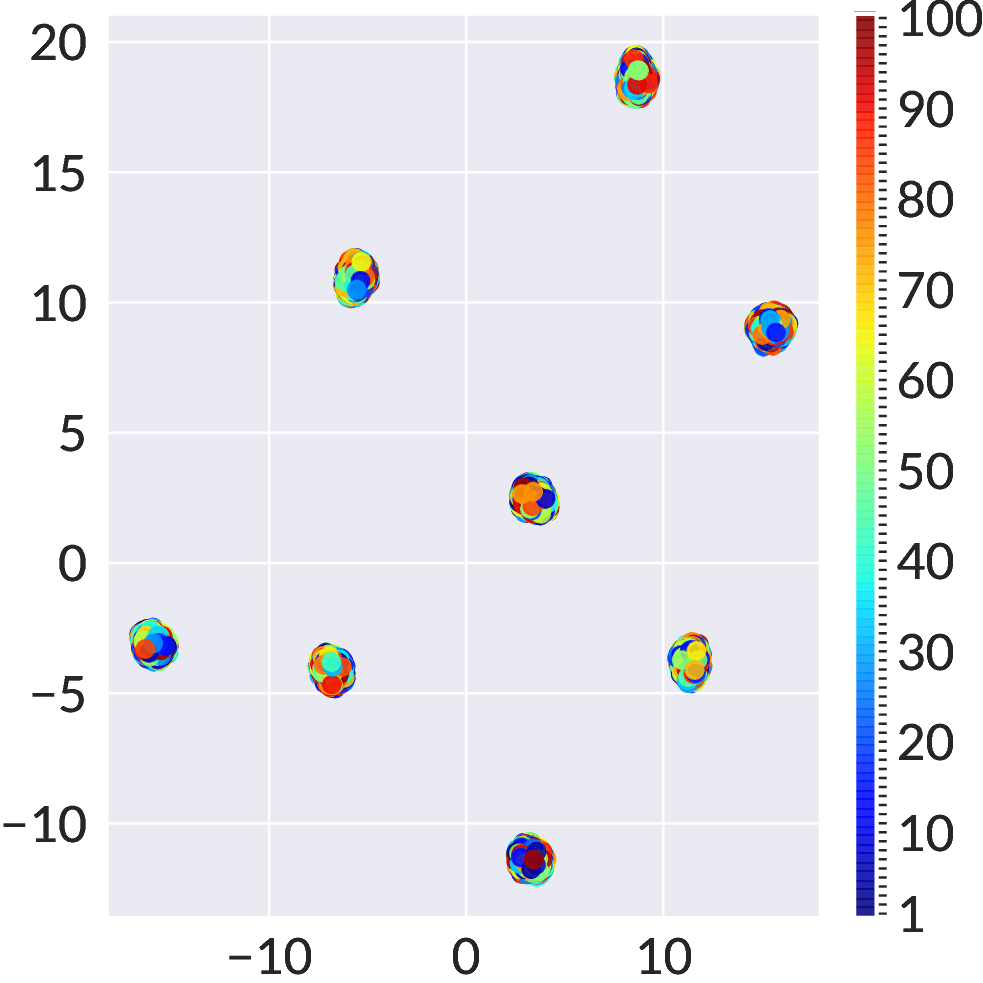}}
\subfigure[Mini-ImageNet]
{\includegraphics[keepaspectratio, width=0.32\columnwidth]{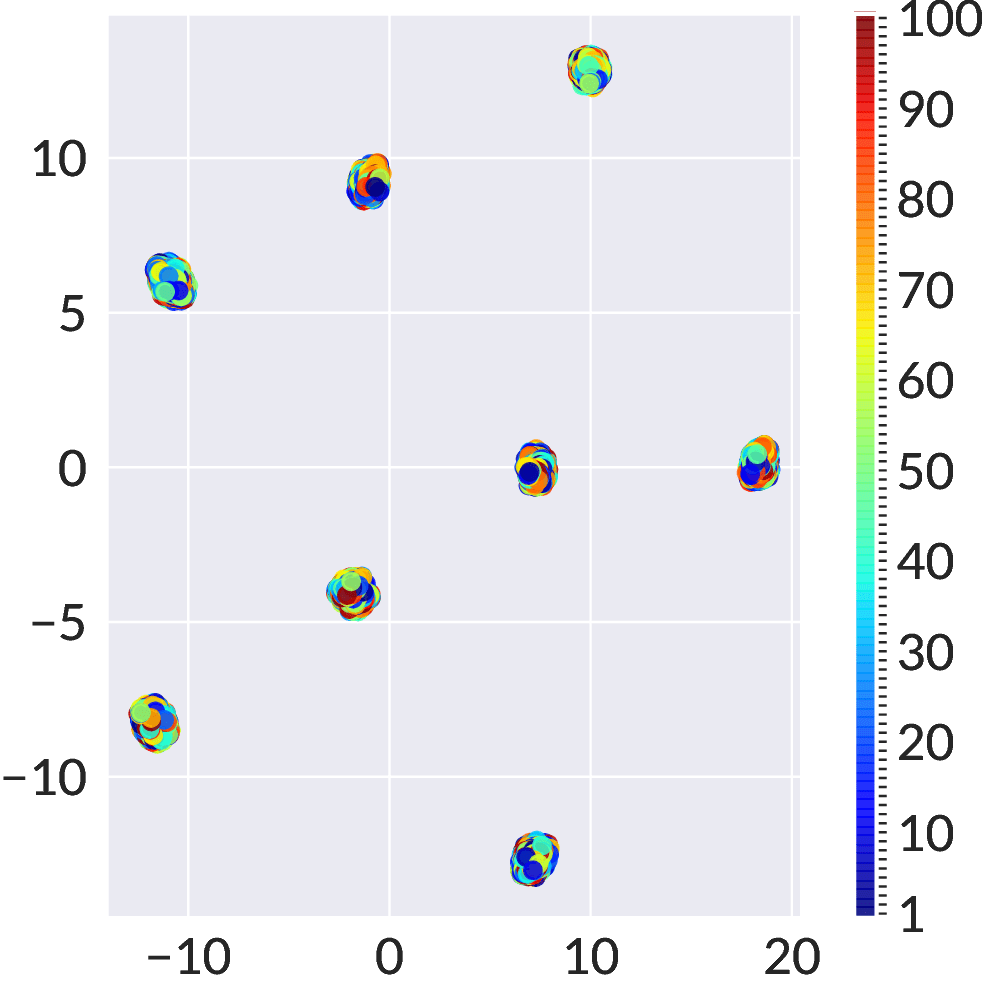}}
\caption{UMAP embeddings of output $r \bind s^{\dagger}$, showing what the space looks like once the secret is applied to extract the output. We can see on MNIST that the classes are clustered well but grouped together. This appears to be a large part of what happens to the sub-cluster that occurs on other datasets, where the secret is consistently placing the populations into one of eight groups. The consistency of the eight populations is not yet known but does show how the binding/unbinding dramatically changes the characteristics of the space.}
\label{clustering-unbound}
\end{figure}

\FloatBarrier
\subsection{Ablation Study} \label{sec:ablation}
We have shown our \shortNameTxt is 290-5000$\times$ faster than alternative options, empirically robust to subversion against unreasonable powerful adversaries, and maintains high predictive accuracy. We now perform several ablations that demonstrate all the components to our design are needed to obtain our strong results. We will perform ablation tests using ResNet-50~\citep{b7} and ReZero~\citep{b5} architectures instead of U-Net, where the secret $\boldsymbol{s}$ is projected down to match the smaller output dimension, we will show the importance of maintaining input/output size of $f_W(\cdot)$. Alternative VSA options including standard HRR, vector-derived transformation binding (VTB)~\citep{b6}, and our own improved VTB (iVTB) demonstrate the importance of our 2D HRR approach to maintaining the symbolic properties in a manner that can be extracted after the CNN. We also test alternative encoding of the 2D structure by using a space-filling Hilbert curve with 1D HRR as an approximation of the total 2D structure. We briefly summarize how each ablation type is performed. Visual examples of how HRR, VTB, iVTB, and Hilbert impact the encoding/decoding are in \autoref{sec:ablation_visual_results}.

\textbf{ReZero/ResNet:} A ReZero/ResNet architecture with 50 layers is used as $f_W(\cdot)$, resulting in an output size smaller than the input. A two layer fully-connected network is used to project $\boldsymbol{s}$ down to the output shape of the network.
\par 
\textbf{HRR:} The standard HRR with a 1D FFT is used and initialized using \citep{hrrxml}.
\par 
\textbf{VTB:} The VTB is used instead of HRR, as described by \citep{b6}. VTB replaces the FFT operation with a tiled sparse block-diagonal matrix multiplication that has similar properties as HRR. 
\par 
\textbf{iVTB:} VTB, but improved by choosing the secrets to be block orthogonal to force properties of the VTB to be exactly true, rather than true in expectation. First, a tensor is randomly sampled from a normal distribution and then QR decomposition is applied in the tensor. The orthogonal part of the QR decomposition is used as the secret $\boldsymbol{s}$, which improves binding retrieval. 
\par 
\textbf{Hilbert Space-Filling Curve:} To keep the spatial locality of the image and restore the structural conformity Hilbert Space-Filling Curve is employed. Images are encoded and decoded before and after binding and unbinding.

\begin{table}[!htbp]
\centering
\caption{Ablation Study on CIFAR-10. }
\label{tab:ablation}
\renewcommand{\arraystretch}{1.2}
\adjustbox{max width=\columnwidth}{%
\begin{tabular}{@{}cc@{}}
\toprule
Method & Accuracy \\ \midrule
U-net + HRR 2D  (\shortNameTxt)& 78.21 \\  \cmidrule(lr){1-2}
ResNet-$50$ + HRR           & 53.52 \\  
ReZero-$50$ + HRR                & 57.80 \\ 
ReZero-$50$ + VTB                & 52.21 \\  
ReZero-$50$ + iVTB               & 54.19 \\  
ReZero-$50$ + HRR + Hilbert      & 53.81 \\ 
ResNet-$50$ + HRR + Hilbert & 50.87 \\  
ReZero-$50$ + HRR 2D             & 49.32 \\
\bottomrule
\end{tabular}%
}
\end{table}

Table~\ref{tab:ablation} shows the accuracy of the different combinations of the network architectures and VSAs. Among them, HRR with ReZero blocks in the main network has the best accuracy of $57.80\%$, significantly below \shortNameTxt. This shows the U-Net design, with 2D HRR for encoding, is critical in combination to obtain our results.

We further show for each of the proposed alternative strategies  visualizations of the binding/unbinding process to provide intuition as to how they work or why their results are less effective.

\begin{figure}[!htbp]
\centering 
\subfigure[Original Image]
{\includegraphics[scale=.224]{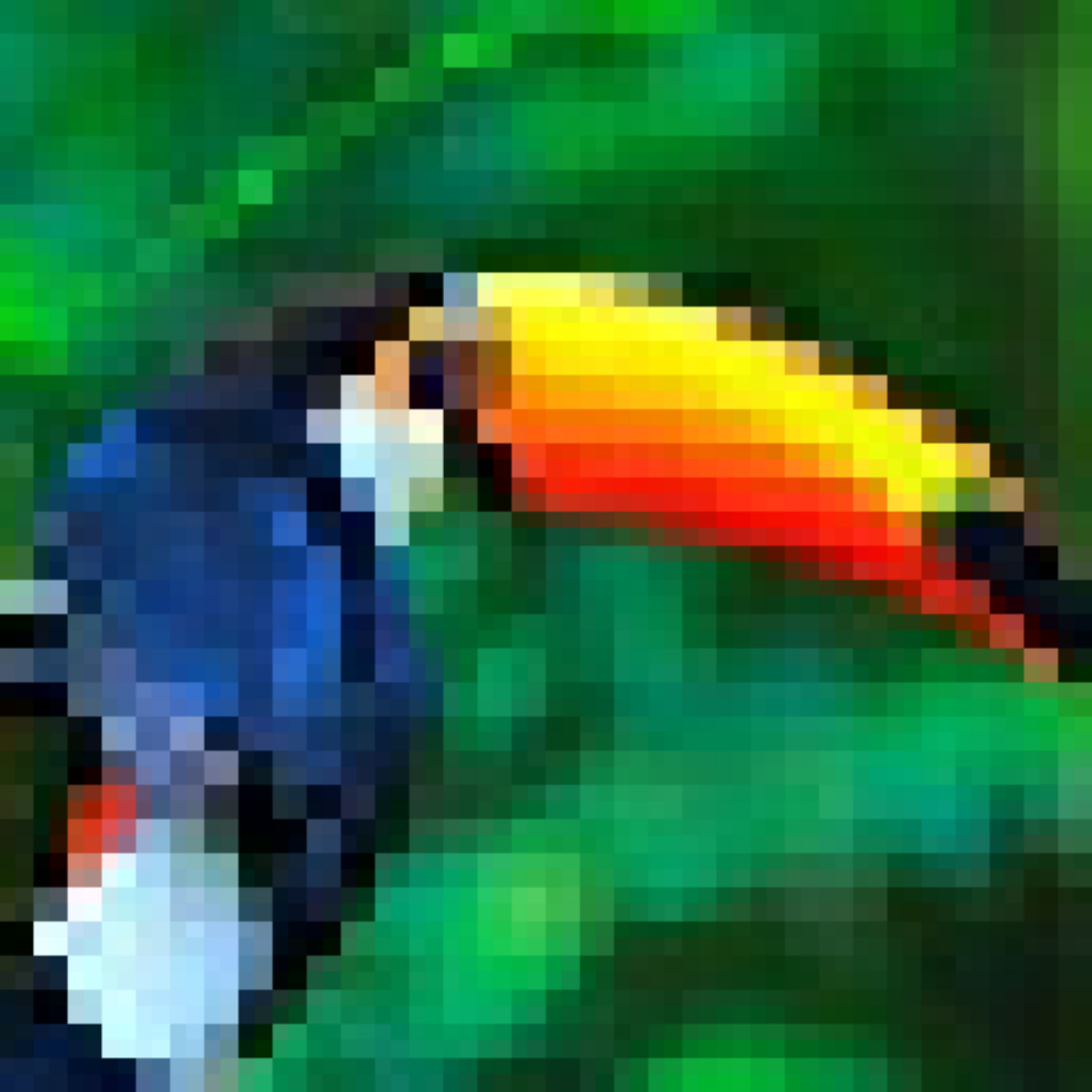}}
\subfigure[Bound Image]
{\includegraphics[scale=.224]{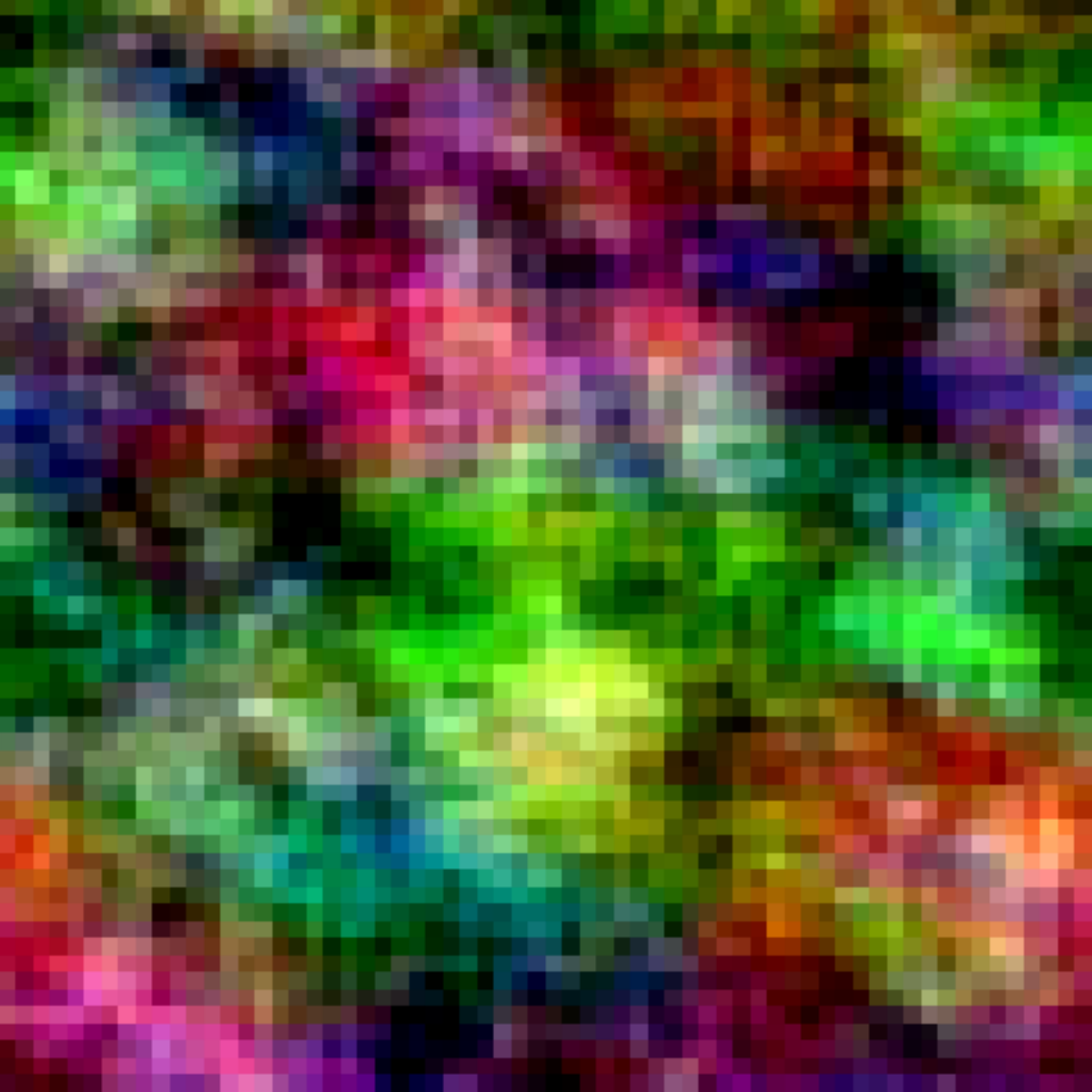}}
\subfigure[Retrieved Image]
{\includegraphics[scale=.224]{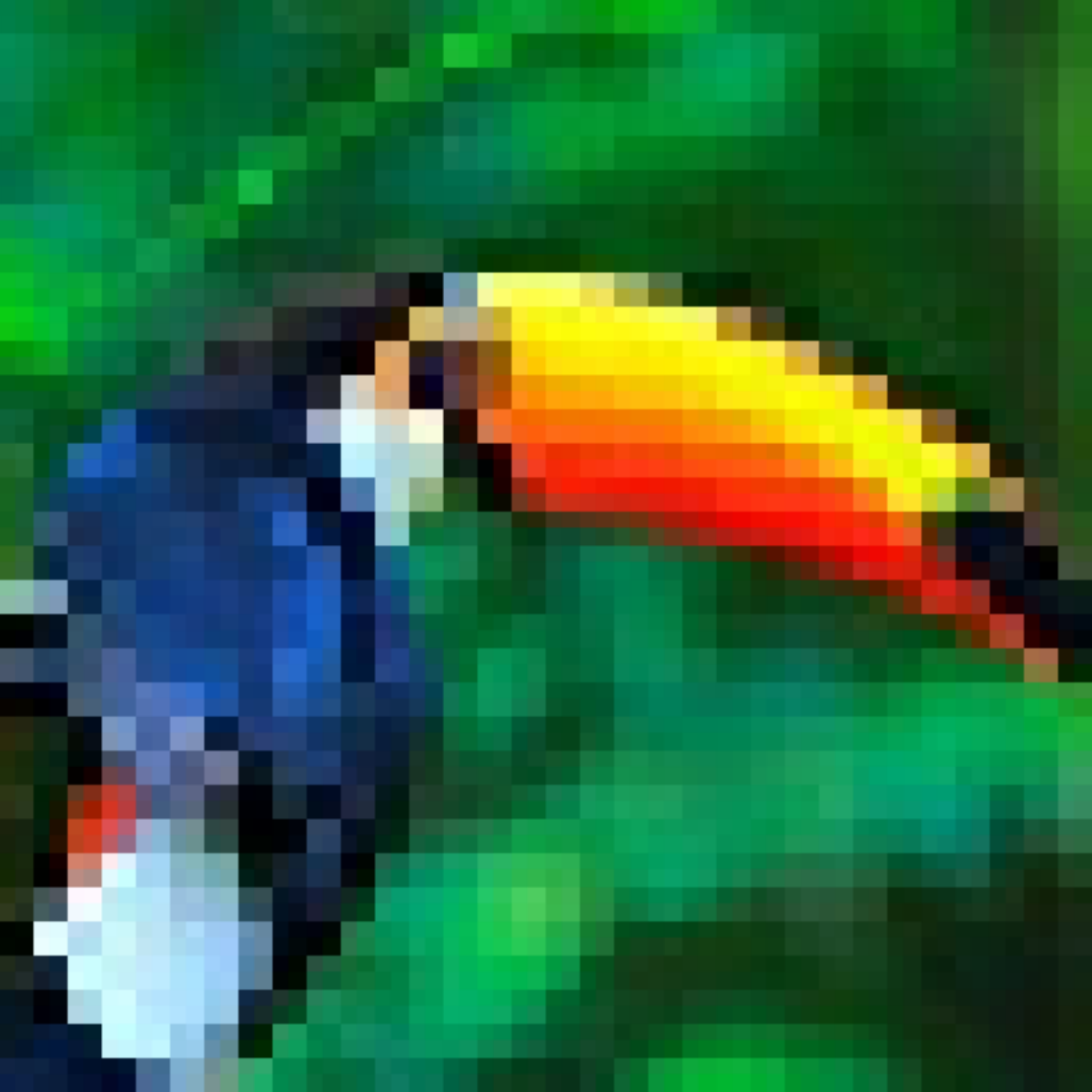}}
\caption{A standard HRR applied with the improved initialization of \citet{hrrxml}, which does not recognize the 2D structure of the data but is implicitly a 1D convolution over the linearization of the pixels. The bound image thus looks random in a different way, but the output is retrievable.}
\label{hrr}
\end{figure}

\begin{figure}[!htbp]
\centering 
\subfigure[Original Image]
{\includegraphics[scale=.224]{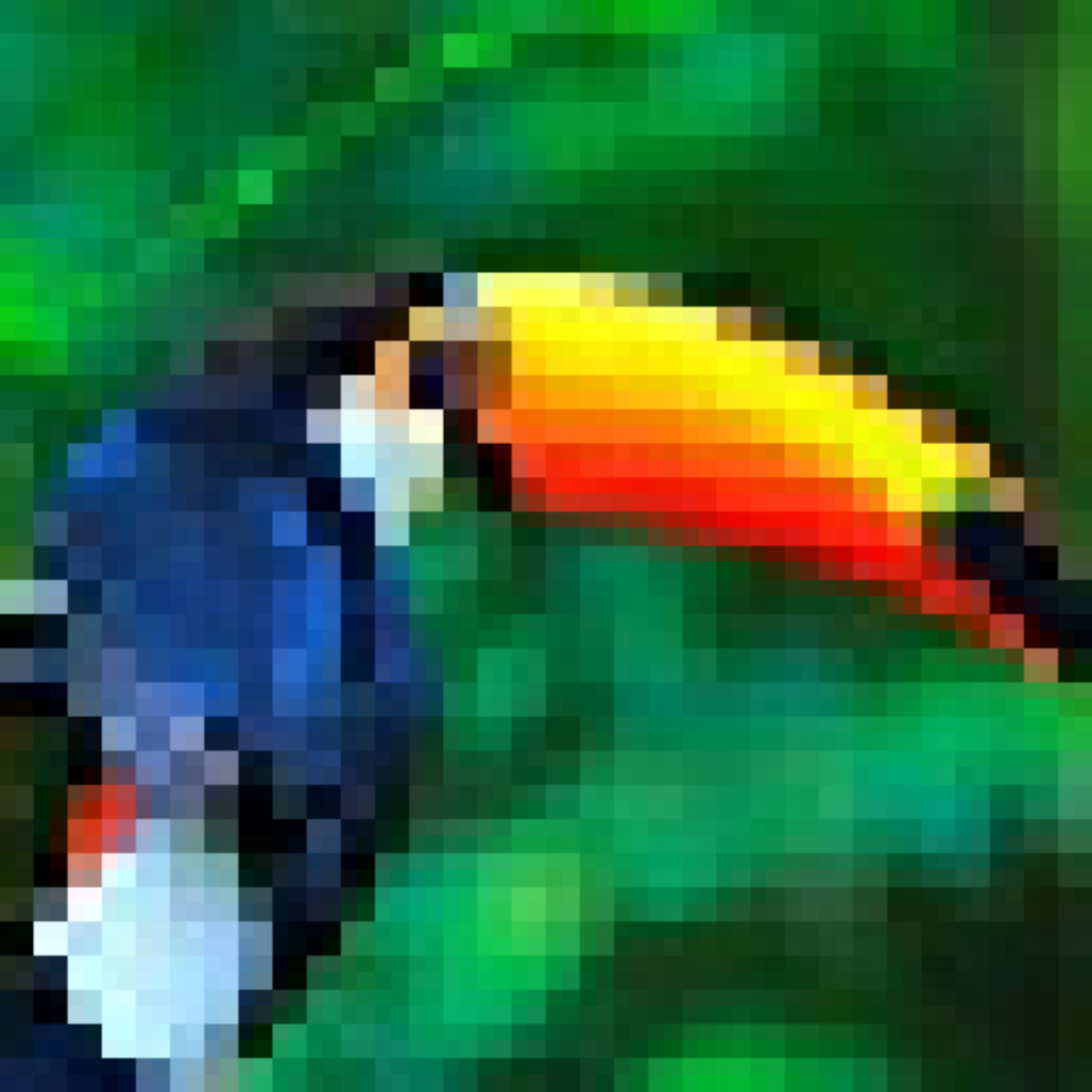}}
\subfigure[Bound Image]
{\includegraphics[scale=.224]{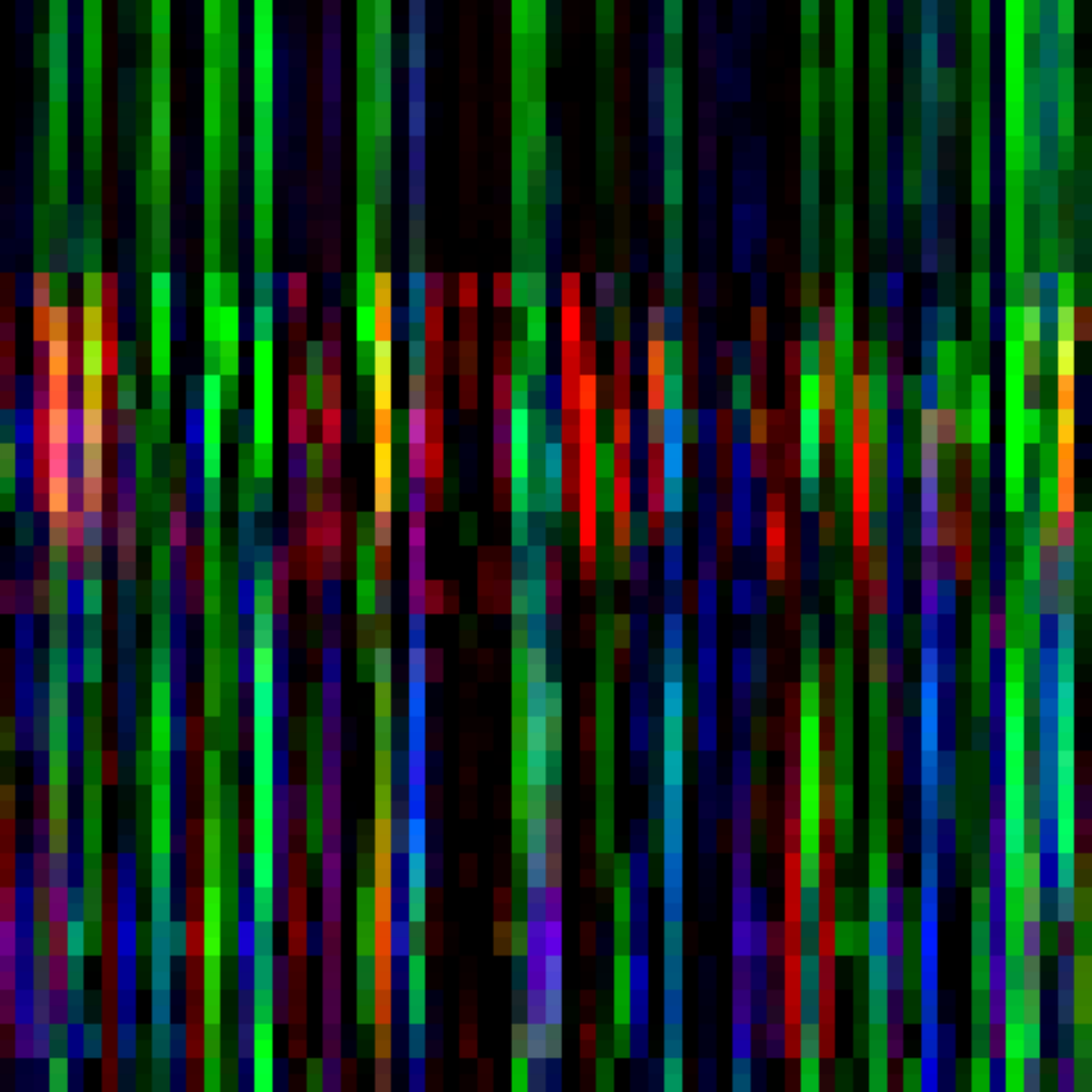}}
\subfigure[Retrieved Image]
{\includegraphics[scale=.224]{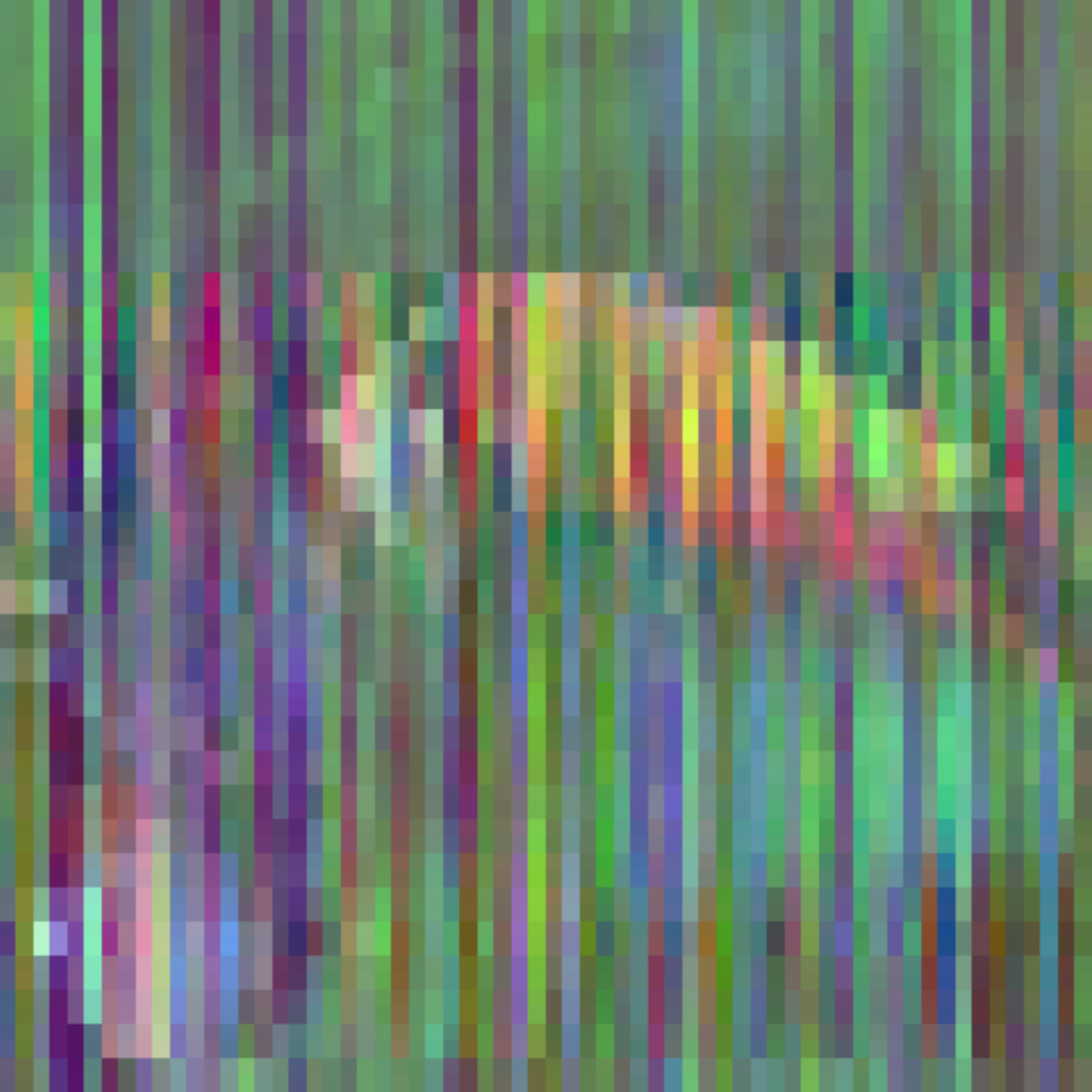}}
\caption{The VTB approach proposed by \citet{b6} is applied to the flattened/linearized version of the pixels. While the input is still obfuscated in a different visual pattern, the retrieved image is very noisy. Such noise is inevitable when multiple items are bound together, but given that we have only one item we would desire a higher quality retrieval.}
\label{vtb}
\end{figure}

\begin{figure}[!htbp]
\centering 
\subfigure[Original Image]
{\includegraphics[scale=.224]{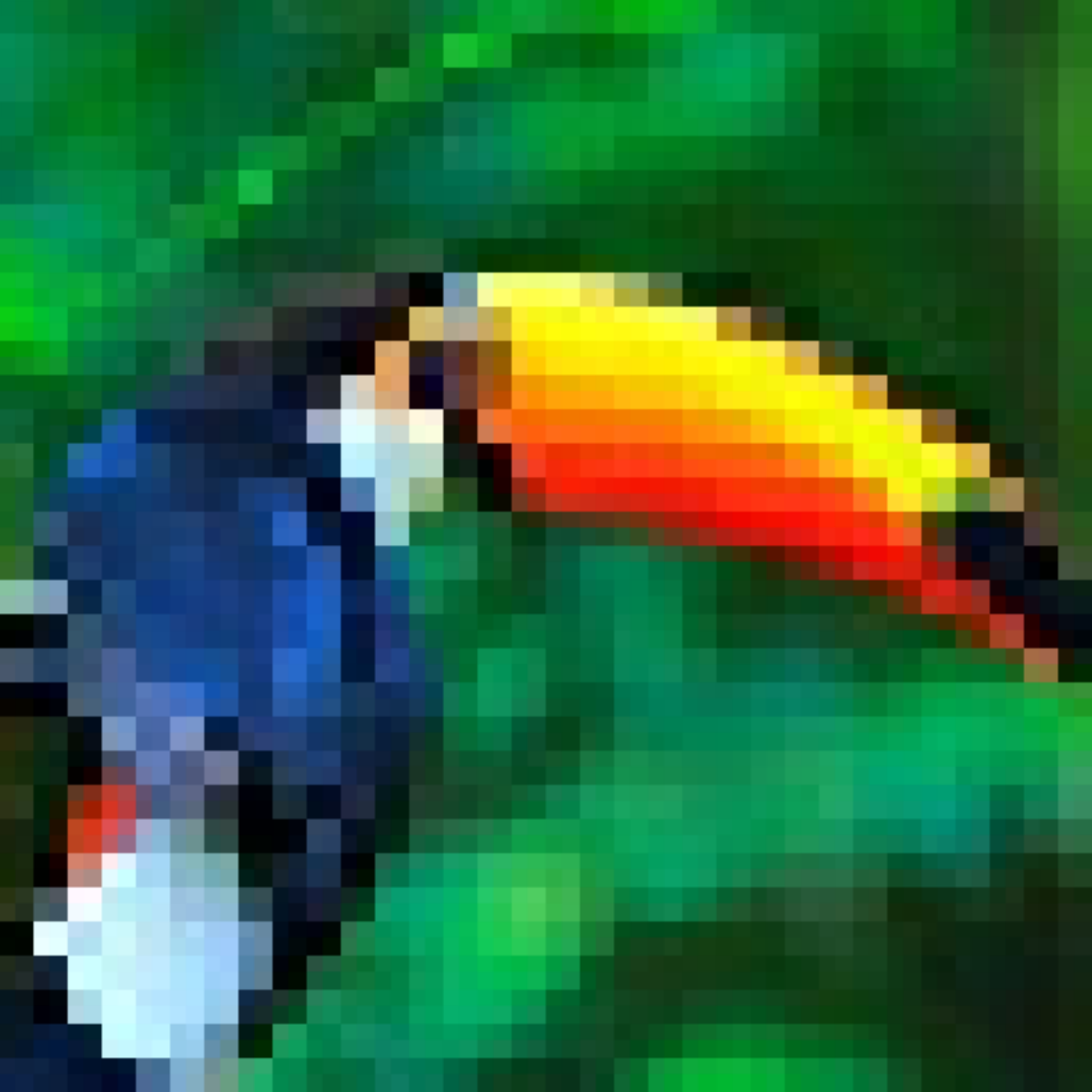}}
\subfigure[Bound Image]
{\includegraphics[scale=.224]{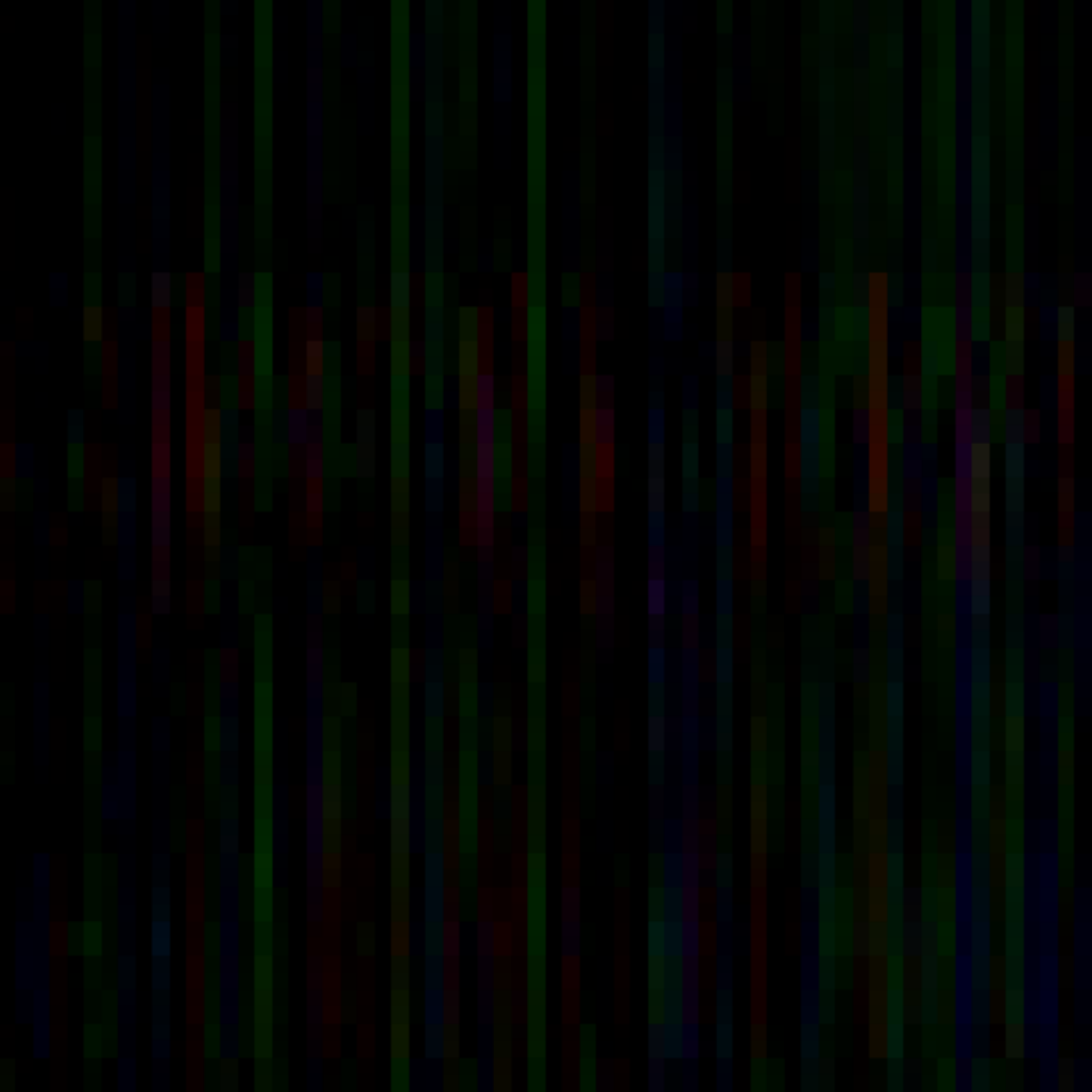}}
\subfigure[Retrieved Image]
{\includegraphics[scale=.224]{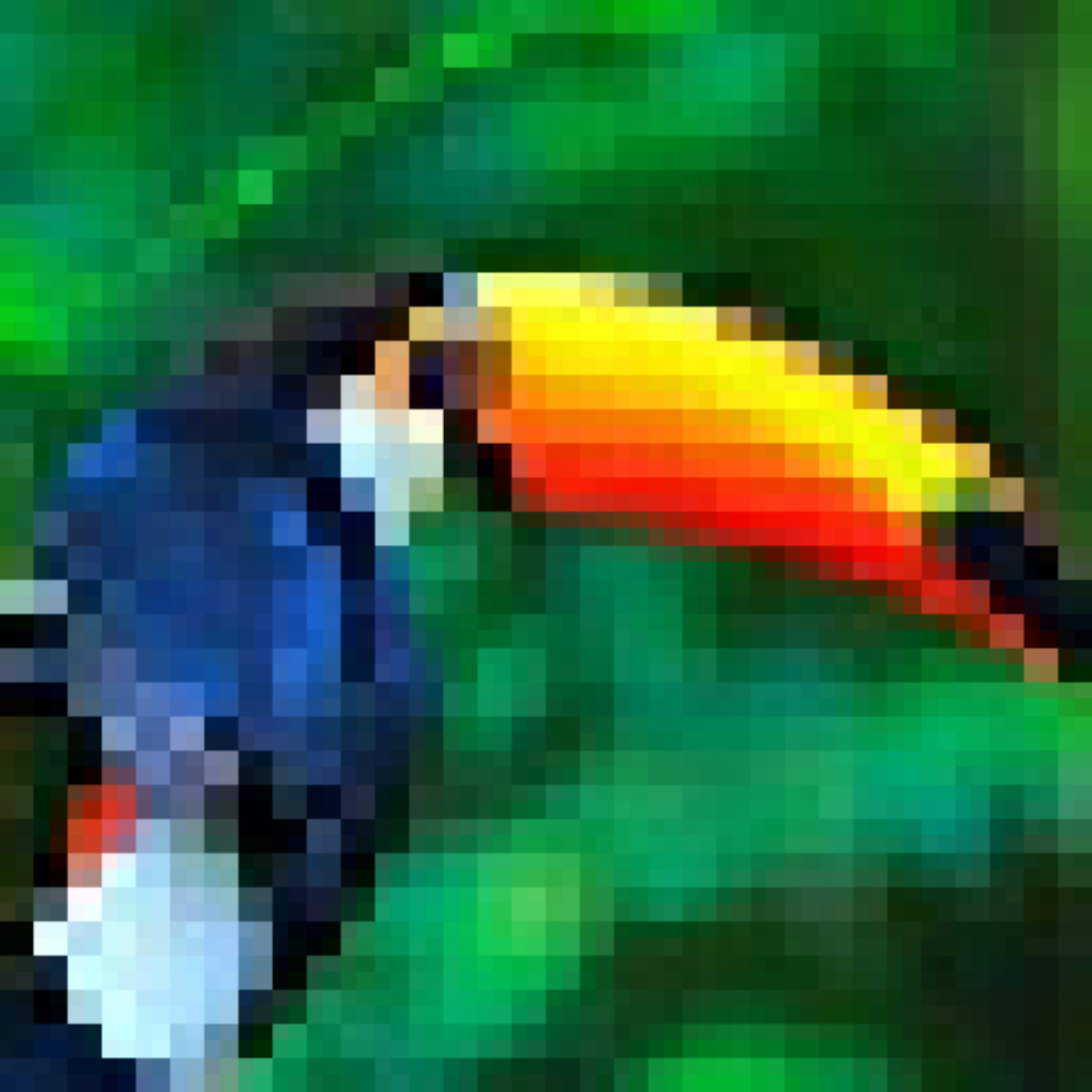}}
\caption{Our improved VTB forces the sub-structure of the vector to be orthogonal, allowing for exact retrieval of the input if there is only one bound item.}
\label{ivtb}
\end{figure}

\begin{figure}[!htbp]
\centering 
\subfigure[Original Image]
{\includegraphics[scale=.224]{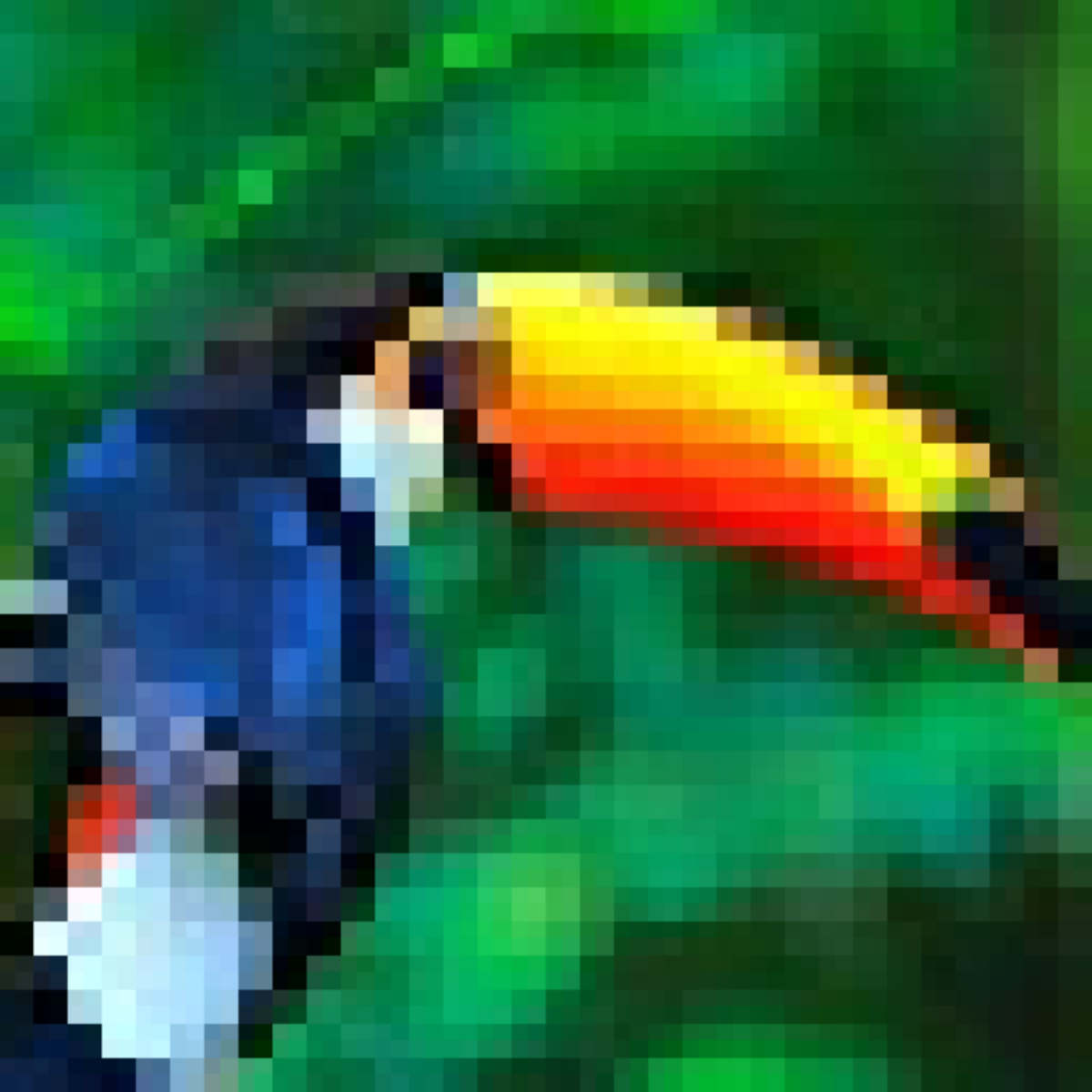}}
\subfigure[Hilbert Encode]
{\includegraphics[scale=.224]{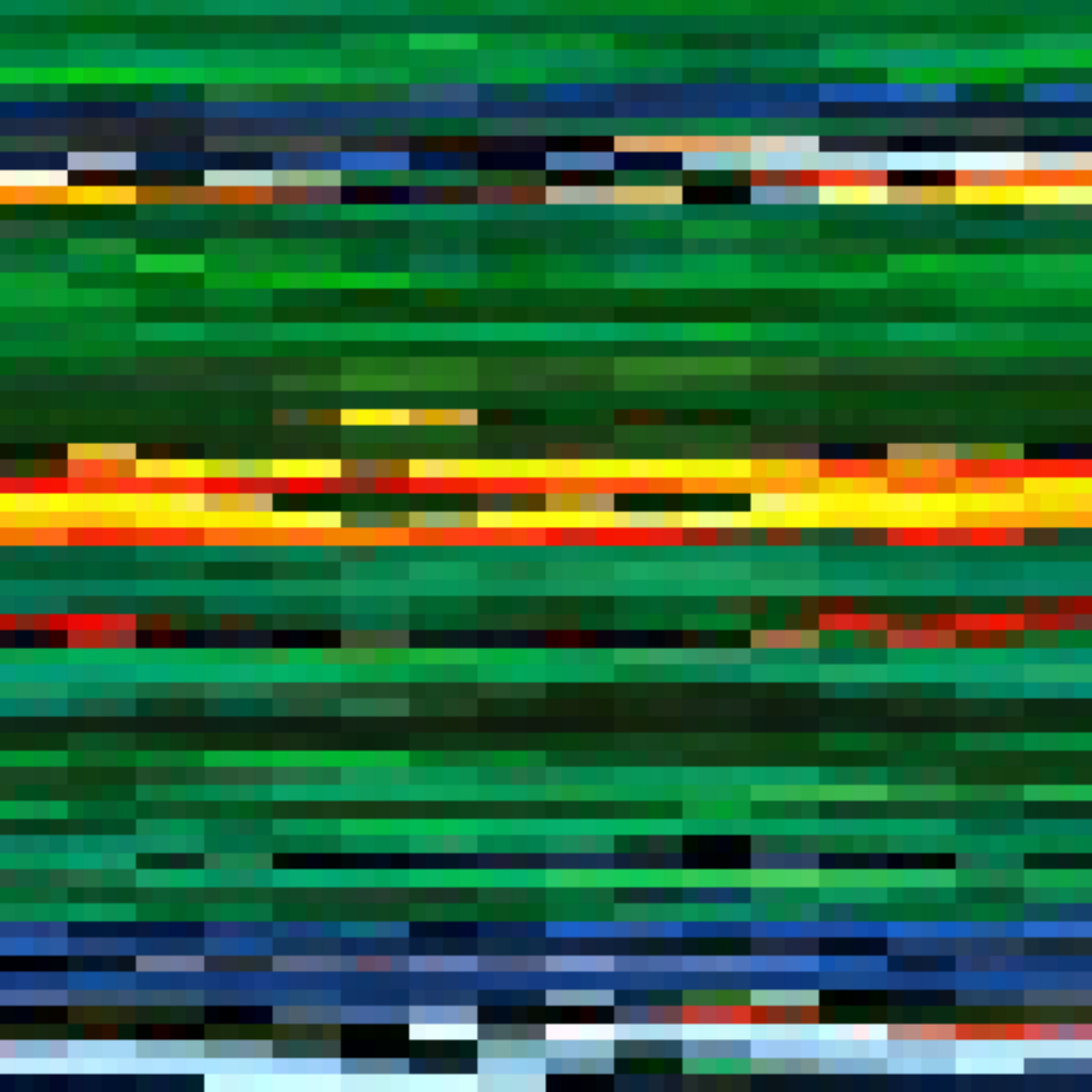}}
\subfigure[Bound Image]
{\includegraphics[scale=.224]{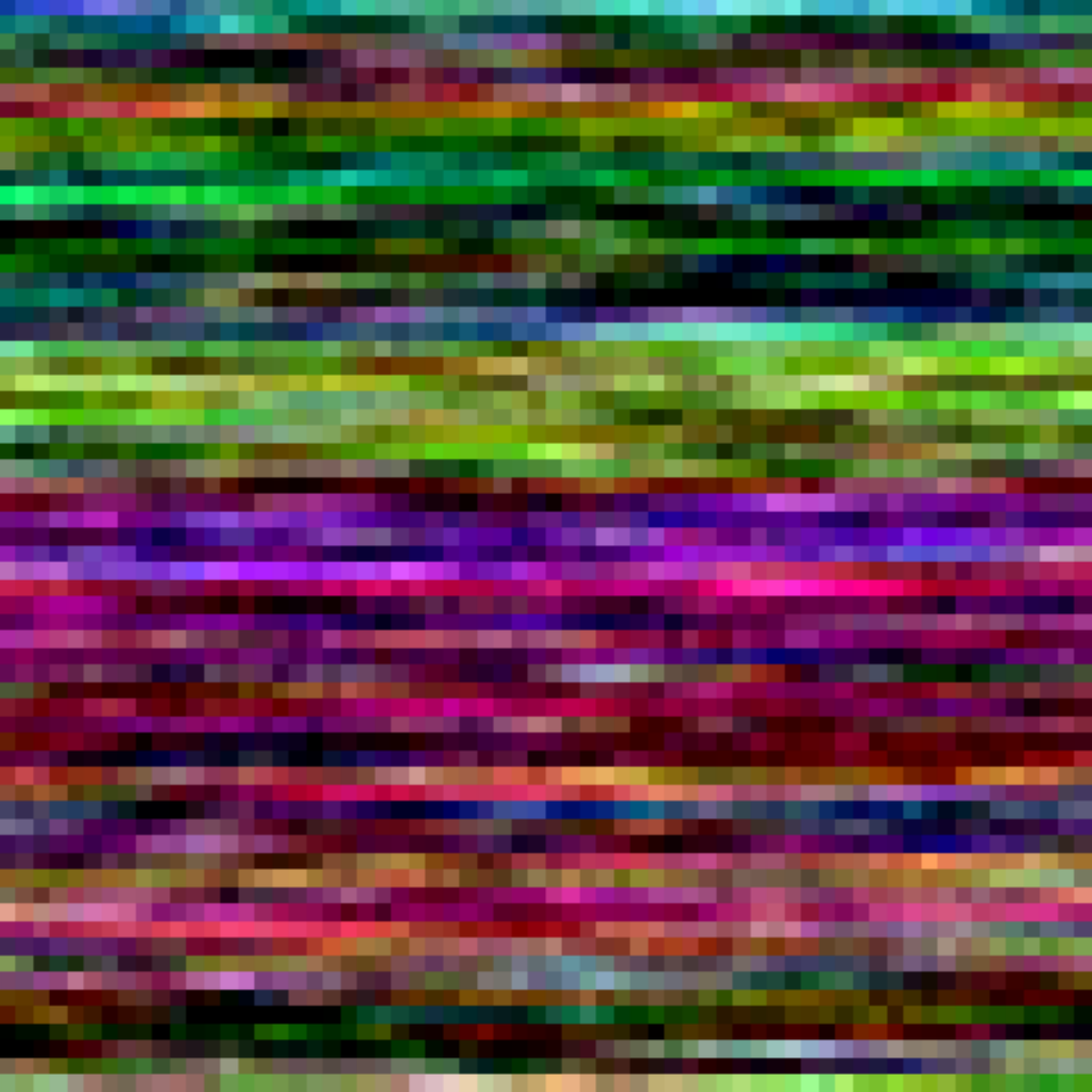}}
\subfigure[Hilbert Decode]
{\includegraphics[scale=.224]{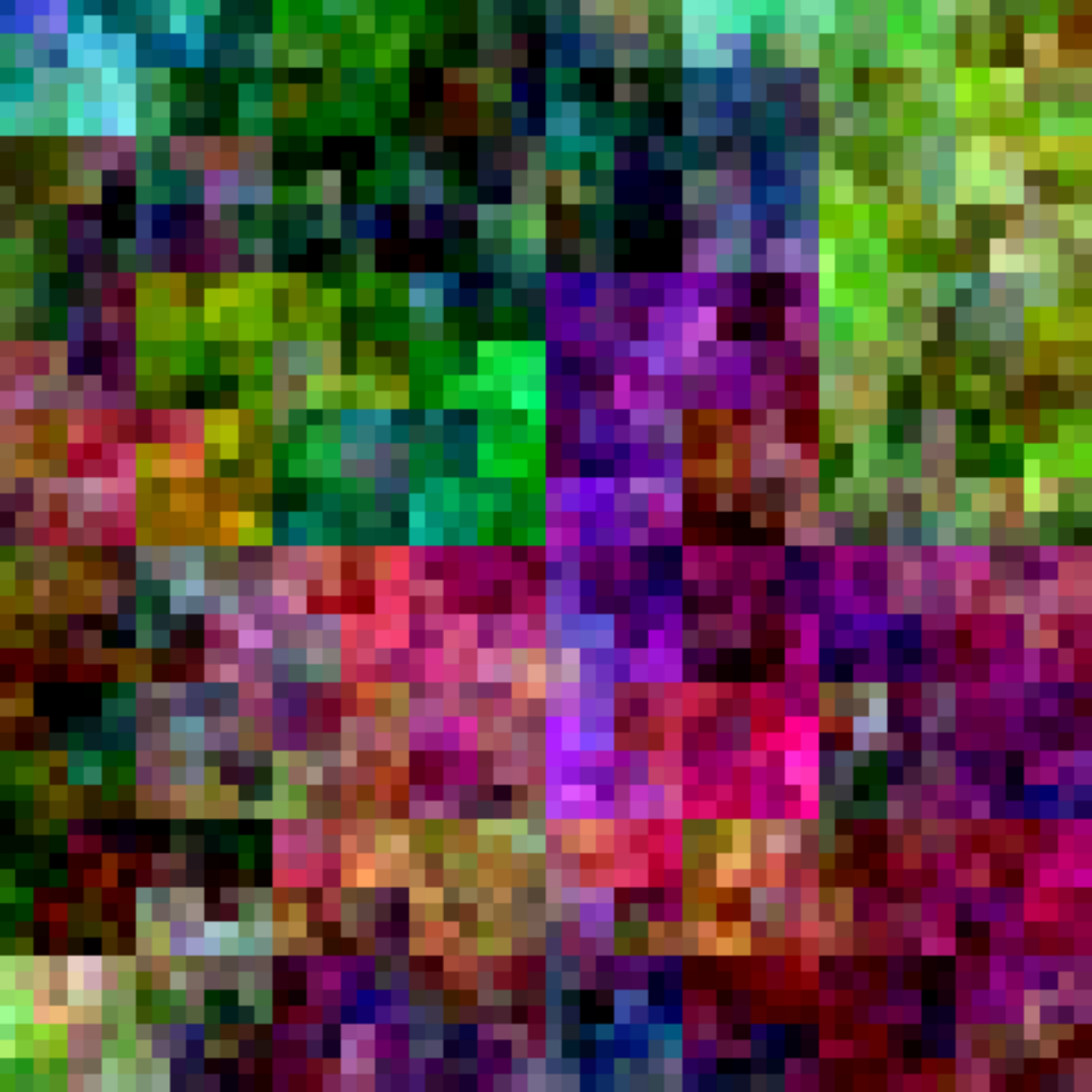}}
\subfigure[Hilbert Encode]
{\includegraphics[scale=.224]{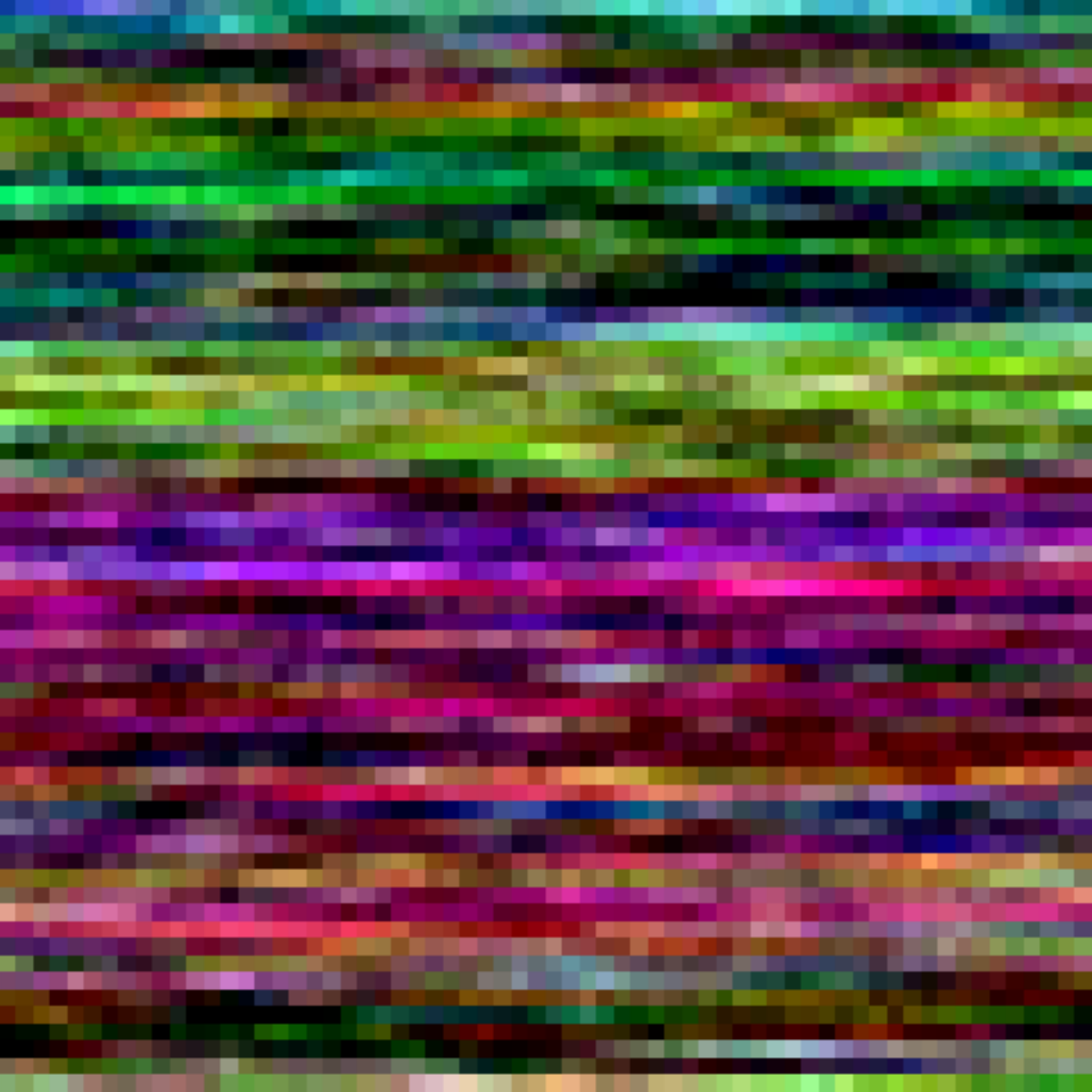}}
\subfigure[Unbound Image]
{\includegraphics[scale=.224]{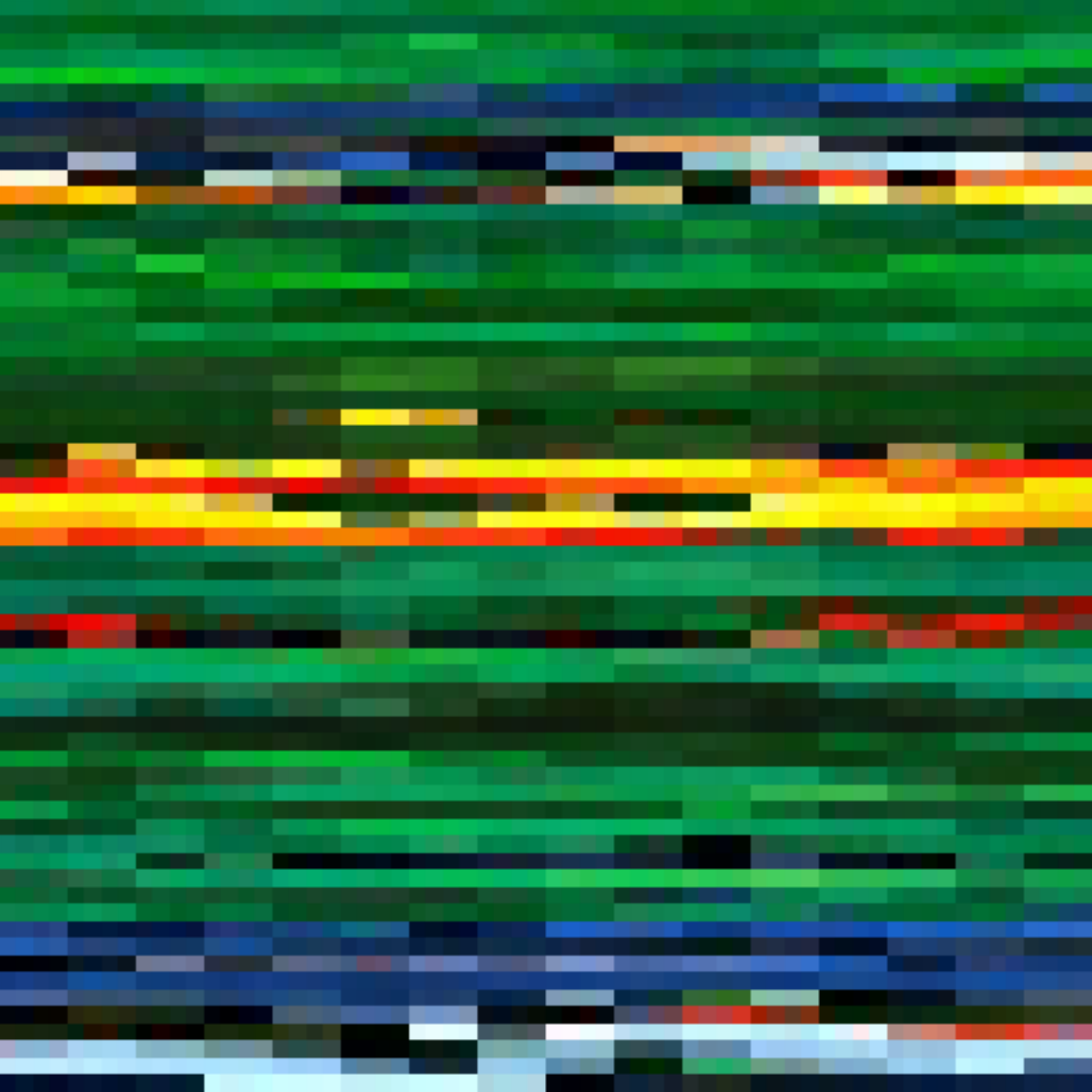}}
\subfigure[Hilbert Decode]
{\includegraphics[scale=.224]{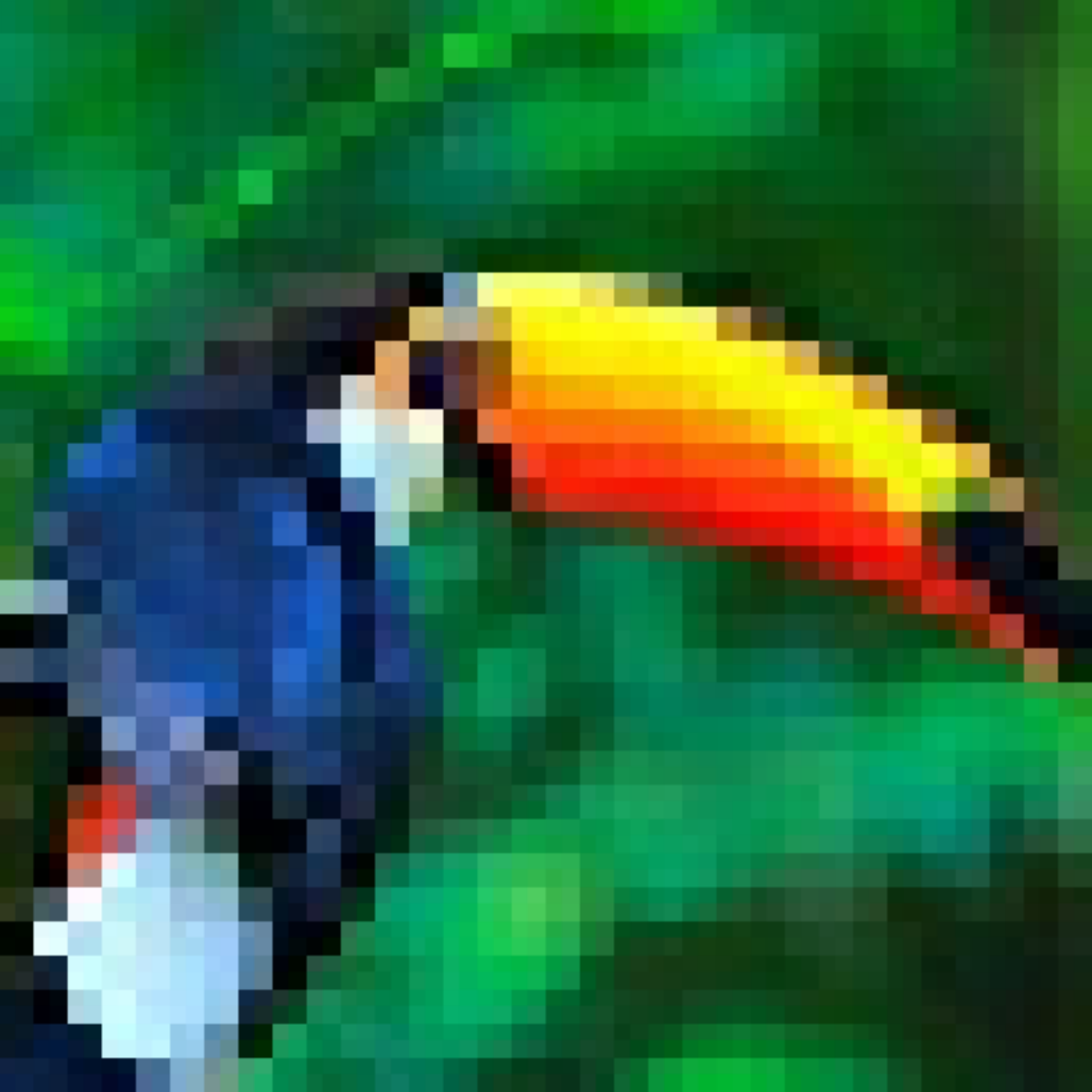}}
\caption{Hilbert Transform combined with the standard HRR. The intuition is that the standard HRR is a 1D convolution, and so if we Hilbert encode the input we linearize the pixels in a manner that is retaining much of the 2D spatial locality in a 1D space. This allows successful obfuscation and extraction but did not perform as well.}
\label{hilbert}
\end{figure}

\end{document}